\renewcommand{\cite}{\citep}
\newcommand\encircle[1]{%
  \tikz[baseline=(X.base)] 
    \node (X) [draw, shape=circle, inner sep=0] {\strut #1};}
\newcommand{\methodName}{KDiffNet }
\newcommand{\methodNameE}{KDiffNet-E }
\newcommand{\methodNameV}{KDiffNet-G }
\newcommand{\methodNameEV}{KDiffNet-EG }
\newcommand{\diffee}{DIFFEE}
\newcommand{\titleName}{Beyond Data Samples: Aligning Differential Networks Estimation with Scientific Knowledge}
\newcommand{\sref}[1]{Section~\ref{#1}} 
\newcommand{\eref}[1]{Eq.~(\ref{#1})} 
\newcommand{\rref}[1]{Theorem~(\ref{#1})} 
\newcommand{\lref}[1]{Lemma~(\ref{#1})} 
\newcommand{\cref}[1]{Condition~(\ref{#1})} 
\newcommand{\coref}[1]{Corollary~(\ref{#1})}
\def\R{{\mathbb R}}        %
\def\P{{\mathbb P}}        %
\def\E{{\mathbb E}}        %
\def\1{{\mathbf 1}}        %
\newcommand{\topic}[4]{
   \pagestyle{myheadings}
   \thispagestyle{plain}
   \newpage
   \setcounter{page}{1}
   \noindent
   \begin{center}
   \framebox{
      \vbox{\vspace{2mm}
    \hbox to 6.28in { {\bf MATH 7360~Probability Theory I
                        \hfill Fall 2013} }
       \vspace{4mm}
       \hbox to 6.28in { {\Large \hfill Lecture #1: #2 \hfill} }
       \vspace{2mm}
			\hbox to 6.28in { {\it #4}\hfill}
       \hbox to 6.28in { {\it Lecturer:} Tai Melcher \hfill {\it Scribe:} #3}
      \vspace{2mm}}
   }
   \end{center}
   \markboth{Topic: #1}{Topic: #1}
   \vspace*{4mm}
}
\newcommand{\gqpar}{{\alpha}}
\newcommand{\numgroup}{{{s_{\GROUP}}}}
\newcommand{\SUPERGVEC}{{\vec{\gqpar}}}
\newcommand{\GROUP}{{\mathcal{G}}}
\preto{\@tabular}{\parskip=2pt}
\setlist[itemize]{leftmargin=*}
\definecolor{colora}{rgb}{.7, .1, .1}
\newcommand{\fix}[1]{}
\newcommand{\cut}[1]{}
\let\OLDthebibliography\thebibliography
\renewcommand\thebibliography[1]{
  \OLDthebibliography{#1}
  \setlength{\parskip}{0pt}
  \setlength{\itemsep}{0pt plus 0.1ex}
}
\begin{document}

\twocolumn[

\aistatstitle{\titleName}

\aistatsauthor{ Arshdeep Sekhon \And Zhe Wang \And  Yanjun Qi }

\aistatsaddress{University of Virginia \And  University of Virginia \And University of Virginia } ] 

\begin{abstract}

Learning the differential statistical dependency network between two contexts is essential for many real-life applications, mostly in the high dimensional low sample regime. In this paper, we propose a novel differential network estimator that allows integrating various sources of knowledge beyond data samples. The proposed estimator is scalable to a large number of variables and achieves a sharp asymptotic convergence rate. Empirical experiments on extensive simulated data and four real-world applications (one on neuroimaging and three from functional genomics) show that our approach achieves improved differential network estimation and provides better supports to downstream tasks like classification. Our results highlight significant benefits of integrating group, spatial and anatomic knowledge during differential genetic network identification and brain connectome change discovery.

\end{abstract}

\section{INTRODUCTION}
\label{sec:intro}
\fix{Comments from rebuttal: reviewers kept asking why differential, we have it as part of intro, but should stress more}

\fix{group based is confusing to all reviewers}

\fix{novelty issues}

\fix{show how theoretical results show it is better, need to stress more on theoretical novelty from aistats meta reviewer}

\fix{organize content better was a major suggestion: add more details or remove those completely}

New technologies have enabled many scientific fields to measure variables at an unprecedented scale. Learning the change of variable dependencies (differential dependencies) between two contexts is an essential task in many scientific applications.  For example, when analyzing genomics signals, interests often are on how human genes interact differently when with and without an external stimulus such as SARS-CoV-2 virus \citep{ideker2012differential}. 
\iftoggle{longv}{%
  As another example, when analyzing functional magnetic resonance imaging (fMRI) samples, detecting the difference in brain connectivity networks across diseased and healthy human populations can shed light on understanding and designing treatments for psychiatric disorders \citep{di2014autism}. 
}{}
Such real world scientific needs present unique challenges and opportunities for structure discovery. %

This paper focuses on estimating structure changes of two Gaussian Graphical Models (GGMs) using samples from two different conditions.  We name this family of methods: differential GGMs, and more general as  differential network estimation. Literature includes multiple differential GGM estimators (details in Appendix Section~\ref{sec:rel}) and these estimators are mostly designed for the high dimensional data regime, with the fast-growing variable size $p$. All previous estimators made the sparsity assumption and used $\ell_1$ norm to enforce the learned differential graph as sparse.

However, this assumption mostly does not apply in the real world because there are many other beliefs real applications prefer. Previous differential network estimators can not integrate the rich set of scientific knowledge real-world tasks naturally can provide. For instance, many real-world networks include hub nodes that are densely-connected to many other nodes. Hub nodes are more prone to perturbations across two conditions (e.g., mutated p53 genes are hub nodes in the differential human gene regulatory network (gene interaction changes between cancer case and control case)~\citep{mohan2014node}).  
Therefore, allowing perturbed hubs in differential net estimation is one desired assumption; however, $\ell_1$ based regularization can't enforce such a prior. In another example, genes belonging to the same biological pathway tend to either interact with all others of the pathway (``co-activated" as a group; differential group-sparse) or not at all  (``co-deactivated," as a group; differential group-dense)  \citep{da2008systematic}. Again, the $\ell_1$ norm could not model this type of group-sparsity pattern. 
Besides, there are many sources of knowledge in real-world scientific domains, like neuroimaging experts know that spatially closed anatomical groups are more likely to connect functionally. Differential network estimators should include this complementary knowledge to help the learned models better reflect domain experts' beliefs \citep{watts1998collective}).

Unfortunately, all previous differential network estimators rely on observed samples alone. Recent advances in data generation by genomics and neuroscience call for developing new dependency identification methods tailored to the integration of multiple sources of information and provide robust results in the high dimensional low sample regime. 
This paper fills the gap by proposing a novel method, namely \methodName, to add additional \underline{K}nowledge in identifying \underline{DIFF}erential \underline{Net}works. By harnessing heterogeneous data across complementary sources, \methodName makes an essential step in enabling knowledge integration for differential dependency estimation beyond data samples. Figure~\ref{fig:kdiffnet} shows an overview of our method. This paper proposes \methodName plus multiple variations. We summarize our contributions as follows. \footnote{Due to space limit, we put details of theoretical proofs,  simulation data's setup, and detailed results when tuning hyper-parameters in the appendix. Section notations with alphabetical symbols (for example, `A:') as a prefix are for content in the appendix. We also wrap our code into an R toolkit and share via the zip appendix.}

\setlist{nolistsep}
\begin{itemize}
    \item \textbf{Beyond data samples:} \methodName is the first differential network estimator that can integrate multiple sources of evidence. We evaluate \methodName on more than 100 synthetic and multiple real-world datasets. \methodName consistently outperforms the state-of-the-art baselines and provides better down-stream prediction accuracy while achieving less or same time cost.  Our experiments showcase how \methodName can integrate knowledge like known edges, anatomical grouping, and spatial evidence when estimating differential graph from heterogeneous multivariate samples (Section~\ref{sec:exp}). We also design a meta-analysis strategy to avoid cases of mis-specified knowledge.

    \item \textbf{Theoretically Sound:} We theoretically prove the convergence error bounds of \methodName as  $O(\sqrt{\frac{\log p}{\min(n_c, n_d)}})$ , achieving the same error bound as the state-of-the-art, improving under some conditions(Section~\ref{sec:theory}). To the best of the authors’ knowledge, no known lower bounds about the convergence rate specifically under the additional knowledge setting were provided by the previous studies.
    \item \textbf{Scalable:} We design \methodName via an elementary estimator based framework and solve it using parallel proximal based optimization. \methodName scales to large $p$ and doesn't need to design knowledge-specific optimization (~\sref{sec:optm}). 
    
\end{itemize}

\section{METHOD: \methodName}
\label{sec:meth}

\subsection{Basics and $\Delta$ As Canonical Parameter of Exponential Family}
\label{sec:back}

Estimating differential GGMs includes two sets of observed samples, denoted as two matrices $\Xb_{c} \in \RR^{n_{c} \times p}$ and  $\Xb_{d} \in \RR^{n_{d} \times p}$. $\Xb_{c}$ and $\Xb_{d}$ assume i.i.d drawn from two normal distributions $N_p(\mu_c, \Sigma_c)$ and $N_p(\mu_d, \Sigma_d)$ respectively. Here $\mu_c, \mu_d \in \RR^{p}$ describe the mean vectors and  $\Sigma_c,\Sigma_d \in \RR^{p \times p}$ represent covariance matrices. The goal of differential GGMs is to estimate the structural change $\Delta$ defined by \cite{zhao2014direct} \footnote{For instance, on data samples from a controlled drug study, `c' may represent the `control' group and `d' may represent the `drug-treating' group. Using which of the two sample sets as `c' set (or `d' set) does not affect the computational cost and does not influence the statistical convergence rates.}. 
\begin{equation}
\label{def:diffNet}
\vspace{-2.8mm}
\Delta = \Omega_{d} - \Omega_{c}
\end{equation}
Here  $\Omega_c := (\Sigma_c)^{-1}$ and $\Omega_d := (\Sigma_d)^{-1}$ are two precision matrices. The sparsity pattern of the precision matrix of a GGM encodes the conditional dependency  structure of the GGM. This means, $\Delta$ describes how the magnitude of conditional dependency differs between two conditions. 
A sparse $\Delta$ means few of its entries are non-zero, indicating a differential network with few edges.

A naive approach to estimate $\Delta$ will learn $\hat{\Omega}_{d}$ and $\hat{\Omega}_{c}$ from $\Xb_{d}$ and $\Xb_{c}$ independently and calculate $\hat{\Delta}$ using~\eref{def:diffNet}. 
However, in a high-dimensional setting, the strategy needs to assume both $\Omega_{d}$ and $\Omega_{c}$ are sparse (to achieve consistent estimation of each) and has been found to produce many spurious differences \cite{de2010differential}. The assumption of this two-step procedure is often not true. For instance, genetic networks contain hub nodes, therefore not entirely sparse \cite{ideker2012differential}.
Recent literature in neuroscience has suggested that each subject's functional brain connection network may not be sparse, though differences across subjects may be sparse \cite{belilovsky2016testing}.

Interestingly, the density ratio between two Gaussian distributions falls naturally in the exponential family  (see detail proofs in  ~\sref{seca:backward}). $\Delta$ is one entry of the canonical parameter of this exponential family distribution.  According to \cite{wainwright2008graphical}, learning an exponential family distribution from data means to estimate its canonical parameter. Computing the canonical parameter of an exponential family through vanilla MLE can be expressed as a backward mapping from  given moments of the distribution \cite{wainwright2008graphical}. In the case of differential GGM, the backward mapping (i.e., the vanilla MLE solution for $\Delta$) is a simple closed form:  $\mathcal{B}(\hat{\phi}) = \mathcal{B}(\hat{\Sigma}_d, \hat{\Sigma}_c) =
\big(\hat{\Sigma}_d^{-1} - \hat{\Sigma}_c^{-1})$, easily inferred from the two sample covariance matrices. $\hat{\Sigma}$ denotes to the sample covariance matrix. However, when in high-dimensional regimes, $\mathcal{B}(\hat{\Sigma}_d, \hat{\Sigma}_c)$ is not well-defined because $\hat{\Sigma}_c$ and $\hat{\Sigma}_d$ are rank-deficient (thus not invertible). Here $\mathcal{B}$ refers to Backward Mapping. In next section, we design and use $\mathcal{B}^*$ that denotes {\it proxy} backward mapping (details later).

\subsection{$\mathcal{R}(\cdot)$ Norm based Elementary Estimators  (EE)}
\label{sec:backEE}

Multiple recent studies ~\cite{yang2014elementary,yang2014elementary1,yang2014elementary2,wang2017fastchange} followed a framework  ``Elementary estimators'':  
\begin{equation}
\label{eq:ee-back}
  \begin{split}
   & \argmin\limits_{\theta} \mathcal{R}(\theta), \\     
   & \text{Subject to: } \mathcal{R}^*(\theta -\hat{\theta}_n) \le \lambda_n 
    \end{split}
\end{equation}
Where $\mathcal{R}(\cdot)$ represents a decomposable regularization function. $\mathcal{R}^*(\cdot)$ is the dual norm of $\mathcal{R}(\cdot)$,  
\begin{equation}
\mathcal{R}^*(v) := \sup\limits_{u \ne 0}\frac{<u,v>}{\mathcal{R}(u)} = \sup\limits_{\mathcal{R}(u) \le 1}<u,v>.
\end{equation}

The design philosophy shared among elementary estimators is to construct $\hat{\theta}_n$ carefully from well-defined estimators that are easy to compute and come with strong statistical convergence guarantees.  For example, \citet{yang2014elementary1} conduct the high-dimensional estimation of $\ell_1$-regularized linear regression  by using the classical ridge estimator as $\hat{\theta}_n$  in \eref{eq:ee-back}. When $\hat{\theta}_n$ itself is closed-form and $\mathcal{R}(\cdot)$ is the $\ell_1$-norm, the solution of \eref{eq:ee-back} is naturally closed-form (as the dual norm of $\ell_1$ is $\ell_\infty$), therefore, easy and fast to compute, and scales to large $p$.

Following the above design philosophy, for our differential estimation task, $\Delta$ is the target canonical parameter $\theta$. We use a closed and well-defined form of $\hat{\theta}_n$ (suggested by \cite{wang2017fastchange}): 
\begin{equation}
   \vspace{-4mm}
\hat{\theta}_n=\mathcal{B}^*(\hat{\Sigma}_d, \hat{\Sigma}_c)=\left([T_v(\hat{\Sigma}_{d})]^{-1} - [T_v(\hat{\Sigma}_{c})]^{-1}\right)
\label{eq:backwsigma}
\end{equation}

\iftoggle{longv}{
EE based UGM estimator from \cite{yang2014elementary}  proposed the following generic formulation to estimate canonical parameter for an exponential family distribution via EE framework:
\begin{equation}
\begin{split}
 \argmin\limits_{\theta}  ||\theta||_1 ,
\hspace{3mm} \text{Subject to: }  || \theta - \mathcal{B}^*(\hat{\phi}) ||_{\infty} \le \lambda_n
\end{split}
\label{eq:eegm}
\end{equation}
For an exponential family distribution, $\theta$ is its canonical parameter to learn.  
}{}

$\mathcal{B}^*(\hat{\phi})$ denotes a so-called proxy of backward mapping for the target exponential family. Here $[T_v(A)]_{ij}:= \rho_v(A_{ij})$  
where $\rho_v(\cdot)$ was chosen as a soft-thresholding function. 
Importantly, the formulation in \eref{eq:ee-back} guarantees its solution to achieve a sharp convergence rate as long as  $\hat{\theta}_n$ is carefully chosen, well-defined, easy to compute and comes with a strong statistical convergence guarantee ~\cite{negahban2009unified}. In summary, \eref{eq:ee-back} provides an intriguing formulation to build simpler and possibly fast estimators accompanied by statistical guarantees. We, therefore, use it to design our method. To use \eref{eq:ee-back} for estimating our target parameter $\Delta$, we need to design $\mathcal{R}(\Delta)$.

\begin{figure}
    \centering
    \includegraphics[width=\linewidth]{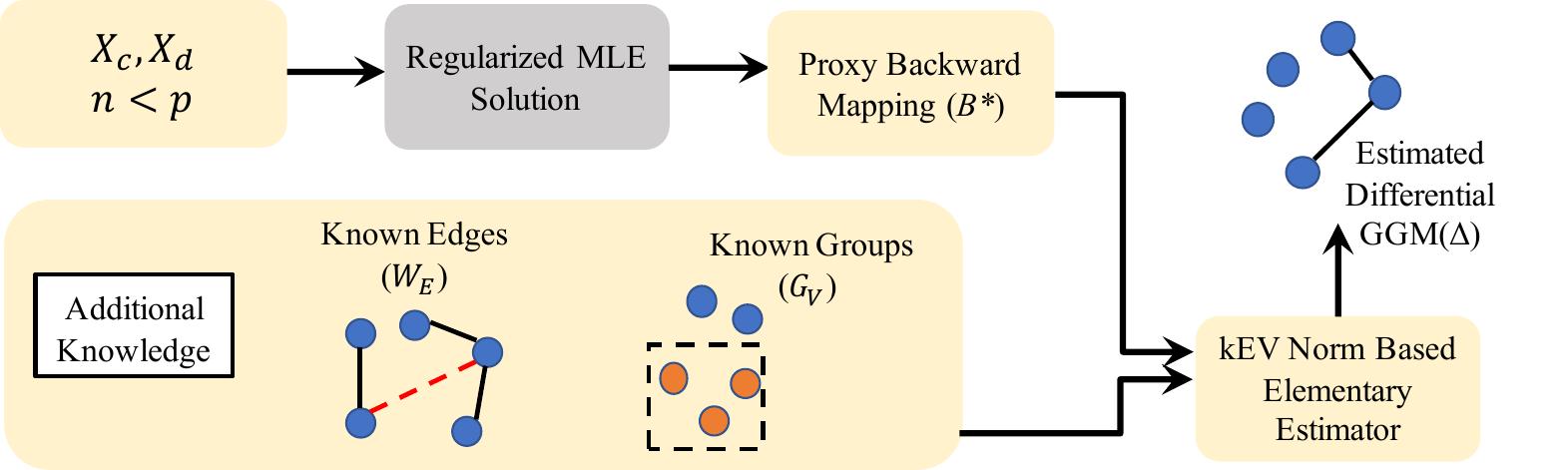}
    \caption{An overview of \methodName. \methodName integrates different types of extra knowledge for estimating differential GGMs using Elementary Estimators. As an example, the edge level knowledge can represent known edges (or non-edges) and group level knowledge represents information about multiple variables that function as groups. \label{fig:kdiffnet}}
\end{figure}

\subsection{Integrating Complementary Sources of Knowledge via a new kEV Norm: $\mathcal{R}(\Delta)$ } 
\label{sec:kevn}

All previous estimators made the sparsity assumption and used $\ell_1$ norm to enforce the learned differential graph as sparse. However, there exist many other assumptions real-life tasks may prefer. Our main goal is to enable differential network estimators to  integrate extra evidence beyond data samples. We group extra knowledge sources into two kinds: (1) edge-based, and (2) node-based.

\textbf{(1) Knowledge as Weight Matrix:} We propose to describe edge-level knowledge sources via positive weight matrices  $W_E \in \R^{p \times p}$. We use $W_E$ via a weighted $\ell_1$ formulation $||W_E \circ \Delta||_1$. This enforces the prior that the larger a weight entry in $W_E$ is, the less likely the corresponding edge belongs to the true differential graph. None of the previous differential GGMs  have explored this strategy.

The matrix $W_E$ can represent a good variety of prior knowledge. (1) For available hub nodes, we can design $W_E$ to assign all entries connecting to hubs with a smaller weight because genes tend to interact with hubs more, and hubs tend to get perturbed across conditions.
(2) As another example, $W_E$ can describe spatial distance among brain regions (publicly available in sites like openfMRI~\citep{poldrack2013toward}). This can nicely encode the domain prior that neighboring brain regions may be more likely to connect functionally.  When considering two conditions like case vs. control, these spatially close nodes tend to be the vital differential edges. (3) Another important example is when identifying gene-gene interactions from expression profiles. Many state-of-the-art bio-databases like HPRD \citep{prasad2009human} have collected information about direct ``house-keeping" physical interactions between proteins. This type of interaction tends to happen across many conditions. So we can use $W_E$ to describe that known information, proposing corresponding sparse entries in the differential net.

In summary, the $W_E$ matrix-based representation provides a powerful and flexible strategy that allows integration of many possible forms of knowledge to improve differential network estimation, as long as they can be formulated via edge-level weights.

\textbf{(2) Knowledge as Node Groups:} Many real-world applications include knowledge about how variables group into sets. For example, biologists have collected a rich set of group evidence about how genes belong to various biological pathways or exist in the same or different cellular locations \citep{da2008systematic}. Gene grouping information provides solid biological bias that genes belonging to the same pathway tend to be co-activated or co-deactivated. 

However, this type of group evidence cannot be described via the aforementioned $W_E$-based formulation. This is because it is safe to assume nodes in the same group share similar interaction patterns. However,  we do not know beforehand if a specific group functions the same across two conditions ("group sparsity" -- a block of sparse entries in the differential net) or differently between conditions ("dense sub-network" in the differential net).

To mitigate the issue, we propose to represent the group knowledge as a set of groups on feature variables (vertices) $\mathcal{G}_p$.  Mathematically, $\forall g_k \in \mathcal{G}_p$, $g_k = \{i\}$ where $i$ indicates that the $i$-th node belongs to the group $k$.  We propose integrating $\mathcal{G}_p$ knowledge into $\Delta$ by enforcing a group sparsity regularization on $\Delta$.

More specifically, we generate an ``edge-group" index $\mathcal{G}_V$ from the node group index $\mathcal{G}_p$.  This is done via defining $\mathcal{G}_V := \{ g'_k |  (i,j) \in g'_k, \forall i, \forall j \in g_k  \}$. For vertex nodes in each node group $g_k$, all possible pairs between these nodes belong to an edge-group $g'_k$. We propose to use the group,2 norm $||\Delta||_{\mathcal{G}_V,2}$ to enforce group-wise sparse structure on $\Delta$. None of the previous differential GGM estimators have explored this knowledge-integration strategy.

\textbf{kEV norm:} Now we design $\mathcal{R}(\Delta)$ as a hybrid norm that combines the two strategies above. First, we assume that the true parameter $ \Delta^*=\Delta^*_e + \Delta^*_g $: a superposition of two ``clean'' structures, $\Delta^*_e$ and $\Delta^*_g$. Then we define  $\mathcal{R}(\Delta)$ as the ``\underline{k}nowledge for \underline{E}dges and \underline{V}ertex norm (kEV-norm)":  
   \begin{equation} 
    \label{eq:kev-norm}
    \mathcal{R}(\Delta) = ||W_E \circ \Delta_e||_{1} + \epsilon||\Delta_g||_{\mathcal{G}_V,2}
    \end{equation}
Here the Hadamard product $\circ$ denotes element-wise product between two matrices (i.e. $[A\circ B]_{ij} = A_{ij}B_{ij}$).  $||\cdot||_{\mathcal{G}_V,2} = \sum\limits_k ||\Delta_{g'_k}||_2$ and $k$ denotes the $k$-th group. The positive matrix $W_E \in R^{p \times p}$ describes one aforementioned edge-level additional knowledge.  $\epsilon \geqslant 0$ is a hyperparameter.  $\mathcal{R}(\Delta)$ is the superposition of  edge-weighted $\ell_1$ norm  and the group structured norm. Our target parameter $ \Delta=\Delta_e + \Delta_g $.

\subsection{\underline{k}EV Norm based Elementary Estimator for identifying \underline{Diff}erential \underline{Net}:\methodName} 
\label{sec:ee-kdiff}

kEV-norm has three desired properties (see proofs in \sref{sec:proofnorm}): (i) kEV-norm is a norm function if $\epsilon$ and entries of $W_E$ are positive. (ii) If the condition in (i) holds, kEV-norm is a decomposable norm. (iii) The dual norm of kEV-norm is $\mathcal{R}^*(u)$. 
  \begin{equation} 
\label{eq:kev-dual}
\mathcal{R}^*(u) = \max(||(1 \varoslash W_E)\circ u||_{\infty}, \dfrac{1}{\epsilon}||u||^*_{\mathcal{G}_V,2})
   \end{equation}
Here, $(1 \varoslash W_E)$ indicates the element wise division. 

Now we define the proxy backward mapping using a closed-form formulation proposed by {\diffee}:  $\hat{\theta}_n = [T_v(\hat{\Sigma}_d)]^{-1} - [T_v(\hat{\Sigma}_c)]^{-1}$.  \sref{sec:proofbm} proves that the chosen $\hat{\theta}_n$ is theoretically well-behaved in high-dimensional settings. 

Now by plugging $\mathcal{R}(\Delta)$, its dual $\mathcal{R}^*(\cdot)$ and $\hat{\theta}_n$ into \eref{eq:ee-back}, we get the formulation of \methodName:
\begin{equation}
\begin{split}
    \label{eq:methodName_main}
    &\hspace{8mm}  \argmin\limits_{\Delta}||W_E\circ\Delta_e||_1 + \epsilon||\Delta_g||_{\mathcal{G}_V,2}  \\
    &\text{Subject to:  } \\
    &||(1 \varoslash W_E)\circ\left(\Delta - \left([T_v(\hat{\Sigma}_{d})]^{-1} -  [T_v(\hat{\Sigma}_{c})]^{-1}\right)\right)||_{\infty} \le \lambda_n\\
    &||\Delta - \left([T_v(\hat{\Sigma}_{d})]^{-1} - [T_v(\hat{\Sigma}_{c})]^{-1}\right)||^*_{\mathcal{G}_V,2}\le \epsilon\lambda_n \\
    & \Delta = \Delta_e + \Delta_g 
\end{split}
\end{equation}

\vspace{-7mm}

\subsection{Solving \methodName}
\label{sec:optm}
We then design a proximal based optimization to solve \eref{eq:methodName_main}, inspired by its distributed and parallel nature \citep{combettes2011proximal}. 
To simplify notations, we use  $\Delta_{tot}:=[\Delta_e; \Delta_g]$, where $;$ denotes the row wise concatenation. We also add three operator notations  including $L_e(\Delta_{tot})=\Delta_e$, $L_g(\Delta_{tot})=\Delta_g$ and $L_{tot}(\Delta_{tot})=\Delta_e+\Delta_g$. 
Now we re-formulate \methodName as: 
\vspace{-1mm}
\begin{equation}
\begin{split}
   \label{eq:distributed_main}
     &\argmin\limits_{\Delta_{tot}}||W_E\circ(L_e(\Delta_{tot}))||_1 + \epsilon||L_g(\Delta_{tot})||_{\mathcal{G}_V,2}  \\
    &\text{subject to:  } \\
    &||(1 \varoslash W_E)\circ(L_{tot}(\Delta_{tot}) - ([T_v(\hat{\Sigma}_{d})]^{-1} - [T_v(\hat{\Sigma}_{c})]^{-1}))||_{\infty} \le \lambda_n\\
    &||L_{tot}(\Delta_{tot}) - ([T_v(\hat{\Sigma}_{d})]^{-1} - [T_v(\hat{\Sigma}_{c})]^{-1})||^*_{\mathcal{G}_V,2}\le \epsilon\lambda_n \\
\end{split}
\end{equation}
~\eref{eq:\methodName} used proxy backward mapping $\mathcal{B}^*(\hat{\Sigma}_{d},\hat{\Sigma}_{c}) := [T_v(\hat{\Sigma}_{d})]^{-1}- [T_v(\hat{\Sigma}_{c})]^{-1}$.

Algorithm~\ref{alg:pp} in  \sref{sec:moremeth} summarizes the Parallel Proximal algorithm \citep{combettes2011proximal,yang2014elementary2} we propose for optimizing~\eref{eq:distributed_main}.
\sref{sec:Complexity} further proves its computational cost as $O(p^3)$. Detailed solutions for each proximal operator we proposed are summarized in \sref{sec:moremeth}.

\subsection{Variations  and Meta Formulation}
\label{sec:var}

There exist many variations of \methodName.\textbf{Closed-form Variations:} (1) \textbf{Edge Only or Group Only:} For instance, we can estimate the target $\Delta$ through a closed form solution if we have only one kind of additional knowledge. \sref{subsec:closed} provides the formulation and closed form solutions for edge-only or node-group-only cases. (2) \textbf{DIFFEE as our special case:} For the edge-only case, if we set $W_E$ as a matrix with all 1,  \eref{eq:methodName_main} becomes the {\diffee} formulation. \textbf{More Sets of Knowledge:} (3) We also generalize \methodName to multiple kinds of group knowledge plus multiple sources of weight knowledge in \sref{sec:moregeneral}. \textbf{Mis-specification:}  (4) When facing multiple types of evidence, misspecified evidence may exist for target goals. \sref{sec:0321-misspecify} proposes strategies to use prediction performance to guide the selective use of extra evidence sources. \textbf{Robust Covariance Estimation:} (5)  We also extend \eref{eq:methodName_main} with POET\citep{fan2013large} based robust covariance estimations when the sample size is extremely small in real-world datasets like in our two virus related gene expression experiments.

 \subsection{Analysis of Error Bounds}
 \label{sec:theory}
In this section, Theorem $2.1$ provides a  statistical analysis  under the `KEV Norm' structural constraints, leading to a non-probabilistic result that holds deterministically for all $\lambda_n$. Corollary $2.2$ provides the $asymptotic$ convergence rate in terms of how the error converges with number of dimensions $p$ and number of samples $n$, under KDiffNet's distributional assumptions. \methodName achieves a sharp convergence rate, the same convergence rate $O(\sqrt{{(\log {p})}/{n}}))$ as {\diffee}.   
We borrow the following conditions defined in \cite{yang2014elementary}, regarding the  decomposability of
regularization function $\mathcal{R}$ with respect
to the subspace pair $(\mathcal{M},\bar{\mathcal{M}}^{\perp})$:

 \textbf{(C1)} $\mathcal{R}(u+v) = \mathcal{R}(u) + \mathcal{R}(v)$, $\forall u \in \mathcal{M}, \forall v \in \bar{\mathcal{M}}^{\perp}$.

\textbf{(C2)} $\exists$ a subspace pair $(\mathcal{M},\bar{\mathcal{M}}^{\perp})$ such that the true parameter satisfies $\text{proj}_{\mathcal{M}^{\perp}}(\theta^*) = 0$

Now we introduce the following condition on `true' ${\Delta}^*$:
 \textbf{(EV-Sparsity):}   The `true'   ${\Delta}^*$ can be decomposed into two clear structures--$\{ {\Delta_e}^*$ and ${\Delta_g}^* \}$. ${\Delta_e}^*$ is exactly sparse with $s_E$ non-zero entries indexed by a support set $S_E$ . ${\Delta_g}^*$ is exactly sparse with $\sqrt{s_G}$ non-zero groups (with at least one non-zero entry) indexed by a support set $S_V$. $S_E\bigcap S_V = \emptyset$. All other elements  equal to $0$ (in $(S_E\bigcup S_V)^c$). 
 
 Section~\ref{sec:theoryMore} proves that kEV Norm satisfies conditions \textbf{(C1)} and  \textbf{(C2)}. This leads us to the following theorem (see proof ~\sref{sec:theoryMore}):
 \begin{theorem}
\label{theo:4main}
 Assuming $\Delta^*$ satisfies the condition \textbf{(EV-Sparsity)}  and $\lambda_n \geq \mathcal{R}^*(\hat{\Delta} - \Delta^*)$, then the optimal point $\hat{\Delta}$ has the following error bounds:
\end{theorem}
\vspace{-4mm}
\begin{equation}
||\hat{\Delta} - \Delta^*||_F \le 4\max(\sqrt{s_E},\epsilon\sqrt{s_G})\lambda_n 
\end{equation}
\vspace{-4mm}

 We state the following conditions on the true canonical parameter under additional knowledge defining  the class of differential GGMs: $\Delta^*=\Omega_d^*-\Omega_c^*$:

 \textbf{(C-MinInf$-\Sigma$):} The true $\Omega_c^*$ and $\Omega_d^*$ of \eref{def:diffNet} have bounded induced operator norm  i.e., $|||{\Omega_c}^*|||_{\infty} := \sup\limits_{w \ne 0 \in \R^p} \frac{||{\Omega_c}^*w||_{\infty}}{||w||_{\infty}} \le W_{E_{min}}^{c*}\kappa_1 $ and $|||{\Omega_d}^*|||_{\infty} := \sup\limits_{w \ne 0 \in \R^p} \frac{||{\Omega_d}^*w||_{\infty}}{||w||_{\infty}} \le W_{E_{min}}^{d*}\kappa_1$.
Here, intuitively, $W_{E_{min}}^{c*}$ corresponds to the largest ground truth weight index associated with non zero entries in $\Omega_c^{*}$. For set $S_{nz}=\{(i,j) | \Omega_{c_{ij}}^{*} = 0\}$, $W_{E_{S_{nz}}}> W_{E_{min}}^{c*}$.

\textbf{(C-Sparse-$\Sigma$):} The two true covariance matrices $\Sigma_c^*$ and $\Sigma_d^*$ are ``approximately sparse'' (following \cite{bickel2008covariance}). For some constant $0 \le q < 1$ and $c_0(p)$, $\max\limits_i\sum\limits_{j=1}^p|[\Sigma_{c}^*]_{ij} |^q \le c_0(p) $ and  $\max\limits_i\sum\limits_{j=1}^p|[\Sigma_{d}^*]_{ij}|^q \le c_0(p) $.  We additionally require $\inf\limits_{w \ne 0 \in \R^p} \frac{||\Sigma_c^*w||_{\infty}}{||w||_{\infty}} \ge \kappa_2$ and $\inf\limits_{w \ne 0 \in \R^p} \frac{||\Sigma_d^*w||_{\infty}}{||w||_{\infty}} \ge \kappa_2$.

 Using the above Theorem \ref{theo:4main} and conditions, we have the following corollary about the convergence rate of \methodName (see its proof in \sref{subsec:theo4proof}). 

\begin{corollary}
\label{cor:1}
    In the high-dimensional setting, i.e., $p > \max(n_c,n_d)$, let $v:= a\sqrt{\frac{\log p}{\min(n_c,n_d)}}$.  Then for $\lambda_n := \frac{\Gamma\kappa_1 a}{4\kappa_2}\sqrt{\frac{\log p}{\min(n_c,n_d)}}$ and $\min(n_c,n_d) > c \log p$, with a probability of at least $1-2C_1\exp(-C_2p\log (p))$, the estimated optimal solution $\hat{\Delta}$ has the following error bound:
 \begin{equation}
 \vspace{-4mm}
 \label{eq:rate}
 ||\hat{\Delta} - \Delta^*||_F
    \le  \frac{\Gamma a\max((\sqrt{s}_E),\epsilon\sqrt{s_G})}{\kappa_2}\sqrt{\frac{\log p}{\min(n_c,n_d)}}
    \end{equation}
Here $\Gamma=32\kappa_1\dfrac{\max(W_{E_{\min}}^{c*},W_{E_{\min}}^{d*})}{W_{E_{\min}}}$, where $a$, $c$,$C_1$, $C_2$, $\kappa_1$ and $\kappa_2$ are constants. $a$ depends on $\max_{i}\Sigma_{ii}^*$ and $c$ depends on $p,\tau,\max_{i}\Sigma_{ii}^*$. $\tau$ is a constant from Lemma 1 of \cite{ravikumar2011high}. 
\end{corollary}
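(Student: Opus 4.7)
The plan is to deduce Corollary~\ref{cor:1} from Theorem~\ref{theo:4} by showing that the stated choice of $\lambda_n$ dominates $\mathcal{R}^*(\hat\theta_n - \Delta^*)$ with the claimed probability, where $\hat\theta_n = [T_v(\hat\Sigma_d)]^{-1} - [T_v(\hat\Sigma_c)]^{-1}$ is the proxy backward mapping. Once this stochastic feasibility condition holds, $\Delta^*$ is a feasible point of the \methodName program \eref{eq:methodName}, so combining the optimality of $\hat\Delta$ with the decomposability of the kEV norm (already established in \sref{sec:ee-kdiff}) allows Theorem~\ref{theo:4} to kick in, giving $\|\hat\Delta-\Delta^*\|_F \le 4\max(\sqrt{s_E},\epsilon\sqrt{s_G})\lambda_n$. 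Substituting the prescribed $\lambda_n$ then yields exactly \eref{eq:rate} after absorbing the factor $4$ into the constant $\Gamma$.

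Next I would control $\mathcal{R}^*(\hat\theta_n - \Delta^*)$. By the dual-norm formula \eref{eq:kev-dual}, this is the maximum of $\|(1\varoslash W_E)\circ(\hat\theta_n-\Delta^*)\|_\infty$ and $\tfrac{1}{\epsilon}\|\hat\theta_n-\Delta^*\|^*_{\mathcal{G}_V,2}$. Since entries of $W_E$ are bounded below by $W_{E_{\min}}$, the first piece is controlled by $\|\hat\theta_n-\Delta^*\|_\infty/W_{E_{\min}}$; since the dual group-$\ell_2$ norm satisfies $\|u\|^*_{\mathcal{G}_V,2}\le \sqrt{s^{\max}_g}\|u\|_\infty$ (with $s^{\max}_g$ the maximal group size), the second piece is also governed by $\|\hat\theta_n-\Delta^*\|_\infty$, and one chooses $\epsilon$ so that the two contributions balance inside the $\max(\cdot,\cdot)$ appearing in the final bound.

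It therefore suffices to bound $\|\hat\theta_n - \Delta^*\|_\infty$. I would write
\begin{equation*}
\hat\theta_n - \Delta^* = \bigl([T_v(\hat\Sigma_d)]^{-1}-\Omega_d^*\bigr) - \bigl([T_v(\hat\Sigma_c)]^{-1}-\Omega_c^*\bigr)
\end{equation*}
and bound each difference separately. For each, the Bickel--Levina analysis of soft-thresholded covariance estimators, applied under condition \textbf{(C-Sparse-$\Sigma$)} with thresholding level $v=a\sqrt{\log p/\min(n_c,n_d)}$, gives $\|T_v(\hat\Sigma_{c/d}) - \Sigma^*_{c/d}\|_\infty$ of order $\sqrt{\log p/\min(n_c,n_d)}$ with probability at least $1-2C_1\exp(-C_2 p\log p)$, provided $\min(n_c,n_d)>c\log p$. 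The identity $A^{-1}-B^{-1}=A^{-1}(B-A)B^{-1}$ together with condition \textbf{(C-MinInf-$\Sigma$)} for the first factor and the $\kappa_2$ lower bound in \textbf{(C-Sparse-$\Sigma$)} for the second factor (which ensures $\|[T_v(\hat\Sigma)]^{-1}\|_\infty$ is controlled on the event of interest) transfers the entrywise bound from the covariances to the inverses, producing the $\kappa_1/\kappa_2$ factor in $\Gamma$. A union bound over the $O(p^2)$ entries of the two difference matrices preserves the failure probability up to constants.

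The main obstacle, I anticipate, is exactly this transfer from $\|T_v(\hat\Sigma)-\Sigma^*\|_\infty$ to $\|[T_v(\hat\Sigma)]^{-1}-\Omega^*\|_\infty$: the expansion above mixes an entrywise perturbation with operator-norm quantities, and care is needed to keep the dependence on $W_{E_{\min}}^{c*}$ versus $W_{E_{\min}}^{d*}$ transparent so that they assemble into the clean $\max(W_{E_{\min}}^{c*},W_{E_{\min}}^{d*})/W_{E_{\min}}$ ratio inside $\Gamma$. A secondary subtlety is verifying that the event on which $T_v(\hat\Sigma_{c/d})$ is invertible with a well-controlled inverse holds on the same high-probability event, which uses the $\kappa_2$ lower bound together with the fact that $\|T_v(\hat\Sigma)-\Sigma^*\|_\infty$ is small; this is where the condition $\min(n_c,n_d)>c\log p$ is used. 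Everything else---plugging into Theorem~\ref{theo:4} and simplifying constants---is bookkeeping.
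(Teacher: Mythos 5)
Your proposal follows essentially the same route as the paper's proof: reduce to Theorem~\ref{theo:4} by verifying $\lambda_n \ge \mathcal{R}^*(\hat{\theta}_n-\Delta^*)$, split $\hat{\theta}_n-\Delta^*$ into the $c$- and $d$-parts, and transfer the thresholded-covariance error to the inverse via the identity $\Omega^*-[T_v(\hat{\Sigma})]^{-1}=[T_v(\hat{\Sigma})]^{-1}(T_v(\hat{\Sigma})-\Sigma^*)\Omega^*$, using \textbf{(C-MinInf-$\Sigma$)} for $|||\Omega^*|||_\infty$, the $\kappa_2$ condition plus the Rothman/Ravikumar lemmas for $|||[T_v(\hat{\Sigma})]^{-1}|||_\infty\le 2/\kappa_2$ and the high-probability event, exactly as in \sref{subsec:theo4proof}. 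The only slight divergence is your treatment of the group-dual constraint (the paper verifies only the weighted-$\ell_\infty$ piece and leaves the $\epsilon$-scaled group piece implicit), which does not change the substance of the argument.
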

\vspace{-2mm}

We can prove that under the same conditions above, {\diffee} achieves the same asymptotic convergence rate as \eref{eq:rate}. However its rate includes a different constant  $\Gamma=32\kappa_1{\max(W_{E_{\min}}^{c*},W_{E_{\min}}^{d*})}$. Notably, when $W_{E_{\min}}>1$, \methodName converging constant is better than {\diffee}. We have also included theoretical results when under misspecification assumptions and when using POET robust covariance estimation in Section~\ref{sec:theoryMore}. %

\subsection{Connecting to Relevant GGM Studies beyond Data Samples}

To the authors' best knowledge, only two loosely-related studies exist in the literature  to incorporate edge-level knowledge for other types of GGM estimation. (1) One study with the name NAK ~\citep{bu2017integrating} (following ideas from \cite{shimamura2007weighted}) proposed to integrate Additional Knowledge into the estimation of single-task graphical model via a weighted Neighbourhood selection formulation. (2) Another study with the name JEEK~\citep{wang2017fast} (following \cite{singh2017constrained}) considered edge-level evidence via a weighted objective formulation to estimate multiple dependency graphs from heterogeneous samples. Both studies only added edge-level extra knowledge in structural learning and neither of the approaches was designed for direct differential structure estimation.  Besides, JEEK uses a multi-task formulation.\footnote{Different from JEEK, our method directly estimates differential network \citep{fazayeli2016generalized}.} %

\begin{figure*}[htb]
    \centering
    \begin{subfigure}[t]{0.31\textwidth}
        \centering
        \includegraphics[width=\linewidth]{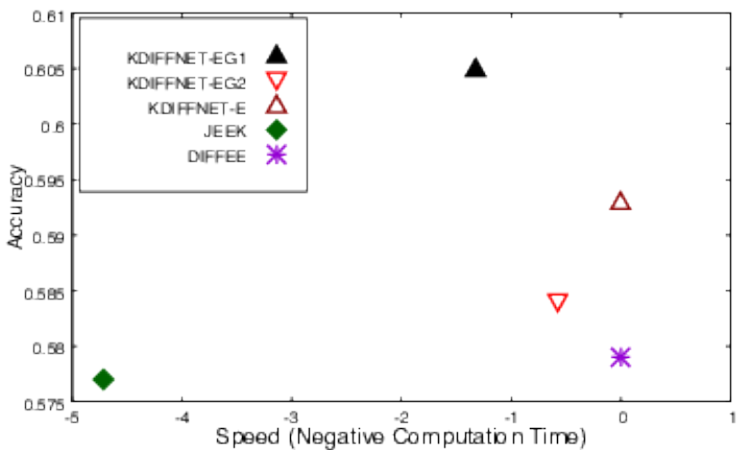}
        \caption{Classification Performance comparison on ABIDE Dataset:  \methodNameEV achieves highest Accuracy (averaged over $3$ random seeds) without sacrificing computation speed (points towards top right are better). \label{fig:abide_lambda}}
    \end{subfigure}
   \hfill 
    \begin{subfigure}[t]{0.33\textwidth}
        \centering
        \includegraphics[width=\linewidth]{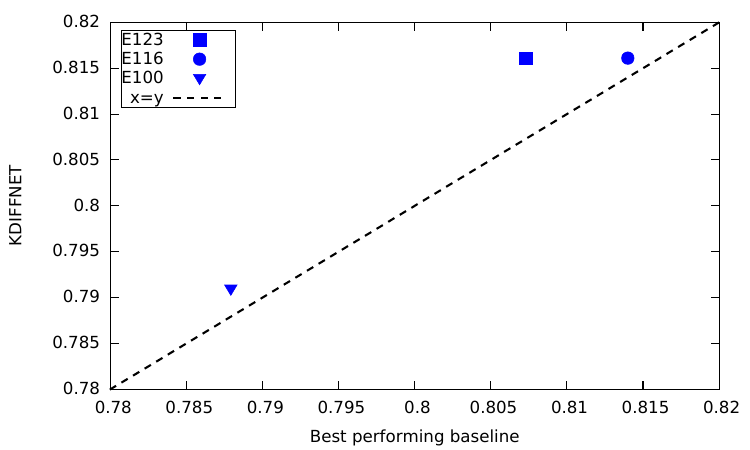}
        \caption{Classification Performance on three Epigenomic Datasets: \methodNameE achieves highest Accuracy (averaged over $3$ splits) in comparison to the best performing baseline. (points above the diagonal dashed line indicate ours is better).  \label{fig:hm_main}} 
    \end{subfigure}
    \hfill 
     \begin{subfigure}[t]{0.31\textwidth}
        \centering
        \includegraphics[width=\linewidth]{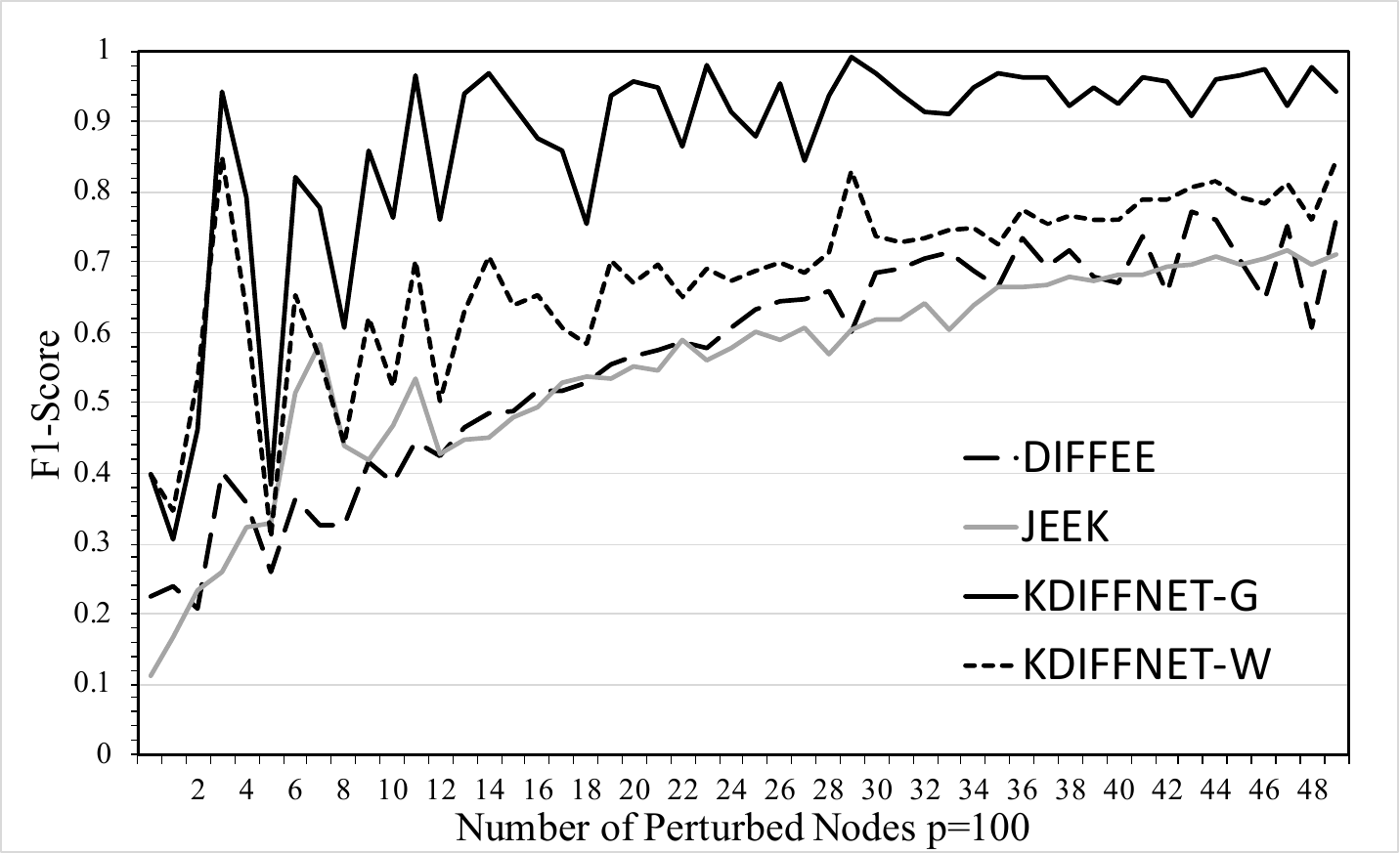}
        \caption{Edge-recovery Performance: F1-Score vs number of perturbed hub nodes. Real-life genetic networks include hub nodes that are being targeted the most by external stimulus (i.e. perturbed hubs). \label{fig:perturb}} 
    \end{subfigure}
    \vspace{-2mm}
    \caption{Classification Results for (a) Real-world Brain data (ABIDE) and (b) Real-world  epigenetic datasets, (c) Edge Recovery Accuracy for Simulation Data for Perturbed Nodes. }
\end{figure*}

\section{EXPERIMENTS}
 \label{sec:exp}

\textbf{Datasets:} We compare \methodName, variations and baselines on multiple datasets: (1) A total of $126$ different synthetic datasets representing various combinations of additional knowledge and hyper-parameter sensitivity analysis; and, (2) One fMRI dataset (ABIDE) for functional brain connectivity estimation,(3) Three epigenomic datasets for differential epigenetic network estimation, (4) Two gene expression datasets on virus (including SARS-CoV-2) infected and mock control samples for differential genetic network estimation. Results on virus related gene network identification and validation are in Section~\ref{subsec:covid-1}.

\textbf{Baselines:} 
We compare \methodName to estimators with additional knowledge: (1) JEEK\citep{wang2017fast},  (2) NAK\citep{bu2017integrating}, and estimators without any external evidence:  (3) SDRE \citep{liu2017learning},   (4) {\diffee} \citep{wang2017fastchange} and (5) JGLFUSED\citep{danaher2013joint}. We also check two variations of \methodName:  \methodNameE using only edge knowledge and \methodNameV using only group knowledge (~\sref{subsec:closed}). 

\textbf{Metrics:} For simulation datasets, we evaluate the methods in terms of  edge-level F1-Score.  \footnote{To calculate the F1-Score, we treat the number of 
true non-zero entries/edges as true positives and the number of true zero entries in the predicted $\Delta$ as true negatives. We select the best hyperparameter ($\lambda_n$,$\epsilon$) based on the best F1-Score on the training set and report the F1-Score on an unseen test set.} For the real-world datasets, due to lack of access to the ground truth $\Delta^*$, we use test accuracy obtained  using pairwise quadratic features(obtained from the edges in the difference matrix) as linear predictors.%

\textbf{Hyperparameters:} We tune the key hyper-parameters:
{\footnotesize
\begin{itemize}[noitemsep,topsep=0pt,parsep=0pt,partopsep=0pt]
    \item $v$ : To compute the proxy backward mapping, we vary $v$ in $\{ 0.001i|i = 1,2,\dots,1000 \}$ (to make $T_v(\Sigma_c)$ and $T_v(\Sigma_d)$  invertible).
    \item $\lambda_n$ :  According to our convergence rate analysis in Section~\ref{sec:theory}, $\lambda_n \ge C \sqrt{\frac{\log p}{\min(n_c,n_d)}}$,  we choose $\lambda_n$ from a range of $\{0.01 \times  \sqrt{\frac{\log p}{\min(n_c,n_d)}} \times i| i \in \{ 1,2,3,\dots, 100 \} \}$ using cross-validation.  
     For \methodNameV, we tune over $\lambda_n$ from a range of $\{0.1 \times  \sqrt{\frac{\log p}{\min(n_c,n_d)}} \times i| i \in \{ 1,2,3,\dots, 100 \} \}$\footnote{We use the same range to tune $\lambda_1$ for SDRE and $\lambda_2$ for JGLFUSED. We use  $\lambda_1 = 0.0001$(a small value) for JGLFUSED to ensure only the differential network is sparse. Tuning NAK is done by the package itself.}.
    \item $\epsilon$: For \methodNameEV, we tune $\epsilon \in \{ 0.0001,0.01,1,100 \}$.
\end{itemize}
}
\subsection{Experiment 1: Simulation Datasets}
\label{subsec:simgen}
In the following subsections, we present details about the data generation, followed by results under multiple settings.

\paragraph{Data Generation}:For overlapping Edge and Vertex Knowledge (KEG),  we generate simulated datasets (Data-EG) with a clear underlying differential structure between two conditions. We simulate the case of overlapping group and edge knowledge.   
  We select the block diagonals of size $m$ as groups in $\Delta^g$. If two variables $i,j$ are in a group $g'$, in $\Delta^g_{ij}=1/3$, else $\Delta^g_{ij}=0$, where $\Delta^g \in \R^{p \times p}$. For the edge-level knowledge component, given a known weight matrix $W_E$, we set $W^d = inv.logit(-W_E)$. Higher the value of $W_{E_{ij}}$, lower the value of $W^d_{ij}$, hence lower the probability of that edge to occur in the true precision matrix. We select different levels in the matrix $W^d$, denoted by $s$, where if $W^d_{ij}>s_l$, we set $\Delta^d_{ij}=1/3$, else $\Delta^d_{ij}=0$.  ${B}_I$ is a random graph with  each edge ${B}_{{I}_{ij}}=1/3$ with probability $p$. $ {\Omega}_d =\Delta^d + \Delta^g + {B}_I + \delta_d I 
   $, ${\Omega}_c ={B}_I + \delta_c I $, finally, $ {\Delta} = {\Omega}_d - {\Omega}_c$. $\delta_c$ and $\delta_d$ are selected large enough to guarantee positive definiteness. We generate two blocks of data samples following Gaussian distribution using $N(0,{\Omega}_c^{-1})$ and $N(0,{\Omega}_d^{-1})$. We use these data samples only to approximate the differential GGM to compare to the ground truth ${\Delta}$. For the other data settings(Data-G and Data-E), we have provided details in Section~\ref{sec:simulatedall}.

\subsubsection{Results on Simulation Experiments }

\begin{table*}[htb]
\centering
\scriptsize
\begin{tabular}{p{3cm}|r|r|r|r|r|r}\toprule
\multirow{2}{*}{Method} & \multicolumn{1}{c|}{Data-EG(Time)} &
      \multicolumn{3}{c|}{Data-EG(F1-Score)} &
      \multicolumn{1}{c|}{Data-G(Time)} & \multicolumn{1}{c}{Data-G(F1-Score)} \\
       \cmidrule(r){2-7}
  		
          &  W2($p=246$)      & W1($p=116$)     &   W2($p=246$)   & W3($p=160$)     & W2($p=246$)    & W2($p=246$)     \\\midrule
KDiffNet-EG & 3.270$\pm$0.182 & \bf{0.704}$\pm$0.022  & \bf{0.926}$\pm$0.001 & \bf{0.934}$\pm$0.002 & * &    * \\\hline
KDiffNet-G & 0.006$\pm$0.00 & 0.578$\pm$0.001 & 0.565$\pm$0.00 & 0.576$\pm$0.00 &   0.006$\pm$0.000 &    {\bf 0.860$\pm$0.000}   \\\hline
KDiffNet-E & 0.005$\pm$0.001 & 0.686$\pm$0.024 & 0.918$\pm$0.001 & 0.916$\pm$0.002 & * & *  \\\hline\midrule		
JEEK \citep{wang2017fast}     &  10.476$\pm$0.054   & 0.571$\pm$0.010 & 0.582$\pm$0.001 & 0.582$\pm$0.001 & * & *       \\\hline		
NAK\citep{bu2017integrating}       &   6.520$\pm$0.184 & 0.225$\pm$0.013 & 0.198$\pm$0.011 & 0.203$\pm$0.005 &     *   &    *      \\ \hline \midrule		
SDRE\citep{liu2014direct}     &   28.807$\pm$1.673 & 0.573$\pm$0.11 & 0.568$\pm$0.006 & 0.574$\pm$0.11 &    11.764$\pm$1.23 &   0.318$\pm$0.10     \\\hline 		
{\diffee}\citep{wang2017fastchange}    &   0.005$\pm$0.00   & 0.570$\pm$0.001 & 0.562$\pm$0.00 & 0.570$\pm$0.00 & 0.004$\pm$0.000 & 0.131$\pm$0.131  \\\hline		
JGLFUSED\citep{danaher2013joint}   &   109.15$\pm$13.659  & 0.512$\pm$0.001 & 0.489$\pm$0.001 & 0.504$\pm$0.001 &   112.441$\pm$6.362 &  0.060$\pm$0.00\\\hline
Number of Datasets   & 14    & 14 & 14 &  14 & 14   &  14\\\hline
\bottomrule
\end{tabular}
\caption{Mean Performance(F1-Score)  and Computation Time(seconds) with standard deviation for $10$ random seeds given in parentheses of \methodNameEV, \methodNameE, \methodNameV and baselines for  simulated data. We evaluate over $126$ datasets: $14$ variations in each of the three spatial matrices $W_E$: $p=116$(W1), $p=246$(W2),  and $p=160$(W3)  for the three data settings: Data-EG, Data-E and Data-G. $*$ indicates that the method is not applicable for a data setting.}
\label{tab:summary}
\vspace{-7mm}
\end{table*}

We present a summary of our results (partial) in Table~\ref{tab:summary}: the columns representing two cases of data generation settings (Data-EG and Data-G). 
Table~\ref{tab:summary} uses the $mean$ F1-score (across different settings of $p$, $n_c$, $n_d$, etc.) and the computational time cost to compare methods (rows). We repeat each experiment for $10$ random seeds. We can make several conclusions: 

(1) \textbf{\methodName outperforms baselines that do not consider knowledge.} Clearly, \methodName and its variations achieve the highest F1-score across all the $126$ datasets. SDRE and {\diffee} are differential network estimators but perform poorly indicating that adding additional knowledge improves differential GGM estimation. MLE-based JGLFUSED performs the worst in all cases. 

(2) \textbf{\methodName outperforms the baselines that consider knowledge, especially when group knowledge exists.} 
When under the Data-EG setting, while JEEK and NAK include the extra edge information, they cannot integrate group information and are not designed for differential estimation. This results in lower F1-Score (0.582 and 0.198 for W2) compared to \methodNameEV(0.926 for W2).  The advantage of utilizing both edge and node groups evidence is also indicated by the higher F1-Score of \methodNameEV with respect to \methodNameE and \methodNameV on the Data-EG setting (Top 3 rows in Table~\ref{tab:summary}). On Data-G cases, none of the baselines can model node group evidence. On  average \methodNameV performs  $6.4\times$ better than the baselines for $p=246$ with respect to F1.

(3) \textbf{\methodName achieves reasonable time cost versus the baselines and is scalable to large $p$.} 
Figure~\ref{fig:p_time} shows each method's time cost per $\lambda_n$ for large $p=2000$.
\methodNameEV is faster than JEEK, JGLFUSED and SDRE (Column 1 in Table~\ref{tab:summary}). \methodNameE and \methodNameV are faster than \methodNameEV owing to closed form solutions. On  Data-G dataset and Data-E datasets, our faster closed form solutions are able to achieve more  computational speedup. For example, on datasets using W2 $p=246$, \methodNameE and \methodNameV are on an average $21000\times$ and $7400\times$ faster (Column 5 in Table~\ref{tab:summary}) than the baselines, respectively. 

(4) \textbf{\methodNameV outperforms baselines on Knowledge of the perturbed hub
nodes} In Figure~\ref{fig:perturb}, we consider the scenario when a group of nodes is perturbed in the case condition relative to the control condition. Details for the data generation are in ~\ref{subsec:perturb_data}. \methodNameV can directly take into account the group of perturbed nodes and hence shows the best performance when compared to the baselines.  

(5) \textbf{\methodNameEV outperforms the baselines irrespective of hyperparameter $\lambda_n$ choice:} Besides F1-Score, we also analyze \methodName's performance when varying  hyper-parameter $\lambda_n$ using ROC curves.  \methodName achieves the highest Area under Curve (AUC) in comparison to all other baselines, indicating it is not sensitive to varying hyperparameters. In ~\sref{sec:simulatedall} , we use  three different subsections to present more analysis results for all the $126$ datasets under the three different data simulation settings.

(6) \textbf{\methodNameEV outperforms deep learning based structure learning methods:} In \ref{subsec:gat}, we compare edge recovery of \methodName against state-of-the-art deep learning models that can learn graph structure from data. Table~\ref{tab:gat} and Table~\ref{tab:dl_edge_acc} indicate that in such high dimensional cases, deep models are not able to learn the correct differential structures, as indicated by lower F1 score.

\subsection{Experiment 2: Human Brain Connectivity from fMRI  }
\label{subsec:exp1abide}
Real world scientific datasets present unique challenges and opportunities for structure discovery. While their ground truth graphs are unknown, experimental studies have led to a plethora of disparate external sources of structure evidence. We evaluate \methodName in a real-world downstream classification task on a publicly available resting-state fMRI dataset: ABIDE\citep{di2014autism}.   The ABIDE data aims to understand human brain connectivity and how it reflects neural disorders \citep{van2013wu}. 

{\bf Data Processing:} The data is retrieved from the Preprocessed Connectomes Project \citep{craddock2014preprocessed}. %
ABIDE includes two groups of human subjects: autism and control. After preprocessing with Configurable Pipeline for the Analysis of Connectomes (CPAC) \citep{craddock2013towards} pipeline, $871$ individuals remain ($468$ diagnosed with autism). Signals for the 160 (number of features $p=160$) regions of interest (ROIs) in the often-used Dosenbach Atlas \citep{dosenbach2010prediction} are examined. 

{\bf Sources of Additional Knowledge}: We utilize three types of collated evidence in neuroscience: first, as spatially distant regions are less likely
to be connected in the brain\citep{watts1998collective,vertes2012simple}, we employ $W_E$ derived from the spatial distance  between $160$ brain regions of interest(ROI)  \citep{dosenbach2010prediction}. Further,  scientists have classified two types of groups of brain regions that behave similarly(functionally or connective) from Dosenbach Atlas\citep{dosenbach2010prediction}: (1)  macroscopic brain structures with $40$ unique groups  (G1) and (2)  $6$ higher level node groups having the same functional connectivity(G2).

{\bf Results:}  To evaluate the learnt differential structure in the absence of a ground truth graph, we utilize the non-zero edges from the estimated graph in  downstream classification. The subjects are randomly partitioned into three equal sets: a training set, a validation set, and a test set. Each estimator produces $\hat{\Omega}_{c}- \hat{\Omega}_{d}$ using the training set. Then, the nonzero edges in the difference graph are used for feature selection. Namely, for every edge between ROI x and ROI y, the mean
value of $x\times y$ over time was selected as a feature. These features are fed to a logistic regressor with ridge penalty, which is tuned via cross-validation. Accuracy is reported on the test set.  For all methods, we tune $\lambda_n$ to vary the fraction of zero edges(non-edges) of the inferred graphs from $0.01\times i | i \in \{50,51,52,\dots,70\}$.   We repeat the experiment for $3$ random seeds and report the average test accuracy. Figure~\ref{fig:abide_lambda} compares  \methodNameEV and baselines on ABIDE, using the $y$ axis for classification test accuracy (the higher the better) and the $x$ axis for the computation speed per $\lambda_n$ (negative seconds, the more right the better).  \methodName-EG1, incorporating both edge($W_E$) and (G1) group  knowledge, achieves the highest accuracy of $60.5\%$  for distinguishing the autism vs the control subjects without sacrificing computation speed. We show the learnt differential network in Figure~\ref{fig:abide_viz}.\footnote{While higher accuracy has been reported in the literature, e.g. \citep{niu2020multichannel},  they utilize complicated deep learning architectures designed for classification. Instead we use classification as a linear probe to evaluate the learnt graph.}  %
\begin{figure}
    \centering
    \includegraphics[width=0.5\textwidth]{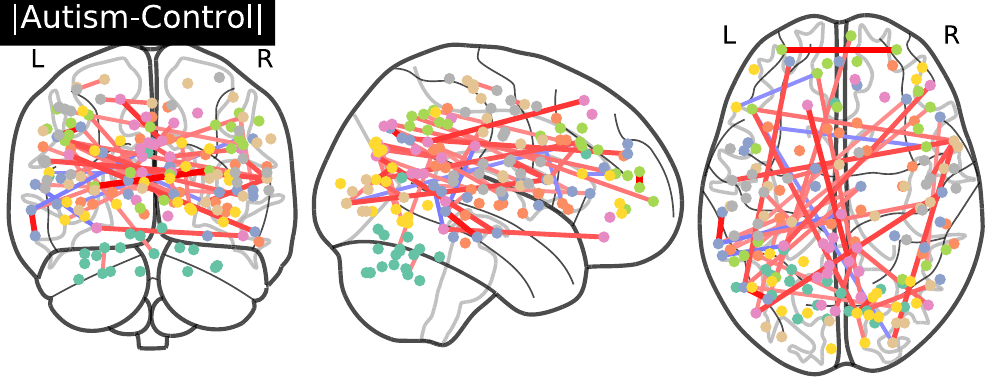}
    \caption{Differential Graph (of three views) between Autism and Control visualized using nilearn package.}
    \label{fig:abide_viz}
\end{figure}

\subsection{Experiment 3: Epigenetic Network from Histone Modifications}
\label{subsec:hmmore}
\label{subsec:exp2more}
\vspace{-1mm}
In this experiment, we evaluate \methodName and baselines for estimating the differential epigenetic network between low and high gene expression. Cellular diversity is attributed to cell type-specific patterns of gene expression, in turn associated with a complex regulation mechanism. Studies have shown that  epigenetic factors(like histone modifications(HMs)), act combinatorially to regulate gene expression \citep{suganuma2008crosstalk,berger2007complex}.  %

{\bf Data Processing: }We consider five core HM marks (H3K4me3, H3K4me1, H3K36me3, H3K9me3, H3K27me3) and three major cell types(K562 Leukemia Cells(E123), GM12878 Lymphoblastoid Cells(E116) and  Psoas Muscle(E100)) with genome-level gene expression profiled in the REMC database \citep{kundaje2015integrative}. 

{\bf Sources of Additional Knowledge: }Signals closer to each other relative to the transcription start site for each gene are more likely to interact  in the gene regulation process. We design a  $W_E$ matrix based on  this genomic distance.   

{\bf Results: } Figure~\ref{fig:hm_main} reports the average test set performance(average across $3$ data splits) for the three cell types.  We plot the test accuracy achieved by \methodName on the $y-$axis, with the best performing baseline on the $x-$axis. \methodName outperforms {\diffee} that does not use $W_E$ as well as JEEK, that can incorporate this information but estimates the two networks separately. Figure~\ref{fig:hmheat} shows a qualitative comparison of the epigenetic networks learnt by \methodName and  {\diffee}.

\section{CONCLUSIONS}
\label{sec:concl}
In this paper, we show that \methodName is flexible in incorporating different kinds of available evidence, leading to improved differential network estimation, without additional computational cost and can improve downstream tasks like classification. We believe the flexibility and scalability provided by \methodName can be beneficial in many real-world tasks. We plan to generalize from Gaussian to semi-parametric distributions or to Ising models next.

\bibliographystyle{plainnat}
\bibliography{gm,BrainGGM}

\clearpage
\appendix

\thispagestyle{empty}
\twocolumn[ \makesupplementtitle ]
\begin{center}
{\Large \bf Part A: Supplementary Materials for Optimization, Error Bounds, Proofs and Theoretical Backgrounds\\}
\end{center}
\section{CONNECTING TO DIFFERENTIAL GGM FORMULATIONS}
\label{sec:rel}

Recent literature includes multiple differential network estimators to go beyond the naive strategy.  They roughly fall into four categories (\sref{sec:intro}). We present one estimator from each group here.

\textbf{Multitask MLE based: JGLFused:}~One study "Joint Graphical Lasso" (JGL)  \citep{danaher2013joint} used multi-task MLE formulation for joint learning of multiple sparse GGMs. JGL can estimate a differential network when using an additional sparsity penalty called the fused norm:
\begin{equation}
\label{eq:fusedGLasso}
\begin{split}
 \argmin\limits_{\Omega_{c}, \Omega_{d} \succ 0, \Delta} & n_c(-\log\text{det}(\Omega_c) + <\Omega_c, \hat{\Sigma}_c>)  \\
 + &n_d(-\log\text{det}(\Omega_d) + <\Omega_d, \hat{\Sigma}_d>)\\
 + &\lambda_2 (||\Omega_c||_1 + ||\Omega_d||_1) + \lambda_n||\Delta||_1
 \end{split}
\end{equation}

Another study \citep{honorio2010multi} used $\ell_1$/$\ell_{\infty}$ regularization via a similar multi-task MLE formulation. Studies in this group jointly learn two GGMs and the difference. However, these multi-task methods do not work if each graph is dense but the change is sparse.

\textbf{Density ratio based: SDRE:}~\cite{liu2014direct} proposed to directly estimate \underline{S}parse differential networks for exponential family by \underline{D}ensity \underline{R}atio \underline{E}stimation: 
\begin{equation}
\label{eq:densityratio}
\argmax\limits_{\Delta} \mathcal{L}_{\text{KLIEP}}(\Delta) - \lambda_n \parallel \Delta \parallel_1 - \lambda_2 \parallel \Delta \parallel_2
\end{equation}
$\mathcal{L}_{\text{KLIEP}}$ minimizes the KL divergence between the true probability density $p_d(x)$ and the estimated %
without explicitly modeling the true $p_c(x)$ and $p_d(x)$. This estimator uses the elastic-net penalty for enforcing sparsity.

\textbf{Constrained $\ell_1$ minimization based: Diff-CLIME}: The study by \cite{zhao2014direct} directly learns $\Delta$ through a constrained optimization formulation.  
\begin{equation}
\label{eq:diffclime}
\begin{split}
&\argmin\limits_{\Delta}||\Delta||_1 \\
\text{Subject to:  } &||\hat{\Sigma}_{c}\Delta\hat{\Sigma}_{d} - (\hat{\Sigma}_{c} - \hat{\Sigma}_{d})||_{\infty} \le \lambda_n
\end{split}
\end{equation}
The optimization reduces to multiple linear programming problems  with a computational complexity of $O(p^8)$. This method doesn't scale to large $p$.

\textbf{Elementary estimator based: {\diffee}:} 
EE based UGM estimator from \cite{yang2014elementary}  proposed the following generic formulation to estimate canonical parameter for an exponential family distribution via EE framework:
\begin{equation}
\begin{split}
 \argmin\limits_{\theta}  ||\theta||_1 ,
\hspace{3mm} \text{Subject to: }  || \theta - \mathcal{B}^*(\hat{\phi}) ||_{\infty} \le \lambda_n
\end{split}
\label{eq:eegm}
\end{equation}
For an exponential family distribution, $\theta$ is its canonical parameter to learn.  
\cite{wang2017fastchange} proposed a so-called {\diffee} for estimating sparse structure changes in high-dimensional GGMs directly:
\begin{equation}
\label{eq:ee-diff}
\begin{split}
&\argmin\limits_{\Delta}||\Delta||_{1,,\text{off}} \\
\text{Subject to:  } &||\Delta - \mathcal{B}^*(\hat{\Sigma}_d, \hat{\Sigma}_c)||_{\infty,\text{off}} \le \lambda_n
\end{split}
\end{equation}
The design of \cite{wang2017fastchange} follows a so-called family of elementary estimators. We explain details of $\mathcal{B}^*(\hat{\Sigma}_d, \hat{\Sigma}_c)$ in \sref{sec:backEE}. 
{\diffee}'s solution is a closed-form entry-wise thresholding operation on $\mathcal{B}^*(\hat{\Sigma}_d, \hat{\Sigma}_c)$  to ensure the desired sparsity structure of its final estimate.
Here $\lambda_n > 0$ is the tuning parameter. Empirically, {\diffee} scales to large $p$ and is faster than SDRE and Diff-CLIME.

\eref{eq:ee-diff} is a special case of \eref{eq:ee-back}, in which $\mathcal{R}(\cdot)$ is the $\ell_1$-norm for enforcing sparsity. The differential network $\Delta$ is the $\theta$ in \eref{eq:ee-back} that we aim to estimate. 
$\mathcal{R}^*(\cdot)$ in \eref{eq:ee-diff}  is the dual norm of $\ell_1$, therefore \eref{eq:ee-diff} used $\ell_\infty$.

 {\diffee}: \eref{eq:ee-diff} is a special case of \eref{eq:eegm}. \eref{eq:eegm} is a special case of \eref{eq:ee-back}.
\section{MORE DETAILS OF RELEVANT STUDIES BEYOND DATA SAMPLES}
\label{sec:appRelated}
\textbf{NAK \citep{bu2017integrating}: }
 For the single task sGGM, one recent study~\citep{bu2017integrating} (following ideas from \cite{shimamura2007weighted}) proposed to use a weighted \underline{N}eighborhood selection formulation to integrate edge-level \underline{A}dditional \underline{K}nowledge (NAK) as: $\hat{\beta}^j = \argmin\limits_{\beta,\beta_j = 0} \frac{1}{2}||\bX^j - \bX\beta||_2^2 + ||\mathbf{r}_j \circ \beta||_1$. 
 Here $\hat{\beta}^j$ is the $j$-th column of a single sGGM $\hat{\Omega}$. Specifically, $\hat{\beta}^j_k = 0$ if and only if $\hat{\Omega}_{k,j} = 0$.  $\mathbf{r}_j$ represents a weight vector designed using available  extra knowledge for estimating a brain connectivity network from samples $\bX$ drawn from a single condition. The NAK formulation can be solved by a classic Lasso solver like glmnet.  %

\textbf{JEEK\citep{wang2017fast}:}  Two related studies, JEEK\citep{wang2017fast} and W-SIMULE\citep{singh2017constrained} incorporate edge-level extra knowledge in the joint discovery of
$K$ heterogeneous graphs. In both these studies, each sGGM corresponding to a condition $i$ is assumed to be composed of a task specific sGGM component $\Omega^{(i)}_I$ and a shared component $\Omega_S$ across all conditions, i.e., $\Omega^{(i)} = \Omega^{(i)}_I + \Omega_S$. The minimization objective of W-SIMULE is as follows: objective:
\begin{align}
\label{eq:wsimule}
\argmin\limits_{\Omega^{(i)}_I,\Omega_S}&\sum\limits_i ||W\circ\Omega^{(i)}_I||_1+ \epsilon K||W\circ\Omega_S||_1  \\
\text{subject to:} \; & ||\Sigma^{(i)}(\Omega^{(i)}_I + \Omega_S) - I||_{\infty} \le \lambda_{n}, \; i =
1,\dots,K \nonumber
\end{align}

W-SIMULE is very slow when $p > 200$ due to the expensive computation cost $O(K^4p^5)$. In comparison, JEEK is an EE-based  optimization formulation: 
\begin{equation}
\begin{split}
    \label{eq:JEEK}
     \argmin\limits_{\Omega^{tot}_I, \Omega^{tot}_S} & ||W^{tot}_I \circ \Omega^{tot}_I||_1 + ||W^{tot}_S\circ \Omega^{tot}_S|| \\
    \text{ subject to: } & ||\dfrac{1}{W^{tot}_I} \circ (\Omega^{tot} - B^*(\hat{\phi}))) ||_{\infty} \le \lambda_n\\
    & ||\dfrac{1}{W^{tot}_S} \circ (\Omega^{tot} - B^*(\hat{\phi})) ||_{\infty} \le \lambda_n \\
    & \Omega^{tot} = \Omega^{tot}_S + \Omega^{tot}_I 
\end{split}
\end{equation}
Here, $\Omega_I^{tot} = (\Omega_I^{(1)},\Omega_I^{(2)},\dots,\Omega_I^{(K)})$ and $\Omega_S^{tot}=(\Omega_S,\Omega_S,\dots,\Omega_S)$. The edge knowledge of the task-specific graph is represented as weight matrix $\{W^{(i)}\}$ and $W_S$ for the shared network. JEEK differs from W-SIMULE in its constraint formulation, that in turn makes its
optimization much faster and scalable than WSIMULE. 
In our experiments, we use JEEK as our baseline.

\textbf{Drawbacks:}~However, none of these studies are flexible to incorporate other types of additional knowledge like node groups or cases where overlapping group and edge knowledge are available for the same target parameter. Further,  these studies are limited by the assumption of sparse single condition graphs. Estimating a sparse difference graph directly is more flexible as it does not rely on this assumption.

\subsection{Related Work on Genetic Network Identification}
\label{sec:geneticback} A genetic interaction network describes biological interactions among genes and provides a systematic understanding of how components communicate and influence each other during cellular signaling and regulatory processes. 
To reverse engineer genetic networks from observed gene expression profiles (like from multiple tissue samples), the bioinformatics literature includes methods from four categories: 
\begin{itemize}
\item (a) Correlation and partial correlation based methods\citep{schafer2005shrinkage,meinshausen2006high}. 
Correlation networks are vulnerable to
false positives. Partial correlation based probabilistic GGMs \citep{dobra2004sparse,allen2013local} successfully avoid this problem with some additional assumptions like Gaussianity. This has been shown to be a reasonable assumption in case of inferring genetic networks from gene expression data. 
\item  (b) Regression based approaches \citep{van2010inferring,haury2012tigress}. These suffer from poor performance in the cases of limited data.
\item  (c) Bayesian Networks\citep{mukherjee2008network,werhli2007reconstructing}: These are probabilistic graphical models representing conditional dependencies in the form of Directed Acyclic Graphs. Bayesian network based methods are limited in how scalable they are, especially when facing high-dimensional genome-wide data sets; 
\item  (d) Information theory based or non-parametric based models. They  measure non linear associations\citep{zhang2012inferring,zhang2015conditional,margolin2006aracne,meyer2007information}.  

\item Some recent methods also focus on Differential Genetic Network Analysis. \citep{yu2011link} proposed to identify differential gene pairs of co-expression
networks. \citep{zhang2016differential} inferred differential correlation-based networks by decomposing them to
global and group-specific components.

\end{itemize}

\section{OPTIMIZATION OF \methodName AND ITS VARIANTS}
\label{sec:moremeth}

In summary, the three added operators are affine mappings and can write as:
$L_e=A_e \Delta_{tot}$, $L_g=A_g \Delta_{tot}$, and $L_{tot}=A_{tot} \Delta_{tot}$, where 
   $A_e=[\boldsymbol{I}_{p \times p}\quad \boldsymbol{0}_{p \times p}]$, $A_g=[\boldsymbol{0}_{p \times p}\quad \boldsymbol{I}_{p \times p}]$ and $A_{tot}=[\boldsymbol{I}_{p \times p}\quad \boldsymbol{I}_{p \times p}]$.

Now we reformulate \eref{eq:\methodName} to the following equivalent and distributed formulation:
\begin{equation}
\begin{split}
\label{eq:distributed}
     &\argmin\limits_{\Delta_{tot}}  F_1(\Delta_{{tot}_1})+F_2(\Delta_{{tot}_2})   + G_1(\Delta_{{tot}_3}) +
    G_2(\Delta_{{tot}_4}) \\
   & \text{subject to: } \Delta_{{tot}_1}=\Delta_{{tot}_2}=\Delta_{{tot}_3}=\Delta_{{tot}_4}=\Delta_{tot}
    \end{split}
\end{equation}
Where $F_1(\cdot)=||W_E\circ(L_e(\cdot))||_1$, $G_1(\cdot)=\mathcal{I}_{||(1 \varoslash W_E)\circ\left(L_{tot}(\cdot) - \mathcal{B}^*(\hat{\Sigma}_{d},\hat{\Sigma}_{c})\right)||_{\infty} \le \lambda_n}$, $F_2(\cdot)=\epsilon||L_g(\cdot)||_{\mathcal{G}_V,2}$ and $G_2(\cdot)=\mathcal{I}_{||L_{tot}(\cdot) -\mathcal{B}^*(\hat{\Sigma}_{d},\hat{\Sigma}_{c}) ||^*_{\mathcal{G}_V,2}\le \epsilon\lambda_n}$. Here $\mathcal{I}_C(\cdot)$ represents the indicator function of a convex set $C$ denoting that $\mathcal{I}_C(x) = 0$ when $x \in C$ and otherwise $\mathcal{I}_C(x) = \infty$.

\begin{algorithm}[th]

   \caption[\methodName]{Parallel Proximal Algorithm for \methodName}
   \label{alg:pp}
   {
\begin{algorithmic}[1]
    \INPUT Two data matrices $\Xb_c$ and $\Xb_d$, The weight matrix $W_E$ and $\mathcal{G_V}$. \\
    Hyperparameters: $\alpha$, $\epsilon$, $v$, $\lambda_n$ and $\gamma$. Learning rate: $0< \rho <2$. Max iteration number $iter$.
    \OUTPUT $\Delta$
    \STATE Compute $\mathcal{B}^*(\hat{\Sigma}_{d},\hat{\Sigma}_{c})$ from $\Xb_d$ and $\Xb_c$  
   \STATE Initialize $ A_e=[\boldsymbol{I}_{p \times p}\quad \boldsymbol{0}_{p \times p}]$, 
    $A_g=[\boldsymbol{0}_{p \times p}\quad \boldsymbol{I}_{p \times p}]$,
     $A_{tot}=[\boldsymbol{I}_{p \times p}\quad \boldsymbol{I}_{p \times p}]$, 
         \STATE Initialize $\Delta_{{tot}_1}$, $\Delta_{{tot}_2}$, $\Delta_{{tot}_3}$, $\Delta_{{tot}_4}$ and $\Delta_{{tot}}=\dfrac{\Delta_{{tot}_1}+\Delta_{{tot}_2}+\Delta_{{tot}_3}+\Delta_{{tot}_4}}{4}$
   \FOR{$i=0$ {\bfseries to} $iter$}
    \STATE$p^i_1 = \text{prox}_{4\gamma F_1}\Delta_{{tot}_1}^i$; $p^i_2 = \text{prox}_{4\gamma F_2}\Delta_{{tot}_2}^i$; $p^i_3 = \text{prox}_{4\gamma G_1}\Delta_{{tot}_3}^i$; $p^i_4 = \text{prox}_{4\gamma G_2}\Delta_{{tot}_4}^i$
    \STATE $p^i = \frac{1}{4}(\sum\limits_{j=1}^4 p_j^i)$
    \FOR{$j =1,2,3,4$}
    \STATE$\Delta_{{tot}_j}^{i+1} = \Delta_{{tot}}^i + \rho(2p^i - \Delta^i_{{tot}} - p_j^i)$
    \ENDFOR
    \STATE$\Delta_{tot}^{i+1} = \Delta^i_{tot} + \rho(p^i - \Delta_{tot}^i)$
   \ENDFOR
   \STATE $\hat{\Delta} = A_{tot}\Delta_{tot}^{iter}$
   \OUTPUT $\hat{\Delta}$
\end{algorithmic}
}
\end{algorithm}

\subsection{Optimization via Proximal Solution}
In this section, we present the detailed optimization procedure for \methodName. 
We assume $\Delta_{tot}=[\Delta_e; \Delta_g]$, where $;$ denotes row wise concatenation. Consider operator $L_{d}(\Delta_{tot})=\Delta_e$ and $L_g(\Delta_{tot})=\Delta_g$, $L_{tot}(\Delta_{tot})=\Delta_e + \Delta_g$.

\begin{equation}
\begin{split}
    \label{eq:\methodName}
     &\argmin\limits_{\Delta}||W_E\circ(L_e(\Delta_{tot}))||_1 + \epsilon||L_g(\Delta_{tot})||_{\mathcal{G}_V,2}  \\
    &\text{s.t.:  }\\
    &||(1 \varoslash W_E)\\
    &\circ\left(L_{tot}(\Delta_{tot}) - \left([T_v(\hat{\Sigma}_{d})]^{-1} -[T_v(\hat{\Sigma}_{c})]^{-1}\right)\right)||_{\infty} \\
    &\le \lambda_n\\
    &||L_{tot}(\Delta_{tot}) - \left([T_v(\hat{\Sigma}_{d})]^{-1} - [T_v(\hat{\Sigma}_{c})]^{-1}\right)||^*_{\mathcal{G}_V,2}\le \epsilon\lambda_n \\
\end{split}
\end{equation}
This can be rewritten as:
\begin{equation}
\begin{split}
     &\argmin\limits_{\Delta}  F_1(\Delta_{tot_1})+F_2(\Delta_{tot_2})   + G_1(\Delta_{tot_3}) +
    G_2(\Delta_{tot_4}) \\
    &\Delta_{tot}=\Delta_{tot_1}=\Delta_{tot_2}=\Delta_{tot_3}=\Delta_{tot_4}
    \end{split}
\end{equation}
Where:
\begin{equation}
    \begin{split}
        & F_1(\cdot)=||W_E\circ(L_e(\cdot))||_1\\
        &G_1(\cdot)=\mathcal{I}_{||(1 \varoslash W_E)\circ\left(L_{tot}(\cdot) - \left([T_v(\hat{\Sigma}_{d})]^{-1} - [T_v(\hat{\Sigma}_{c})]^{-1}\right)\right)||_{\infty} \le \lambda_n}\\
        &F_2(\cdot)=\epsilon||L_g(\cdot)||_{\mathcal{G}_V,2}\\
        & G_2(\cdot)=i_{||L_{tot}(\cdot) - \left([T_v(\hat{\Sigma}_{d})]^{-1} - [T_v(\hat{\Sigma}_{c})]^{-1}\right)||^*_{\mathcal{G}_V,2}\le \epsilon\lambda_n }
    \end{split}
\end{equation}
Here, $L_e$,$L_g$ and $L_{tot}$ can be written as Affine Mappings. By Lemma in \cite{}, 
\begin{equation}
\begin{split}
 & L_e=A_e \Delta_{tot}\\
    & A_e=[\boldsymbol{I}_{p \times p}\quad \boldsymbol{0}_{p \times p}]\\
    & L_g=A_g \Delta_{tot}\\
    & A_g=[\boldsymbol{0}_{p \times p}\quad \boldsymbol{I}_{p \times p}]\\
    & L_{tot}=A_{tot} \Delta_{tot}\\
    & A_{tot}=[\boldsymbol{I}_{p \times p}\quad \boldsymbol{I}_{p \times p}]
    \end{split}
\end{equation}

if $AA^{T}=\beta I$, and $h(x)=g(Ax)$, 
\begin{equation}
    prox_h(x)=x-\beta A^T(Ax-prox_{\beta^{-1}g}(Ax))
    \label{eq:affineproximal}
\end{equation}
$\beta_g=1$, $\beta_e=1$ and $\beta_{tot}=2$.

\textbf{Solving for each proximal operator}:

\paragraph{A. $F_1(\Delta_{tot})=||W_E\circ(L_e(\Delta_{tot}))||_1\\$}
 $L_e(\Delta_{tot})=A_e\Delta_{tot}=\Delta_e$. 
 \begin{equation}
\begin{split}
 & prox_{\gamma F1}(y)=y-A_{e}^{T}(x-prox_{\gamma f}(x)) \\
     & x=A_ey \\
    \end{split}
      \label{eq:prox1}
 \end{equation}
  Here, $x_{j,k}=\Delta_{e_{j,k}}$.
\begin{equation}
    \begin{split}
\text{prox}_{\gamma f_1}(x) &= \text{prox}_{\gamma ||W\cdot||_1}(x) \\
&= \left \{
\begin{array}{rcl}
x_{j,k} - \gamma w_{j,k}\text{, }x_{j,k}^{(i)} > \gamma w_{j,k}\\
0\text{, }|x_{j,k}^{(i)}| \le \gamma w_{j,k}\\
x_{j,k}^{(i)} + \gamma w_{j,k}\text{, }x_{j,k}^{(i)} < - \gamma w_{j,k}
\end{array}
\right.  
\label{eq:p1}
\end{split}
\end{equation}
 Here $j,k = 1,\dots, p$. This is an entry-wise operator (i.e., the calculation of each entry is only related to itself). This can be written in closed form:
 \begin{equation}
\begin{split}
\text{prox}_{\gamma f_1}(x) &= \max((x_{j,k} - \gamma w_{j,k})  , 0) \\
& + \min(0, (x_{j,k}  + \gamma w_{j,k})) 
\label{gpu:p1}
\end{split}
\end{equation}
 We replace this in \eref{eq:prox1}.

\paragraph{B. $F_2(\Delta_{tot}) = \epsilon||L_g(\Delta_{tot})||_{\mathcal{G}_V,2}$}

 Here, $L_g(\Delta_{tot})=A_g\Delta_{tot}=\Delta_g$. 
 \begin{equation}
\begin{split}
& x=A_gy \\
 & prox_{\gamma F2}(y)=y-A_{g}^{T}(x-prox_{\gamma f_2}(x)) \\
 \end{split}
      \label{eq:prox2}
 \end{equation}

  Here, $x_{j,k}=\Delta_{g_{j,k}}$.
\begin{equation}
\begin{split}
\text{prox}_{\gamma f_2}(x_g) &= \text{prox}_{\gamma ||\cdot||_{\mathcal{G},2}}(x_g)\\ &= \left \{
\begin{array}{rcl}
x_g - \epsilon\gamma\frac{x_g}{||x_g||_2}\text{, }||x_g||_2 > \epsilon \gamma\\
0\text{, }||x_g||_2 \le \epsilon \gamma
\end{array}
\right.  
\label{eq:p2}
\end{split}
\end{equation}
Here $g \in \mathcal{G_\mathcal{V}}$. This is a group entry-wise operator (computing a group of entries is not related to other groups). In closed form:
\begin{equation}
\begin{split}
\text{prox}_{\gamma f_2}(x_g) &= \text{prox}_{\epsilon \gamma ||\cdot||_{\mathcal{G},2}}(x_g)\\
&=  x_g \max((1 - \frac{\epsilon \gamma}{||x_g||_2}), 0)
\label{gpu:p2}
\end{split}
\end{equation}
We replace this is~\eref{eq:prox2}.

\paragraph{C. $G_1$:}
 $G_1(\Delta_{tot})=\mathcal{I}_{||(1 \varoslash W_E)\circ\left(L_{tot}(\Delta_{tot}) -  \left([T_v(\hat{\Sigma}_{d})]^{-1} - [T_v(\hat{\Sigma}_{c})]^{-1}\right)\right)||_{\infty} \le \lambda_n}$
 
Here, $L_{tot}=A_{tot} \Delta_{tot}$ and 
    $A_{tot}=[\boldsymbol{I}_{p \times p}\quad \boldsymbol{I}_{p \times p}]$.
    
     \begin{equation}
\begin{split}
& x=A_{tot}y \\
 & prox_{\gamma G1}(y)=y-2A_{tot}^{T}(x-prox_{2^{-1}\gamma g_1}(x)) \\
 \end{split}
      \label{eq:prox3}
 \end{equation}
\begin{equation}
    \begin{split}
\text{prox}_{\gamma g_1}(x) &= \text{proj}_{||1\varobslash(W_E)\circ(x - a)||_{\infty} \le \lambda_n} \\
&= \left \{
\begin{array}{rcl}
x_{j,k}\text{, }|x_{j,k} - a_{j,k}|\le w_{j,k}\lambda_n\\
a_{j,k}+w_{j,k}\lambda_n\text{, }x_{j,k} > a_{j,k} + w_{j,k}\lambda_n\\
a_{j,k}-w_{j,k}\lambda_n\text{, }x_{j,k} < a_{j,k} - w_{j,k}\lambda_n
\end{array}
\right.
\label{eq:p3}
\end{split}
\end{equation}

  In closed form:
  \begin{equation}
\begin{split}
\text{prox}_{\gamma g_1}(x) &= \text{proj}_{||x - a||_{\infty} \le \lambda_n} \\
&= \min(\max(x_{j,k} - a_{j,k}, -w_{j,k}\lambda_n), w_{j,k}\lambda_n) + a_{j,k}
\label{gpu:p3}
\end{split}
\end{equation}
We replace this in~\eref{eq:prox3}.

\paragraph{D. $G_2(\Delta_{tot}) = \mathcal{I}_{\{ ||L_{tot}(\Delta_{tot}) -  B^*||_{\mathcal{G},2}^* \le {\epsilon \lambda_n}\}}$}
Here, $L_{tot}=A_{tot} \Delta_{tot}$ and 
    $A_{tot}=[\boldsymbol{I}_{p \times p}\quad \boldsymbol{I}_{p \times p}]$.
    
     \begin{equation}
\begin{split}
& x=A_{tot}y \\
 & prox_{\gamma G2}(y)=y-2A_{tot}^{T}(x-prox_{2^{-1}\gamma g_2}(x)) \\
 \end{split}
      \label{eq:prox4}
 \end{equation}
\begin{equation}
    \begin{split}
\text{prox}_{\gamma g_2}(x_g) &= \text{proj}_{||x - a||_{\mathcal{G},2}^* \le \epsilon \lambda_n}\\
 &= \left \{
\begin{array}{rcl}
x_g\text{, } ||x_g - a_g||_2\le \epsilon\lambda_n\\
\epsilon\lambda_n \frac{x_g - a_g}{||x_g-a_g||_2} + a_g\text{, }||x_g - a_g||_2 > \epsilon\lambda_n
\end{array}
\right.
\label{eq:p4}
\end{split}
\end{equation}
This operator is group entry-wise. In closed form:
\begin{equation}
\begin{split}
\text{prox}_{\gamma g_2}(x) &= \text{proj}_{||x - a||_{\mathcal{G},2}^* \le \lambda_n}\\
 &= \min(\frac{\epsilon\lambda_n}{||x_g - a_g||_2}, 1) (x_g - a_g) + a_g 
\label{gpu:p4}
\end{split}
\end{equation}
We replace this in~\eref{eq:prox4}.

\begin{table*}[t]

\caption{\fix{rewrite}The four proximal operators}

\label{tab:prox}

\begin{center}
\begin{tabular}{|c| c | }
\hline
$[\text{prox}_{\gamma f_1}(x)]^{(i)}_{j,k}$      & $\max((x_{j,k} - \gamma w_{j,k})  , 0) + \min(0, (x_{j,k}  + \gamma w_{j,k}))$ \\
 \hline 
$\text{prox}_{\gamma }(x_g)$      & $ x_g \max((1 - \frac{\epsilon\gamma}{||x_g||_2}), 0)$ \\
 \hline 
$[\text{prox}_{\gamma f_3}(x)]^{(i)}_{j,k}$       & $\min(\max(x_{j,k}-a_{j,k},-w_{j,k}\lambda_n), w_{j,k}\lambda_n) + a_{j,k}$  
\\
\hline 
$\text{prox}_{\gamma f_4}(x_g)$        & $\min(\frac{\epsilon\lambda_n}{||x_g - a_g||_2}, 1) (x_g - a_g) + a_g $
\\
\hline
\end{tabular}
\end{center}

\end{table*}

\cut{ 

We reformulate this into: 
\begin{equation}
\begin{split}
& \argmin\limits_{\Delta} f_1(\Delta_e) + f_2(\Delta_g) + g_1(\Delta)+g_2(\Delta)\\
     & \Delta=\Delta_e+\Delta_g \\
\end{split}
\end{equation}

Where: $f_1(\cdot)=||W_E\circ \cdot||_1$, $f_2(\cdot)=\epsilon ||\cdot||_{\mathcal{G_V},2}$, $g_1=\mathcal{I}_{||(1 \varoslash W_E)\circ\left(\Delta - \left(\mathcal{B}^*(\hat{\Sigma}_{d},\hat{\Sigma}_{c})\right)\right)||_{\infty} \le \lambda_n}$ and $g_2=\mathcal{I}_{||\Delta - \left(\mathcal{B}^*(\hat{\Sigma}_{d},\hat{\Sigma}_{c})\right)||^*_{\mathcal{G}_V,2}\le \epsilon\lambda_n}$.

$\mathcal{I}_C(\cdot)$ represents the indicator function of a convex set $C$ as $\mathcal{I}_C(x) = 0$ when $x \in C$. Otherwise $\mathcal{I}_C(x) = \infty$.

\paragraph{A. $f_1(\cdot) = ||W_E \circ||_1$}

\begin{equation}
    \begin{split}
\text{prox}_{\gamma f_1}(x) &= \text{prox}_{\gamma ||W\cdot||_1}(x) \\
&= \left \{
\begin{array}{rcl}
x_{j,k} - \gamma w_{j,k}\text{, }x_{j,k}^{(i)} > \gamma w_{j,k}\\
0\text{, }|x_{j,k}^{(i)}| \le \gamma w_{j,k}\\
x_{j,k}^{(i)} + \gamma w_{j,k}\text{, }x_{j,k}^{(i)} < - \gamma w_{j,k}
\end{array}
\right.  
\label{eq:p1}
\end{split}
\end{equation}
 Here $j,k = 1,\dots, p$. This is an entry-wise operator (i.e., the calculation of each entry is only related to itself). This can be written in closed form:
 \begin{equation}
\begin{split}
\text{prox}_{\gamma f_1}(x) &= \max((x_{j,k} - \gamma w_{j,k})  , 0) + \min(0, (x_{j,k}  + \gamma w_{j,k})) 
\label{gpu:p1}
\end{split}
\end{equation}
\paragraph{B. $f_2(\cdot) = \epsilon||\cdot||_{\mathcal{G},2}$}
\begin{equation}
\begin{split}
\text{prox}_{\gamma f_2}(x_g) &= \text{prox}_{\gamma ||\cdot||_{\mathcal{G},2}}(x_g)\\ &= \left \{
\begin{array}{rcl}
x_g - \epsilon\gamma\frac{x_g}{||x_g||_2}\text{, }||x_g||_2 > \epsilon \gamma\\
0\text{, }||x_g||_2 \le \epsilon \gamma
\end{array}
\right.  
\label{eq:p2}
\end{split}
\end{equation}
Here $g \in \mathcal{G_\mathcal{V}}$. This is a group entry-wise operator (computing a group of entries is not related to other groups). In closed form:
\begin{equation}
\begin{split}
\text{prox}_{\gamma f_2}(x_g) &= \text{prox}_{\epsilon \gamma ||\cdot||_{\mathcal{G},2}}(x_g)\\
&=  x_g \max((1 - \frac{\epsilon \gamma}{||x_g||_2}), 0)
\label{gpu:p2}
\end{split}
\end{equation}
\paragraph{C. $F_3(\cdot) =\mathcal{I}_{\{ ||(1 \varoslash W_E) \circ \left(\Delta -  B^*\right)||_{\infty} \le \lambda_n \}}$}
\begin{equation}
    \begin{split}
\text{prox}_{\gamma f_3}(x) &= \text{proj}_{||1\varobslash(W)\circ(x - a)||_{\infty} \le \lambda_n} \\
&= \left \{
\begin{array}{rcl}
x_{j,k}\text{, }|x_{j,k} - a_{j,k}|\le w_{j,k}\lambda_n\\
a_{j,k}+w_{j,k}\lambda_n\text{, }x_{j,k} > a_{j,k} + w_{j,k}\lambda_n\\
a_{j,k}-w_{j,k}\lambda_n\text{, }x_{j,k} < a_{j,k} - w_{j,k}\lambda_n
\end{array}
\right.
\label{eq:p3}
\end{split}
\end{equation}

  In closed form:
  \begin{equation}
\begin{split}
\text{prox}_{\gamma f_3}(x) &= \text{proj}_{||x - a||_{\infty} \le \lambda_n} \\
&= \min(\max(x_{j,k} - a_{j,k}, -w_{j,k}\lambda_n), w_{j,k}\lambda_n) + a_{j,k}
\label{gpu:p3}
\end{split}
\end{equation}

\paragraph{D. $G_2(\Delta_{tot}) = \mathcal{I}_{\{ ||L_{tot}(\Delta_{tot}) -  B^*||_{\mathcal{G},2}^* \le {\epsilon \lambda_n}\}}$}
\begin{equation}
    \begin{split}
\text{prox}_{\gamma f_4}(x_g) &= \text{proj}_{||x - a||_{\mathcal{G},2}^* \le \epsilon \lambda_n}\\
 &= \left \{
\begin{array}{rcl}
x_g\text{, } ||x_g - a_g||_2\le \epsilon\lambda_n\\
\epsilon\lambda_n \frac{x_g - a_g}{||x_g-a_g||_2} + a_g\text{, }||x_g - a_g||_2 > \epsilon\lambda_n
\end{array}
\right.
\label{eq:p4}
\end{split}
\end{equation}
This operator is group entry-wise. In closed form:
\begin{equation}
\begin{split}
\text{prox}_{\gamma f_4}(x) &= \text{proj}_{||x - a||_{\mathcal{G},2}^* \le \lambda_n}\\
 &= \min(\frac{\epsilon\lambda_n}{||x_g - a_g||_2}, 1) (x_g - a_g) + a_g 
\label{gpu:p4}
\end{split}
\end{equation}}

\subsection{Computational Complexity}
\label{sec:Complexity}

Another critical property of recent data generations is how the measured variables grow at an unprecedented scale.  On $p$ variables, there are $O(p^2)$ possible pairwise interactions we aim to learn from samples. For even a moderate $p$, searching for pairwise relationships is computationally expensive. $p$ in popular applications ranges from hundreds (e.g., \#brain regions) to tens of thousands (e.g., \#human genes). This challenge motivates us to make the design of \methodName build upon the more scalable class of elementary estimators.

We optimize \methodName through a proximal algorithm, while \methodNameE and \methodNameV through closed-form solutions.  The resulting computational cost for \methodName is $O(p^3)$, broken down into the following steps: 
\begin{itemize}[noitemsep]
    \item Estimating two covariance matrices: The computational complexity is $O(max(n_c, n_d)p^2)$.
    \item Backward Mapping: The element-wise soft-thresholding operation $[T_v(\cdot)]$ on the estimated covariance matrices, that costs $O(p^2)$. This is followed by matrix inversions
$[T_v(\cdot)]^{-1}$ to get the proxy backward mapping, that cost $O(p^3)$. 
\item Optimization: For \methodName, each operation in the proximal algorithm is group entry wise or entry wise, the resulting computational cost is $O(p^2)$. In addition, the matrix multiplications cost $O(p^3)$. For \methodNameE and \methodNameV versions, the solution is the element-wise soft-thresholding operation $S_{\lambda_n}$, that costs $O(p^2)$.
\end{itemize}

\subsection{Closed-form solutions for Only Edge(\methodNameE) Or Only Node Group Knowledge (\methodNameV)}
\label{subsec:closed}
In cases, where we do not have superposition structures in the differential graph estimation, we can estimate the target $\Delta$ through a closed form solution, making the method scalable to larger $p$. In detail:

\textbf{\methodNameE Only Edge-level Knowledge $W_E$:}~
If additional knowledge is only available in the form of edge weights, the \eref{eq:\methodName} reduces to : 
\begin{equation}
\begin{split}
    \label{eq:onlyW}
     &\argmin\limits_{\Delta}||W_E\circ\Delta||_1 \\
    &\text{subject to:  } \\
    &||(1 \varoslash W_E)\circ\left(\Delta - \left([T_v(\hat{\Sigma}_{d})]^{-1} - [T_v(\hat{\Sigma}_{c})]^{-1}\right)\right)||_{\infty} \le \lambda_n
\end{split}
\end{equation}
This has a closed form solution:
\begin{equation}
\begin{split}
\label{eq:solutionOnlyW}
    &\hat{\Delta} = S_{\lambda_n*W_E}\left(\mathcal{B}^*(\hat{\Sigma}_{d},\hat{\Sigma}_{c})\right)\\
    &[S_{\lambda_{ij}W_{E_{ij}}}(A)]_{ij} = \text{sign}(A_{ij})\max(|A_{ij}|-\lambda_{n}{W}_{E_{i,j}}, 0)
    \end{split}
\end{equation}

\textbf{\methodNameV Only Node Groups Knowledge $G_V$:}~If additional knowledge is only available in the form of groups of vertices $\mathcal{G}_V$, the \eref{eq:\methodName} reduces to : 
\begin{equation}
\begin{split}
    \label{eq:onlyG}
    &\argmin\limits_{\Delta}||\Delta||_{\mathcal{G}_V,2}  \\
    \text{Subject to:  } &||\Delta - \mathcal{B}^*(\hat{\Sigma}_{d},\hat{\Sigma}_{c})||^*_{\mathcal{G}_V,2}\le \lambda_n
\end{split}
\end{equation}

Here, we assume nodes not in any group as individual groups with cardinality$=1$.
The closed form solution is given by:
\begin{equation}
    \label{eq:solutionOnlyG}
    \hat{\Delta} = (S_{\mathcal{G}_V,\lambda_n}(\mathcal{B}^*(\hat{\Sigma}_{d},\hat{\Sigma}_{c})))
\end{equation}

Where $[S_{\mathcal{G},\lambda_n}(u)]_{g} = \max(||u_g||_2 -\lambda_n, 0) \frac{u_g}{||u_g||_2}$ and $\max$ is the element-wise max function.

Algorithm~\ref{alg:closed} shows the detailed steps of the \methodName estimator. Being non-iterative, the closed form solution helps \methodName achieve significant computational advantages. 
\begin{algorithm}[th]
   \caption[\methodName]{\methodNameE and \methodNameV}
   \label{alg:closed}
   {\footnotesize
\begin{algorithmic}[1]
    \INPUT Two data matrices $\Xb_c$ and $\Xb_d$. The weight matrix $W_E$ OR $\mathcal{G_V}$.
    \INPUT Hyper-parameter: $\lambda_n$ and $v$
    \OUTPUT $\Delta$
    \STATE Compute $[T_v(\hat{\Sigma}_{c})]^{-1}$ and $[T_v(\hat{\Sigma}_{d})]^{-1}$ from $\hat{\Sigma}_c$ and $\hat{\Sigma}_d$.
    \STATE Compute $\mathcal{B}^*(\hat{\Sigma}_{d},\hat{\Sigma}_{c})$.
     \STATE Compute \text{$\hat{\Delta}$  \eref{eq:solutionOnlyW}($W_E$ only)/  \eref{eq:solutionOnlyG}($\mathcal{G_V}$ only)}
    \OUTPUT $\hat{\Delta}$
\end{algorithmic}
}
\end{algorithm}

\section{GENERALIZING \methodName}
\label{sec:moregeneral}

\subsection{Generalizing \methodName to multiple $W_E$ and multiple groups $G_V$}
We generalize \methodName to multiple groups and multiple weights. We consider the case of two weight matrices $W_{E1}$ and $W_{E2}$, as well as two groups $\mathcal{G}_{V1}$ and $\mathcal{G}_{V2}$. In detail, we optimize the following objective:
\begin{equation}
\begin{split}
    \label{eq:methodName}
    &\argmin\limits_{\Delta}||W_{E1}\circ\Delta_{e1}||_1 + \epsilon_e||W_{E2}\circ\Delta_{e2}||_1 + \\
    &\epsilon_{g1}||\Delta_{g1}||_{\mathcal{G}_{V1},2} + \epsilon_{g2}||\Delta_{g2}||_{\mathcal{G}_{V2},2}  \\
    &\text{subject to:  } \\
    &||(1 \varoslash W_{e1})\circ\left(\Delta - \left([T_v(\hat{\Sigma}_{d})]^{-1} -  [T_v(\hat{\Sigma}_{c})]^{-1}\right)\right)||_{\infty} \\
    &\le \lambda_n\\
    &||(1 \varoslash W_{e2})\circ\left(\Delta - \left([T_v(\hat{\Sigma}_{d})]^{-1} -  [T_v(\hat{\Sigma}_{c})]^{-1}\right)\right)||_{\infty}\\
    &\le \lambda_n\\
    &||\Delta - \left([T_v(\hat{\Sigma}_{d})]^{-1} - [T_v(\hat{\Sigma}_{c})]^{-1}\right)||^*_{\mathcal{G}_{V1},2}\le \epsilon_1\lambda_n \\
    &||\Delta - \left([T_v(\hat{\Sigma}_{d})]^{-1} - [T_v(\hat{\Sigma}_{c})]^{-1}\right)||^*_{\mathcal{G}_{V2},2}\le \epsilon_2\lambda_n \\
    & \Delta = \Delta_{e1} + \Delta_{e2} + \Delta_{g1} + \Delta_{g2}
\end{split}
\end{equation}
To simplify notations, we add a new notation  $\Delta_{tot}:=[\Delta_{e1};\Delta_{e2}; \Delta_{g1};\Delta_{g2}]$, where $;$ denotes the row wise concatenation. We also add three operator notations  including $L_{e1}(\Delta_{tot})=\Delta_e$,$L_{e2}(\Delta_{tot})=\Delta_{e2}$, $L_g(\Delta_{tot})=\Delta_g$, $L_{g2}(\Delta_{tot})=\Delta_{g2}$ and $L_{tot}(\Delta_{tot})=\Delta_{e1} + \Delta_{e2} + \Delta_{g1} + \Delta_{g2}$.
The added operators are affine mappings:
$L_{e1}=A_{e1} \Delta_{tot}$, $L_{g1}=A_{g1} \Delta_{tot}$, $L_{e2}=A_{e2} \Delta_{tot}$, $L_{g2}=A_{g2} \Delta_{tot}$ and $L_{tot}=A_{tot} \Delta_{tot}$, where 
   $A_{e1}=[\boldsymbol{I}_{p \times p}\quad \boldsymbol{0}_{p \times p}\quad \boldsymbol{0}_{p \times p}\quad \boldsymbol{0}_{p \times p}]$, $A_{e2}=[\boldsymbol{0}_{p \times p}\quad \boldsymbol{I}_{p \times p}\quad \boldsymbol{0}_{p \times p}\quad \boldsymbol{0}_{p \times p}]$, $A_{g1}=[\boldsymbol{0}_{p \times p}\quad \boldsymbol{0}_{p \times p}\quad \boldsymbol{I}_{p \times p}\quad \boldsymbol{0}_{p \times p}]$, $A_{g2}=[\boldsymbol{0}_{p \times p}\quad \boldsymbol{0}_{p \times p}\quad \boldsymbol{0}_{p \times p}\quad \boldsymbol{I}_{p \times p}]$ and $A_{tot}=[\boldsymbol{I}_{p \times p}\quad \boldsymbol{I}_{p \times p}\quad \boldsymbol{I}_{p \times p}\quad \boldsymbol{I}_{p \times p}]$.

Algorithm~\ref{alg:pp_general} summarizes the Parallel Proximal algorithm \cite{combettes2011proximal,yang2014elementary2} we propose for optimizing~\eref{eq:\methodName}.
  More concretely in Algorithm~\ref{alg:pp_general}, we simplify the notations by denoting $\mathcal{B}^*(\hat{\Sigma}_{d},\hat{\Sigma}_{c}) := [T_v(\hat{\Sigma}_{d})]^{-1}- [T_v(\hat{\Sigma}_{c})]^{-1}$, 
and reformulate \eref{eq:\methodName} to the following equivalent and distributed formulation:
\begin{equation}
\begin{split}
     &\argmin\limits_{\Delta_{tot}}  F_1(\Delta_{{tot}_1})+F_2(\Delta_{{tot}_2})   + G_1(\Delta_{{tot}_3}) +
    G_2(\Delta_{{tot}_4})\\
    & +F_3(\Delta_{{tot}_5})+F_4(\Delta_{{tot}_6})   + G_3(\Delta_{{tot}_7}) +
    G_4(\Delta_{{tot}_8}) \\
   & \text{subject to: }\\ 
   &\Delta_{{tot}_1}=\Delta_{{tot}_2}=\Delta_{{tot}_3}=\Delta_{{tot}_4}\\
   & =\Delta_{{tot}_5}=\Delta_{{tot}_6}=\Delta_{{tot}_7}=\Delta_{{tot}_8}=\Delta_{tot}
    \end{split}
\end{equation}

Where $F_1(\cdot)=||W_{E1}\circ(L_{e1}(\cdot))||_1$, $G_1(\cdot)=\mathcal{I}_{||(1 \varoslash W_{E1})\circ\left(L_{tot}(\cdot) - \mathcal{B}^*(\hat{\Sigma}_{d},\hat{\Sigma}_{c})\right)||_{\infty} \le\lambda_n}$, $F_2(\cdot)=\epsilon_1||L_{g1}(\cdot)||_{\mathcal{G}_{V1},2}$,  $G_2(\cdot)=\mathcal{I}_{||L_{tot}(\cdot) -\mathcal{B}^*(\hat{\Sigma}_{d},\hat{\Sigma}_{c}) ||^*_{\mathcal{G}_{V1},2}\le \epsilon_1\lambda_n}$,

$F_3(\cdot)=\epsilon_e||W_{E2}\circ(L_{e2}(\cdot))||_1$, $G_3(\cdot)=\mathcal{I}_{||(1 \varoslash W_{E2})\circ\left(L_{tot}(\cdot) - \mathcal{B}^*(\hat{\Sigma}_{d},\hat{\Sigma}_{c})\right)||_{\infty} \le \epsilon_e\lambda_n}$, $F_4(\cdot)=\epsilon_2||L_{g2}(\cdot)||_{\mathcal{G}_{V2},2}$,  $G_4(\cdot)=\mathcal{I}_{||L_{tot}(\cdot) -\mathcal{B}^*(\hat{\Sigma}_{d},\hat{\Sigma}_{c}) ||^*_{\mathcal{G}_{V2},2}\le \epsilon_2\lambda_n}$. Here $\mathcal{I}_C(\cdot)$ represents the indicator function of a convex set $C$ denoting that $\mathcal{I}_C(x) = 0$ when $x \in C$ and otherwise $\mathcal{I}_C(x) = \infty$. 
 The detailed solution of each proximal operator is summarized in Table~\ref{tab:prox} and \sref{sec:moremeth}.

\begin{algorithm}[th]

   \caption[\methodName]{A Parallel Proximal Algorithm to optimize \methodName}
   \label{alg:pp_general}
   {
\begin{algorithmic}[1]
    \INPUT Two data matrices $\Xb_c$ and $\Xb_d$, The weight matrix $W_{E1},W_{E2}$ and $\mathcal{G}_{V1},\mathcal{G}_{V2}$. \\
    Hyperparameters: $\alpha$, $\epsilon_{e}$,$\epsilon_{1}$,$\epsilon_{2}$, $v$, $\lambda_n$ and $\gamma$. Learning rate: $0< \rho <2$. Max iteration number $iter$.
    \OUTPUT $\Delta$
    \STATE Compute $\mathcal{B}^*(\hat{\Sigma}_{d},\hat{\Sigma}_{c})$ from $\Xb_d$ and $\Xb_c$  
   \STATE Initialize $ A_{e1}=[\boldsymbol{I}_{p \times p}\quad \boldsymbol{0}_{p \times p}\quad \boldsymbol{0}_{p \times p}\quad \boldsymbol{0}_{p \times p}]$, $A_{e2}=[\boldsymbol{0}_{p \times p}\quad \boldsymbol{I}_{p \times p}\quad \boldsymbol{0}_{p \times p}\quad \boldsymbol{0}_{p \times p}]$
    $A_{g1}=[\boldsymbol{0}_{p \times p}\quad \boldsymbol{0}_{p \times p}\quad \boldsymbol{I}_{p \times p}\quad \boldsymbol{0}_{p \times p}]$, $A_{g2}=[\boldsymbol{0}_{p \times p}\quad \boldsymbol{0}_{p \times p}\quad \boldsymbol{0}_{p \times p}\quad \boldsymbol{I}_{p \times p}]$, 
     $A_{tot}=[\boldsymbol{I}_{p \times p}\quad \boldsymbol{I}_{p \times p}\quad \boldsymbol{I}_{p \times p}\quad \boldsymbol{I}_{p \times p}]$, 
         \STATE Initialize $\Delta_{{tot}_k}$ $\forall k \in \{1,\dots,8\}$ 
         \STATE Initialize $\Delta_{{tot}}=\dfrac{\sum_{k=1}^8\Delta_{{tot}_k}}{8}$
   \FOR{$i=0$ {\bfseries to} $iter$}
    \STATE$p^i_1 = \text{prox}_{8\gamma F_1}\Delta_{{tot}_1}^i$; $p^i_2 = \text{prox}_{8\gamma F_2}\Delta_{{tot}_2}^i$; $p^i_3 = \text{prox}_{8\gamma G_1}\Delta_{{tot}_3}^i$; $p^i_4 = \text{prox}_{8\gamma G_2}\Delta_{{tot}_4}^i$, $p^i_5 = \text{prox}_{8\gamma F_3}\Delta_{{tot}_5}^i$; $p^i_6 = \text{prox}_{8\gamma F_4}\Delta_{{tot}_6}^i$; $p^i_7 = \text{prox}_{8\gamma G_3}\Delta_{{tot}_7}^i$; $p^i_8 = \text{prox}_{8\gamma G_4}\Delta_{{tot}_8}^i$
    \STATE $p^i = \frac{1}{8}(\sum\limits_{j=1}^8 p_j^i)$
    \FOR{$j =1,\dots,8$}
    \STATE$\Delta_{{tot}_j}^{i+1} = \Delta_{{tot}}^i + \rho(2p^i - \Delta^i_{{tot}} - p_j^i)$
    \ENDFOR
    \STATE$\Delta_{tot}^{i+1} = \Delta^i_{tot} + \rho(p^i - \Delta_{tot}^i)$
   \ENDFOR
   \STATE $\hat{\Delta} = A_{tot}\Delta_{tot}^{iter}$
   \OUTPUT $\hat{\Delta}$
\end{algorithmic}
}
\end{algorithm}

\subsection{Strategy to Handle Mis-specifications}
\label{sec:0321-misspecify}

While our method can incorporate multiple sources of knowledge, we also address the case where we may have different $W_E$, with possible mis-specification. This refers to the situation when our prior knowledge is not correct for the task. Our downstream pairwise classification evaluation strategy provides a way to deal with possible mis-specified or noisy additional knowledge. When faced with this choice of multiple, potentially mis-specified, additional knowledge, we use a validation strategy. Here, we treat the average accuracy across validation sets as a way to select a good $W_E$. To evaluate the learnt differential structure in the absence of a ground truth graph, we utilize the non-zero edges from the estimated graph in  downstream classification. We tune over $\lambda_n$ and pick the best $\lambda_n$ with highest validation accuracy. Further, we use the validation accuracy obtained across the different available $W_E$ or group knowledge $\mathcal{G}_V$ to direct us towards the source of additional knowledge one that best fits the data. 

In Section~\ref{subsec:misspecify_bound}, we show \methodName's convergence rate under misspecification setting, i.e. when the prior knowledge is misspecified. 
Further, we empirically analyze model misspecification under the setting when we  have prior knowledge only about some edges. Figure~\ref{fig:proportion} compares the performance of \methodName when the complete $W_E$ is not known. We compare KDiffNet to baselines when varying the proportion of known entries in the W matrix. Here W is partly `mis-specified' because only some of the W entries are available to \methodName. This shows our method achieves a consistent better estimation than the baseline DIFFEE. The more entries of W we know, the better is the improvement.

\section{PROOFS ABOUT KEV NORM AND ITS DUAL NORM }
\label{sec:proofnorm}
\subsection{Proof for kEV Norm is a norm}
We reformulate kEV norm as\begin{equation}
    \mathcal{R}(\Delta) = ||W_E \circ \Delta_e||_1 + \epsilon||\Delta_g||_{\mathcal{G}_V,2}
\end{equation} to
\begin{equation}
    \mathcal{R}(\Delta) = \mathcal{R}_1(\Delta)+ \mathcal{R}_2(\Delta); \mathcal{R}_1(\cdot)=||W_E\circ \cdot||_1;  \mathcal{R}_2(\cdot)=\epsilon||\cdot||_{\mathcal{G}_{V,2}}
\end{equation}

\begin{theorem}
kEV Norm is a norm if and only if $\mathcal{R}_1(\cdot)$ and $\mathcal{R}_2(\cdot)$ are norms. 
\end{theorem}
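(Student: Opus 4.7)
The plan is to verify the three defining axioms of a norm---nonnegativity with positive definiteness, absolute homogeneity, and the triangle inequality---for $\mathcal{R} = \mathcal{R}_1 + \mathcal{R}_2$, checking each axiom in both directions of the biconditional.

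For the ``if'' direction (each $\mathcal{R}_i$ a norm implies $\mathcal{R}$ is), the argument is essentially bookkeeping. Absolute homogeneity passes through the sum directly, since
$$\mathcal{R}(c\Delta) = \mathcal{R}_1(c\Delta) + \mathcal{R}_2(c\Delta) = |c|\mathcal{R}_1(\Delta) + |c|\mathcal{R}_2(\Delta) = |c|\mathcal{R}(\Delta).$$
The triangle inequality follows by summing the two component triangle inequalities. Nonnegativity of the sum is immediate; for positive definiteness, if $\mathcal{R}(\Delta)=0$ then both nonnegative summands must vanish, so $\mathcal{R}_1(\Delta) = \mathcal{R}_2(\Delta) = 0$, and definiteness of either piece forces $\Delta = 0$.

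For the converse, I would observe that $\mathcal{R}_1(\cdot) = \|W_E\circ\cdot\|_1$ and $\mathcal{R}_2(\cdot) = \epsilon\|\cdot\|_{\mathcal{G}_V,2}$ always satisfy nonnegativity, absolute homogeneity, and the triangle inequality, because each is a composition of a bona fide norm ($\ell_1$ or group-$\ell_2$) with a linear Hadamard-scaling map, and these three properties are preserved under such compositions without any extra assumptions on $W_E$ or $\epsilon$. The only axiom that can fail for either $\mathcal{R}_i$ is positive definiteness, which fails precisely when $W_E$ has a zero entry (for $\mathcal{R}_1$) or $\epsilon = 0$ or the groups in $\mathcal{G}_V$ fail to cover all coordinates (for $\mathcal{R}_2$). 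The plan is to argue the contrapositive: if some $\mathcal{R}_i$ fails positive definiteness, exhibit a nonzero $\Delta$ with $\mathcal{R}(\Delta) = 0$, contradicting the hypothesis that $\mathcal{R}$ is a norm.

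The main obstacle lies in the converse direction: constructing the nonzero witness requires simultaneously driving both $\mathcal{R}_1(\Delta)$ and $\mathcal{R}_2(\Delta)$ to zero, which depends on the interplay between the zero-support of $W_E$ and the coverage structure of $\mathcal{G}_V$. Under the paper's standing positivity and covering assumptions on $W_E$, $\epsilon$, and $\mathcal{G}_V$, the null-sets of the two pieces reduce trivially to $\{0\}$, no such witness can exist, and the biconditional is established cleanly.
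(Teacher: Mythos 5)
Your ``if'' direction is correct and is essentially what the paper does: each $\mathcal{R}_i$ satisfies homogeneity and the triangle inequality, these pass through the sum, and definiteness of either summand forces $\Delta = 0$; the paper compresses all of this to ``sum of two norms is a norm,'' after invoking its Theorem on $||W_E \circ \cdot||_1$ and the condition $\epsilon > 0$.

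The gap is in the converse, and your own ``main obstacle'' sentence identifies it without resolving it. Your plan is: if some $\mathcal{R}_i$ fails positive definiteness, exhibit a nonzero $\Delta$ with $\mathcal{R}(\Delta) = 0$. But a definiteness failure of one summand does not produce such a witness: you would need a nonzero $\Delta$ lying simultaneously in the null set of $\mathcal{R}_1$ (matrices supported on the zero entries of $W_E$) and in the null set of $\mathcal{R}_2$ (matrices vanishing on all group-covered coordinates, or everything when $\epsilon = 0$), and these two null sets can intersect only at $0$. Concretely: take $\epsilon = 0$ with every entry of $W_E$ positive, so that $\mathcal{R} = \mathcal{R}_1$ is a norm while $\mathcal{R}_2 \equiv 0$ is not; or take $W_E$ with a single zero entry, $\epsilon > 0$, and groups $\mathcal{G}_V$ covering every coordinate, so that $\mathcal{R} \geq \mathcal{R}_2$ is a norm while $\mathcal{R}_1$ is only a seminorm. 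Hence the ``only if'' direction is false as a general statement about $W_E$, $\epsilon$, $\mathcal{G}_V$, and your closing appeal to the paper's standing positivity and coverage assumptions does not repair it: under those assumptions both $\mathcal{R}_1$ and $\mathcal{R}_2$ are norms to begin with, so the contrapositive hypothesis is vacuous and nothing about the equivalence has been established. For what it is worth, the paper's own proof silently drops this direction too---it only argues that when the entries of $W_E$ are nonzero and $\epsilon > 0$ both summands are norms and hence so is their sum---so the statement should really be read (and proved) as that one-directional claim, which your forward argument handles correctly.
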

\begin{proof}
By the following Theorem~\ref{theo:wnorm}, $R_1(\cdot)$ is a norm. If $\epsilon>0$, $R_2(\cdot)$ is a norm. Sum of two norms is a norm, hence kEV Norm is a norm. 
\end{proof}
\begin{lemma}
\label{le:1}
For kEV-norm, ${W_E}_{j,k}\ne 0$ equals to ${W_E}_{j,k}> 0$.
\end{lemma}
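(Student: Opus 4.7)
The plan is to derive this equivalence directly from how $W_E$ is introduced in \sref{sec:kevn}, where it is called a positive weight matrix whose entries represent edge-level evidence (e.g.\ spatial distances or inverse-strengths of prior interactions). I would first pin down the convention that "positive weight matrix" means ${W_E}_{j,k}\geq 0$ for all $(j,k)$; that is, no negative entries are ever used. This is the natural reading because these weights multiply magnitudes $|\Delta_{e_{j,k}}|$ inside $\mathcal{R}_1(\Delta_e) = \|W_E \circ \Delta_e\|_1$, and allowing a negative $W_E$ entry would immediately break positive-definiteness (and thus the triangle inequality) of $\mathcal{R}_1$, contradicting its use as a norm in the preceding theorem.

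Once this convention is in place, the equivalence is a one-step case-split. For the $(\Rightarrow)$ direction, assume ${W_E}_{j,k}\neq 0$; combined with the nonnegativity ${W_E}_{j,k}\geq 0$, the only possibility left is ${W_E}_{j,k}>0$. The $(\Leftarrow)$ direction is immediate since ${W_E}_{j,k}>0$ trivially rules out ${W_E}_{j,k}=0$.

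I do not expect any real obstacle: the lemma is essentially a bookkeeping statement clarifying that the two phrasings are interchangeable whenever we invoke properties of $\mathcal{R}_1$ (or of the dual norm computation in \eref{eq:kev-dual}, which involves $1 \varoslash W_E$ and hence implicitly requires strict positivity on the support of $W_E$). The one thing I would be careful about is stating explicitly where the nonnegativity assumption on $W_E$ comes from, since the paper's wording "positive weight matrix" is slightly ambiguous — making that convention explicit is really all the lemma is doing, and it justifies the later convenience of writing conditions in terms of ``${W_E}_{j,k}>0$'' whenever a non-zero entry is mentioned.
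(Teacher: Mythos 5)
Your overall conclusion is fine, but you reach it by a different route than the paper, and one of your supporting claims is actually false. The paper's proof does \emph{not} invoke the convention that $W_E$ is entrywise nonnegative; instead it observes a sign-invariance property of the norm itself: since $|{W_E}_{j,k}\,\Delta_{j,k}| = |(-{W_E}_{j,k})\,\Delta_{j,k}|$, a negative entry ${W_E}_{j,k}<0$ produces exactly the same value of $\|W_E\circ\Delta\|_1$ as the positive entry $-{W_E}_{j,k}>0$. Hence, for the purposes of the kEV-norm, any nonzero entry may be taken to be positive without loss of generality, and the condition ``${W_E}_{j,k}\ne 0$'' appearing in the norm characterization (Theorem~\ref{theo:wnorm}) is interchangeable with ``${W_E}_{j,k}>0$'' even if a user supplies signed weights. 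Your argument instead assumes nonnegativity as a definitional convention and then does the trivial case split; that is defensible given the paper's phrase ``positive weight matrix,'' but it somewhat begs the question the lemma is meant to settle, namely why the weaker hypothesis ``$\ne 0$'' suffices in the surrounding statements.

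The concrete error is your assertion that ``allowing a negative $W_E$ entry would immediately break positive-definiteness (and thus the triangle inequality) of $\mathcal{R}_1$.'' It would not: $\mathcal{R}_1(x)=\|W_E\circ x\|_1=\sum_{j,k}|{W_E}_{j,k}||x_{j,k}|$ depends only on the absolute values of the entries of $W_E$, so a negative nonzero entry leaves $\mathcal{R}_1$ a perfectly valid norm --- this is precisely the observation the paper's one-line proof is making. What a negative entry \emph{would} complicate is the interpretation of the weights as prior evidence and expressions such as $1\varoslash W_E$ in the dual-norm constraint, which is a modeling issue rather than a failure of the norm axioms. If you keep your convention-based route, drop that justification and simply cite the paper's stated assumption on $W_E$; otherwise, adopt the sign-flip argument, which proves the equivalence without any such assumption.
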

\begin{proof}
If ${W_E}_{j,k} < 0$, then $|{W_E}_{j,k} \Delta_{j,k}| = |-{W_E}_{j,k} \Delta_{j,k}|$. Notice that $-{W_E}_{j,k} > 0$.
\end{proof}
\begin{theorem}
\label{theo:wnorm}
$\mathcal{R}_1(\cdot)=||W_E\circ \cdot||_1$ is a norm if and only if $\forall 1\ge j,k\le p, {W_E}_{jk}\ne 0$.
\end{theorem}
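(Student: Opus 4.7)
The plan is to verify the four norm axioms (non-negativity, definiteness, absolute homogeneity, triangle inequality) and pin down exactly where the hypothesis ${W_E}_{jk}\ne 0$ is needed. First I would invoke Lemma~\ref{le:1} to assume without loss of generality that each ${W_E}_{jk}$ is positive, since replacing any negative weight by its absolute value leaves $\mathcal{R}_1$ unchanged. This reduces the statement to: $\mathcal{R}_1(\Delta)=\sum_{j,k} {W_E}_{jk}\, |\Delta_{jk}|$ is a norm iff every ${W_E}_{jk}>0$.

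For the ``if'' direction, I would observe that the map $\Delta \mapsto W_E\circ\Delta$ is linear, so $\mathcal{R}_1(\alpha\Delta)=|\alpha|\,\mathcal{R}_1(\Delta)$ follows from absolute homogeneity of $\ell_1$, and the triangle inequality $\mathcal{R}_1(\Delta+\Delta')\le\mathcal{R}_1(\Delta)+\mathcal{R}_1(\Delta')$ follows by applying the scalar triangle inequality entry-wise and summing (all ${W_E}_{jk}$ being positive lets us pull them outside the absolute values). Non-negativity is immediate. The only axiom requiring the strict positivity of $W_E$ is definiteness: if ${W_E}_{jk}>0$ for all $(j,k)$, then $\mathcal{R}_1(\Delta)=0$ forces ${W_E}_{jk}|\Delta_{jk}|=0$, hence $\Delta_{jk}=0$, for every entry.

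For the ``only if'' direction, I would argue by contrapositive: suppose some ${W_E}_{j_0 k_0}=0$. Define $\Delta^{(0)}$ to be the matrix with a $1$ in position $(j_0,k_0)$ and zeros elsewhere. Then $\Delta^{(0)}\ne 0$ but $\mathcal{R}_1(\Delta^{(0)})={W_E}_{j_0 k_0}\cdot 1 = 0$, violating definiteness. Hence $\mathcal{R}_1$ fails to be a norm.

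There is no real obstacle here; the argument is essentially bookkeeping on the $\ell_1$ norm twisted by a diagonal (entrywise) reweighting. The one thing worth stating explicitly in the write-up is the reduction via Lemma~\ref{le:1}, so the reader is not distracted by the possibility of negative weights. Once this reduction is in hand, the four axioms fall out line by line, and the counterexample $\Delta^{(0)}$ handles the converse.
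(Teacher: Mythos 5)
Your proof is correct and follows essentially the same route as the paper: reduce to positive weights via Lemma~\ref{le:1} and verify the four norm axioms term by term, with strict positivity of the entries of $W_E$ used only for definiteness. In fact your write-up is slightly more complete than the paper's, since the paper states the result as an ``if and only if'' but its proof only checks the forward direction; your contrapositive argument with the indicator matrix $\Delta^{(0)}$ supplies the missing converse, and it is correct.
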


\begin{proof}
To prove the $\mathcal{R}_1(\cdot)=||W_E\circ \cdot||_1$ is a norm,   we need to prove that $f(x) = ||W\circ x||_1$ is a norm function if $W_{i,j} > 0$.
1. $f(ax)= ||aW \circ x||_1 = |a|||W\circ x||_1 = |a|f(x)$.
2. $f(x +y) =||W\circ(x+y)||_1 = ||W\circ x +W\circ y||_1\le ||W\circ x||_1 + ||W\circ y||_1 = f(x) + f(y)$.
3. $f(x) \ge 0$.
4. If $f(x) = 0$, then $\sum |W_{i,j}x_{i,j}| = 0$. Since $W_{i,j}\neq 0$, $x_{i,j} =0$. Therefore, $x = 0$.
Based on the above, $f(x)$ is a norm function.
Since summation of norm is still a norm function, $\mathcal{R}_1(\cdot)$ is a norm function.
\end{proof}

\subsection{kEV Norm is a decomposable norm}
We show that kEV Norm is a decomposable norm within a certain subspace, with the following structural assumptions of the true parameter $\Delta^*$:\\

\textbf{(EV-Sparsity):} The 'true' parameter of  ${\Delta}^*$ can be decomposed into two clear structures--$\{ {\Delta_e}^*$ and ${\Delta_g}^* \}$. ${\Delta_e}^*$ is exactly sparse with $s_E$ non-zero entries indexed by a support set $S_E$ and ${\Delta_g}^*$ is exactly sparse with $\sqrt{s_G}$ non-zero groups with atleast one entry non-zero indexed by a support set $S_V$. $S_E\bigcap S_V = \emptyset$. All other elements  equal to $0$ (in $(S_E\bigcup S_V)^c$). \\

\begin{definition}(EV-subspace)
\label{def:m}
\begin{equation}
   \mathcal{M}(S_E\bigcup S_V) = \{ \theta_{j} = 0| \forall j \notin S_E\bigcup S_V\} 
\end{equation}
\end{definition}

\begin{theorem}
\label{theo:decomp}
 kEV Norm is a decomposable norm with respect to $\mathcal{M}$ and $\bar{\mathcal{M}}^{\perp}$
\end{theorem}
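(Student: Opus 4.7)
The plan is to verify the decomposability condition $\mathcal{R}(u+v) = \mathcal{R}(u) + \mathcal{R}(v)$ for all $u \in \mathcal{M}$ and $v \in \bar{\mathcal{M}}^{\perp}$ by exploiting the fact that kEV-norm is built from two pieces (weighted-$\ell_1$ and group-$\ell_2$) whose supports, under the \textbf{(EV-Sparsity)} assumption, are disjoint. First I would pin down the subspaces: from Definition~\ref{def:m}, $\mathcal{M}(S_E \cup S_V)$ consists of matrices supported only on $S_E \cup S_V$, and its orthogonal complement $\bar{\mathcal{M}}^{\perp}$ consists of matrices supported only on $(S_E \cup S_V)^c$. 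By the \textbf{(EV-Sparsity)} assumption, $S_E$ (the edge support) and $S_V$ (the union of non-zero groups in $\mathcal{G}_V$) are disjoint, so every element of $\mathcal{M}$ admits a unique split into an $S_E$-supported edge part and an $S_V$-supported group part.

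Next I would write out the two components explicitly. For any $u \in \mathcal{M}$, set $u_e := u|_{S_E}$ and $u_g := u|_{S_V}$, so that $u = u_e + u_g$ is the superposition demanded by the kEV-norm definition. For any $v \in \bar{\mathcal{M}}^{\perp}$, set $v_e$ and $v_g$ by splitting the support of $v$ (which lies in $(S_E \cup S_V)^c$) in any compatible way — for example, place $v$ entirely in $v_g$ if $(S_E \cup S_V)^c$ intersects only whole groups, or split $v$ so that entries inside incomplete groups go to $v_g$ and the rest to $v_e$. The key observation to verify is that the infimum defining the kEV-norm of $u+v$ is attained by this natural split $(u_e+v_e, u_g+v_g)$; this follows because $u_e, v_e, u_g, v_g$ live on pairwise disjoint coordinate sets.

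Then I would apply the two standard decomposability facts on disjoint supports. Since $u_e$ and $v_e$ have disjoint supports,
\begin{equation}
\|W_E \circ (u_e + v_e)\|_1 = \|W_E \circ u_e\|_1 + \|W_E \circ v_e\|_1,
\end{equation}
because $\|\cdot\|_1$ is entrywise. Similarly, since the non-zero groups of $u_g$ (those indexed by $S_V$) are disjoint from the groups that contain any support of $v_g$,
\begin{equation}
\|u_g + v_g\|_{\mathcal{G}_V,2} = \sum_{k:\,g'_k \cap S_V \neq \emptyset} \|u_{g'_k}\|_2 + \sum_{k:\,g'_k \cap S_V = \emptyset} \|v_{g'_k}\|_2 = \|u_g\|_{\mathcal{G}_V,2} + \|v_g\|_{\mathcal{G}_V,2}.
\end{equation}
Adding these two identities (the second scaled by $\epsilon$) gives $\mathcal{R}(u+v) = \mathcal{R}(u) + \mathcal{R}(v)$.

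The main obstacle I expect is formal: making sure the decomposition $\Delta = \Delta_e + \Delta_g$ that appears inside $\mathcal{R}$ is taken consistently, since this split is not unique in general and the definition of kEV-norm implicitly minimizes over it. The \textbf{(EV-Sparsity)} condition $S_E \cap S_V = \emptyset$ is exactly what rules out the ambiguity on $\mathcal{M}$, while on $\bar{\mathcal{M}}^{\perp}$ one simply has to check that no other choice of split could give a smaller value — which follows from positivity of the two summands and the disjointness of their supports. Once this bookkeeping is in place, the identities above deliver the claim directly.
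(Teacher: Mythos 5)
Your proposal is correct and follows essentially the same route as the paper's own proof: split $u$ and $v$ into their edge and group components and use the additivity of the weighted $\ell_1$ norm and of the group-$\ell_2$ norm over the disjoint supports $S_E\cup S_V$ and $(S_E\cup S_V)^c$. In fact you are more careful than the paper, which silently fixes the decomposition $\Delta=\Delta_e+\Delta_g$ and writes the two-line identity without addressing the non-uniqueness of the split that you flag.
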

\begin{proof}

Assume $u\in \mathcal{M}$ and $v \in \bar{\mathcal{M}}^{\perp}$, $\mathcal{R}(u+v)=||W_E\circ (u_e + v_e)||_1+\epsilon||(u_g + v_g)||_{G_V,2} = ||W_E\circ u_e||_1+||W_E\circ v_e||_1 + \epsilon||u_g||_{G_V,2}+\epsilon||v_g||_{G_V,2} = \mathcal{R}(u)+\mathcal{R}(v)$. 
Therefore, kEV-norm is a decomposable norm with respect to the subspace pair $(\mathcal{M},\bar{\mathcal{M}}^{\perp})$.
\end{proof}
\subsection{Proofs of Dual Norms for kEV Norm}
\begin{theorem}
Dual Norm of kEV Norm is $\mathcal{R}^*(u) = \max(||(1 \varoslash W_E)\circ u||_{\infty}, \dfrac{1}{\epsilon}||u||^*_{\mathcal{G}_V,2})$.
\end{theorem}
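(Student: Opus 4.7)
The plan is to recognize the kEV-norm as an infimal convolution of two simpler norms and then invoke the standard duality: the dual of a sum of two norms (in the infimal-convolution sense) equals the pointwise maximum of their duals. Concretely, I would set $\mathcal{R}_1(\Delta) := \|W_E \circ \Delta\|_1$ and $\mathcal{R}_2(\Delta) := \epsilon\|\Delta\|_{\mathcal{G}_V,2}$, interpret the definition $\mathcal{R}(\Delta) = \|W_E\circ\Delta_e\|_1 + \epsilon\|\Delta_g\|_{\mathcal{G}_V,2}$ with $\Delta = \Delta_e + \Delta_g$ as
\[
\mathcal{R}(\Delta) \;=\; \inf_{\Delta = \Delta_e + \Delta_g}\bigl[\mathcal{R}_1(\Delta_e) + \mathcal{R}_2(\Delta_g)\bigr],
\]
and observe that $\mathcal{R}_1, \mathcal{R}_2$ are each genuine norms (the first by Theorem~\ref{theo:wnorm}, the second because $\|\cdot\|_{\mathcal{G}_V,2}$ is a norm and $\epsilon>0$).

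First I would compute the two component dual norms directly from the definition $\mathcal{R}_i^*(u) = \sup_{\mathcal{R}_i(v)\le 1}\langle u, v\rangle$. For $\mathcal{R}_1$, the change of variables $\tilde v = W_E \circ v$ (well-defined since the entries of $W_E$ are strictly positive) sends the unit ball of $\mathcal{R}_1$ to the $\ell_1$-unit ball, and
\[
\mathcal{R}_1^*(u) \;=\; \sup_{\|\tilde v\|_1\le 1}\bigl\langle (1\varoslash W_E)\circ u,\; \tilde v\bigr\rangle \;=\; \|(1\varoslash W_E)\circ u\|_\infty.
\]
For $\mathcal{R}_2$, the scaling rule $(\alpha\|\cdot\|)^* = \tfrac{1}{\alpha}\|\cdot\|^*$ immediately gives $\mathcal{R}_2^*(u) = \tfrac{1}{\epsilon}\|u\|^*_{\mathcal{G}_V,2}$.

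Next I would show that the dual of the infimal convolution of two norms is the maximum of their duals. Using $\mathcal{R}(\Delta)\le 1$ iff some decomposition $\Delta = \Delta_e + \Delta_g$ satisfies $\mathcal{R}_1(\Delta_e) + \mathcal{R}_2(\Delta_g) \le 1$,
\[
\mathcal{R}^*(u) \;=\; \sup_{\mathcal{R}_1(\Delta_e)+\mathcal{R}_2(\Delta_g)\le 1}\!\bigl(\langle u,\Delta_e\rangle + \langle u,\Delta_g\rangle\bigr).
\]
Splitting the budget as $\mathcal{R}_1(\Delta_e) \le a$, $\mathcal{R}_2(\Delta_g) \le 1-a$ for $a\in[0,1]$ and optimizing each piece separately yields
\[
\mathcal{R}^*(u) \;=\; \sup_{a\in[0,1]}\!\bigl(a\mathcal{R}_1^*(u) + (1-a)\mathcal{R}_2^*(u)\bigr) \;=\; \max\bigl(\mathcal{R}_1^*(u),\mathcal{R}_2^*(u)\bigr),
\]
which is exactly the claimed formula. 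The matching lower bound is obtained by taking, for each $i$, a near-optimizer $v_i$ of $\mathcal{R}_i^*(u)$ with $\mathcal{R}_i(v_i)\le 1$ and setting the other piece to zero.

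The only real subtlety is making the infimal-convolution interpretation of $\mathcal{R}$ explicit — the paper writes $\mathcal{R}$ in terms of a split $\Delta = \Delta_e + \Delta_g$ but the dual formula is stated as a function of a single argument $u$, so one must argue that minimizing over splits is the right convention and that the infimum is attained (or at least approached) whenever $\Delta$ lies in the sum of the two unit balls. Everything else is a routine duality computation; no regularity conditions beyond positivity of $\epsilon$ and of the entries of $W_E$ are required, which is already covered by the norm conditions established in Theorem~\ref{theo:wnorm}.
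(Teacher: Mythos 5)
Your proposal is correct and follows essentially the same route as the paper: split $\mathcal{R}$ into the weighted $\ell_1$ part $\|W_E\circ\cdot\|_1$ and the scaled group norm $\epsilon\|\cdot\|_{\mathcal{G}_V,2}$, compute each component dual ($\|(1\varoslash W_E)\circ u\|_\infty$ and $\tfrac{1}{\epsilon}\|u\|^*_{\mathcal{G}_V,2}$), and show the dual of the superposition norm is the maximum of the two. The only difference is one of completeness rather than method: you make the infimal-convolution reading of $\mathcal{R}(\Delta)$ over splits $\Delta=\Delta_e+\Delta_g$ and the matching lower bound explicit via budget-splitting, whereas the paper's proof only spells out the upper-bound chain through a H\"older-type inequality and leaves the attainment direction implicit.
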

\begin{proof}
Suppose $\mathcal{R}(\theta) = \sum\limits_{\alpha \in I} c_{\alpha}\mathcal{R}_{\alpha}(\theta_{\alpha})$, where $\sum\limits_{\alpha \in I} \theta_{\alpha} = \theta$. Then the dual norm $\mathcal{R}^{*}(\cdot)$ can be derived by the following equation. 
\begin{equation}
    \begin{split}
        \mathcal{R}^*(u) &= \sup\limits_{\theta}\frac{<\theta,u>}{\mathcal{R(\theta)}}\\
        &= \sup\limits_{\theta_{\alpha}} \frac{\sum\limits_{\alpha}<u,\theta_{\alpha}>}{\sum\limits_{\alpha}c_{\alpha}\mathcal{R}_{\alpha}(\theta_{\alpha})}\\
        &= \sup\limits_{\theta_{\alpha}} \frac{\sum\limits_{\alpha}<u/c_{\alpha},\theta_{\alpha}>}{\sum\limits_{\alpha}\mathcal{R}_{\alpha}(\theta_{\alpha})}\\
        &\le \sup\limits_{\theta_{\alpha}} \frac{\sum\limits_{\alpha}\mathcal{R}_{\alpha}^*(u/c_{\alpha})\mathcal{R}(\theta_{\alpha})}{\sum\limits_{\alpha}\mathcal{R}_{\alpha}(\theta_{\alpha})}\\
        &\le \max\limits_{\alpha \in I}\mathcal{R}_{\alpha}^*(u)/c_{\alpha}.
    \end{split}
\end{equation}
Connecting   $\mathcal{R}_1(\cdot)=||W_E\cdot||_1$ and $\mathcal{R}_2(\cdot)=\epsilon||\cdot||_{\mathcal{G}_V}$. By the following Theorem~\ref{theo:wdual}, $\mathcal{R}_1^*(u) = ||(1 \varoslash W_E)\circ u||_\infty $.  From \cite{negahban2009unified}, for $\mathcal{R}_2(\theta_2)= ||\Delta||_{\mathcal{G}_V,2}$, the dual norm is given by

\begin{eqnarray}
{\|v\|_{\GROUP, \SUPERGVEC^*}} &=& \max_{t = 1, \ldots, \numgroup} \|v\|
_{\gqpar^*_t}
\end{eqnarray}

where $\displaystyle
\frac{1}{\gqpar_t} + \frac{1}{\gqpar^*_t} = 1 $ are dual exponents.
where $\numgroup$ denotes the number of groups. As special cases of this general duality relation, this leads to a block $(\infty,
2)$ norm as the dual.

 Hence, $\mathcal{R}_2^*(u) = ||u||^*_{\mathcal{G}_V,2} $. 
Hence, the dual norm of kEV norm is $\mathcal{R}^*(u) = \max(||(1 \varoslash W_E) \circ     u||_{\infty},\dfrac{||u||^{*}_{\mathcal{G_V},2}}{\epsilon}) $.
\end{proof}

\begin{theorem}
\label{theo:wdual}
The dual norm of $||W_E\circ \cdot ||_1$ is:
\begin{equation}
   \mathcal{R}_1^*(\cdot) = ||(1 \varoslash W_E) \circ     u||_{\infty}
\end{equation}
\end{theorem}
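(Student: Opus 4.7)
The plan is to work directly from the definition of the dual norm and perform a change of variables that reduces the weighted $\ell_1$ case to the standard $\ell_1$--$\ell_\infty$ duality. By definition,
\begin{equation*}
\mathcal{R}_1^*(u) \;=\; \sup_{v \neq 0}\,\frac{\langle u, v\rangle}{\|W_E\circ v\|_1} \;=\; \sup_{v\neq 0}\,\frac{\sum_{j,k} u_{jk}\, v_{jk}}{\sum_{j,k}|W_{E_{jk}}\, v_{jk}|}.
\end{equation*}
By Lemma~\ref{le:1} we may assume $W_{E_{jk}} > 0$ for all $j,k$, so the substitution $w_{jk} := W_{E_{jk}}\, v_{jk}$ is a bijection on $\R^{p\times p}\setminus\{0\}$.

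After this substitution, $v_{jk} = w_{jk}/W_{E_{jk}}$, so the numerator becomes $\sum_{j,k} (u_{jk}/W_{E_{jk}})\, w_{jk} = \langle (1\varoslash W_E)\circ u,\, w\rangle$, and the denominator becomes $\|w\|_1$. Hence
\begin{equation*}
\mathcal{R}_1^*(u) \;=\; \sup_{w\neq 0}\,\frac{\langle (1\varoslash W_E)\circ u,\, w\rangle}{\|w\|_1}
\;=\; \bigl\|(1\varoslash W_E)\circ u\bigr\|_\infty,
\end{equation*}
where the last equality invokes the standard fact that the dual norm of $\|\cdot\|_1$ is $\|\cdot\|_\infty$ (an easy Hölder/sign argument: the supremum is attained by $w$ equal to a signed indicator of the maximizing coordinate of $(1\varoslash W_E)\circ u$).

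There is no real obstacle here; the only subtle point is ensuring the change of variables is legitimate, which is exactly what Lemma~\ref{le:1} provides by ruling out zero weights. If one wanted to be careful about edge cases, one would remark that in the opposite direction one can exhibit a maximizer: picking $(j^*,k^*)\in\arg\max_{j,k}|u_{jk}/W_{E_{jk}}|$ and setting $v_{j^*k^*} = \mathrm{sign}(u_{j^*k^*})/W_{E_{j^*k^*}}$ with all other entries zero yields $\langle u,v\rangle/\|W_E\circ v\|_1 = |u_{j^*k^*}|/W_{E_{j^*k^*}} = \|(1\varoslash W_E)\circ u\|_\infty$, establishing the supremum is attained and completing both inequalities.
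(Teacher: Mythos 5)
Your proof is correct, and it packages the argument somewhat differently from the paper. The paper argues directly: for $\|W_E\circ v\|_1\le 1$ it bounds $\langle u,v\rangle \le \sum_{k}|v_k||u_k| = \sum_k |W_{E_k} v_k|\,|u_k/W_{E_k}| \le \bigl(\sum_k |W_{E_k}v_k|\bigr)\max_k |u_k/W_{E_k}| \le \|(1\varoslash W_E)\circ u\|_\infty$, i.e.\ the same algebraic identity you exploit, but presented as a H\"older-type chain of inequalities and, as written, it only establishes the upper-bound direction $\mathcal{R}_1^*(u)\le \|(1\varoslash W_E)\circ u\|_\infty$, leaving the matching lower bound implicit. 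Your route instead substitutes $w=W_E\circ v$ (legitimate precisely because the entries of $W_E$ are nonzero, which is what the paper's lemma on $W_E$ guarantees), which reduces the claim to the textbook $\ell_1$--$\ell_\infty$ duality, and you additionally exhibit an explicit maximizer at the largest coordinate of $(1\varoslash W_E)\circ u$, so both inequalities are established and the supremum is shown to be attained. In short: the paper's version is a one-pass inequality chain; yours is a change-of-variables reduction that is slightly cleaner and strictly more complete, since it closes the equality rather than only the upper bound.
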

For $\mathcal{R}_1(\cdot)=||W_E\circ||_1$, the dual norm is given by:
 \begin{equation}
     \begin{split}
         &\sup_{||W\circ u||_1 \leq 1} u^T x \\
         &\leq \sup_{||W\circ u||_1 \leq 1}\sum_{k=1}^p|u_k||x_k|\\
         &=\sup_{||W\circ u||_1 \leq 1}\sum_{k=1}^p \dfrac{|u_k||x_k||w_k|}{|w_k|}\\
         &=\sup_{||W\circ u||_1 \leq 1}\sum_{k=1}^p |w_k u_k|\Big|\dfrac{x_k}{w_k}\Big|\\
         &\leq\sup_{||W\circ u||_1 \leq 1}\left(\sum_{k=1}^p |w_k u_k|\right)\max_{k=1,\dots,p}\Big|\dfrac{x_k}{w_k}\Big|\\
         &=\Big|\Big|\dfrac{x}{w}\Big|\Big|_\infty
     \end{split}
 \end{equation}

\section{BACKGROUND OF PROXY BACKWARD MAPPING AND THEOREMS OF  $T_v$  BEING INVERTIBLE}
\label{seca:backward}

One key insight of differential GGM is that the density ratio of two Gaussian distributions is naturally an exponential-family distribution (see proofs in~\sref{sec:backmapM}).  The differential network $\Delta$ is one entry of the canonical parameter for this distribution. The MLE solution of estimating vanilla (i.e. no sparsity and not high-dimensional) graphical model in an exponential family distribution can be expressed as a backward mapping that computes the target model parameters from certain given moments.   When using vanilla MLE to learn the exponential distribution about differential GGM (i.e., estimating canonical parameter), the backward mapping of $\Delta$ can be easily inferred from the two sample covariance matrices using $(\hat{\Sigma}_d^{-1} - \hat{\Sigma}_c^{-1})$ (Section~\ref{sec:backmapM}).
Even though this backward mapping has a simple closed form, it is not well-defined when high-dimensional  because $\hat{\Sigma}_c$ and $\hat{\Sigma}_d$ are rank-deficient (thus not invertible) when $p>n$. Using \eref{eq:eegm} to estimate $\Delta$, Wang et. al. \cite{wang2017fastchange}  proposed the {\diffee} estimator for EE-based differential GGM estimation and used only the sparsity assumption on $\Delta$. This study proposed a proxy backward mapping as $\hat{\theta}_n = [T_v(\hat{\Sigma}_d)]^{-1} - [T_v(\hat{\Sigma}_c)]^{-1}$. Here $[T_v(A)]_{ij}:= \rho_v(A_{ij})$ and $\rho_v(\cdot)$ is chosen as a soft-threshold function.

Essentially the MLE solution of estimating vanilla graphical model in an exponential family distribution can be expressed as a backward mapping that computes the target model parameters from certain given moments. For instance, when learning Gaussian GM with vanilla MLE, the backward mapping is $\hat{\Sigma}^{-1}$ that estimates $\Omega$ from the sample covariance matrix (moment) $\hat{\Sigma}$. However, this backward mapping is normally not well-defined in high-dimensional settings. In the case of GGM,  when given the sample covariance $\hat{\Sigma}$, we cannot just compute the vanilla MLE solution as $[\hat{\Sigma}]^{-1}$ when high-dimensional since $\hat{\Sigma}$ is rank-deficient when $p>n$. Therefore Yang et al. \cite{yang2014elementary} proposed to use carefully constructed proxy backward maps for \eref{eq:eegm} that are both available in closed-form, and well-defined in high-dimensional settings for exponential GM models. For instance,  $[T_v(\hat{\Sigma})]^{-1}$  is the proxy backward mapping \cite{yang2014elementary} used for GGM.

\subsection{Backward mapping for an exponential-family distribution:}
\label{seca:backward}
\label{subs:bm}

The solution of vanilla graphical model MLE can be expressed
as a backward mapping\citep{wainwright2008graphical} for an exponential family distribution. It estimates the model parameters (canonical parameter $\theta$) from certain (sample) moments. We provide detailed explanations about backward mapping of exponential families,  backward mapping for Gaussian special case and backward mapping for differential network of GGM in this section. 

\textbf{Backward mapping:} Essentially the vanilla graphical model MLE can be expressed as a backward mapping that computes the model parameters corresponding to some given moments in an exponential family distribution. For instance, in the case of learning GGM with vanilla MLE, the backward mapping is $\hat{\Sigma}^{-1}$ that estimates $\Omega$ from the sample covariance (moment) $\hat{\Sigma}$. 

~Suppose a random variable $X \in \RR^p$ follows the exponential family distribution:
\begin{equation}
\P(X;\theta) = h(X)\text{exp}\{ <\theta, \phi(\theta)> - A(\theta) \}
\label{exp}
\end{equation}
Where $\theta \in \Theta \subset \RR^d$ is the canonical parameter to be estimated and $\Theta$ denotes the parameter space. $\phi(X)$ denotes the sufficient statistics as a feature mapping function $\phi : \RR^p \to \RR^d$, and $A(\theta)$ is the log-partition function. We then define mean parameters $v$ as the expectation of  $\phi(X)$: $v(\theta) := \E[\phi(X)]$, which can be the first and second moments of the sufficient statistics $\phi(X)$ under the exponential family distribution. The set of all possible moments by the moment polytope:
\begin{equation}
\mathcal{M} = \{ v | \exists p \text{ is a distribution s.t. } \E_p[\phi(X)] = v\}
\end{equation}
Mostly, the graphical model inference involves the task of computing moments $v(\theta) \in \mathcal{M}$ given the canonical parameters $\theta \in \encircle{H}$.
We denote this computing as \textbf{forward mapping} :
\begin{equation}
\mathcal{A} : \encircle{H} \to \mathcal{M} 
\end{equation}

The learning/estimation of graphical models involves the task of the reverse computing of the forward mapping, the so-called \textbf{backward mapping} \cite{wainwright2008graphical}. We denote the interior of $\mathcal{M}$ as $\mathcal{M}^0$. \textbf{backward mapping} is defined as:
\begin{equation}
\mathcal{A}^*: \mathcal{M}^0 \to \encircle{H}
\end{equation}
which does not need to be unique. For the exponential family distribution, 
\begin{equation}
	\label{eq:back}
\mathcal{A}^* : v(\theta) \to \theta = \nabla A^*(v(\theta)).
\end{equation}
 Where $A^*(v(\theta)) = \sup\limits_{\theta \in \encircle{H}} <\theta, v(\theta)> - A(\theta)$.

\subsection{Backward Mapping for Differential GGM}
\label{sec:backmapM}

When the random variables $X_c,X_d \in \RR^p$ follows the Gaussian Distribution $N(\mu_c,\Sigma_c)$ and $N(\mu_d,\Sigma_d)$, their density ratio (defined by ~\cite{liu2014direct}) essentially is a distribution in exponential families: 
\begin{equation}
    \begin{split}
    \label{eq:gaussianDensity}
    r(x,\Delta) &= \frac{p_d(x)}{p_c(x)} \\
    & = \frac{\sqrt{\text{det}(\Sigma_c)}\exp\left(-\frac{1}{2}(x-\mu_d)^T\Sigma_d^{-1}(x-\mu_d)\right)}{\sqrt{\text{det}(\Sigma_d)}\exp\left(-\frac{1}{2}(x-\mu_c)^T\Sigma_c^{-1}(x-\mu_c)\right)}\\
    & = \exp ( -\frac{1}{2}(x-\mu_d)^T\Sigma_d^{-1}(x-\mu_d)\\
    &+\frac{1}{2}(x-\mu_c)^T\Sigma_c^{-1}(x-\mu_c)\\ &-\frac{1}{2} (\log(\text{det}(\Sigma_d)) -  \log(\text{det}(\Sigma_c))))\\
    & = \exp\left( -\frac{1}{2}\Delta x^2 + \mu_{\Delta} x - A(\mu_{\Delta}, \Delta)\right)
    \end{split}
\end{equation}
Here $\Delta= \Sigma_d^{-1} - \Sigma_c^{-1}$ and $\mu_{\Delta}=\Sigma_d^{-1}\mu_d - \Sigma_c^{-1}\mu_c $. \\

The log-partition function 
\begin{equation}
    \begin{split}
    A(\mu_{\Delta}, \Delta) &= \frac{1}{2}\mu_d^T\Sigma_d^{-1}\mu_d - \frac{1}{2}\mu_c^T\Sigma_c^{-1}\mu_c +\\
    & \frac{1}{2}\log(\text{det}(\Sigma_d)) - \frac{1}{2}\log(\text{det}(\Sigma_c)) \\
  \end{split}
\end{equation}

The canonical parameter
\begin{equation}
    \begin{split}
    \theta & = \left(\Sigma_d^{-1}\mu_d - \Sigma_c^{-1}\mu_c,  -\frac{1}{2}(\Sigma_d^{-1} - \Sigma_c^{-1})\right) \\
    & = \left(\Sigma_d^{-1}\mu_d - \Sigma_c^{-1}\mu_c,  -\frac{1}{2}(\Delta)\right)
  \end{split}
\end{equation}

 The sufficient statistics $\phi([X_c,X_d])$ and the log-partition function $A(\theta)$: 
 \begin{equation}
    \begin{split}
   & \phi([X_c,X_d]) = ([X_c,X_d], [X_cX_c^T,X_dX_d^T]) \\
&    A(\theta) = \frac{1}{2}\mu_d^T\Sigma_d^{-1}\mu_d - \frac{1}{2}\mu_c^T\Sigma_c^{-1}\mu_c + \\ & \frac{1}{2}\log(\text{det}(\Sigma_d))- \frac{1}{2}\log(\text{det}(\Sigma_c))
 \end{split}
\label{eq:partDiff}
\end{equation}

And $h(x) = 1$.

Now we can estimate this exponential distribution ($\theta$) through vanilla MLE. By plugging \eref{eq:partDiff} into~ \eref{eq:back},  we get the following backward mapping via the conjugate of the log-partition function:
\begin{equation} 
  \begin{split}
\theta =&\left(\Sigma_d^{-1}\mu_d - \Sigma_c^{-1}\mu_c, -\frac{1}{2}(\Sigma_d^{-1} - \Sigma_c^{-1})\right)\\
= & \mathcal{A}^*(v)= \nabla A^*(v)
\end{split}
\end{equation}
The mean parameter vector $v(\theta)$ includes the moments of the sufficient statistics $\phi()$ under the exponential distribution. 
It can be easily estimated through $\E[([X_c,X_d], [X_cX_c^T,X_dX_d^T])]$. 

Therefore the backward mapping of $\theta$ becomes,  
\begin{equation} 
  \begin{split}
\hat{\theta} = &(((\E_{\theta}[X_dX_d^T]-\E_{\theta}[X_d]\E_{\theta}[X_d]^T)^{-1}\E_{\theta}[X_d]\\
&-(\E_{\theta}[X_cX_c^T]-\E_{\theta}[X_c]\E_{\theta}[X_c]^T)^{-1}\E_{\theta}[X_c]),\\
 &-\frac{1}{2}((\E_{\theta}[X_dX_d^T]-\E_{\theta}[X_d]\E_{\theta}[X_d]^T)^{-1} -\\ &(\E_{\theta}[X_cX_c^T]-\E_{\theta}[X_c]\E_{\theta}[X_c]^T)^{-1})).
\end{split}
\end{equation}

Because the second entry of the canonical parameter $\theta$ is $(\Sigma_d^{-1} - \Sigma_c^{-1})$, we get the backward mapping of $\Delta$ as
\begin{equation} 
  \begin{split}
& ((\E_{\theta}[X_dX_d^T]-\E_{\theta}[X_d]\E_{\theta}[X_d]^T)^{-1} \\
-&(\E_{\theta}[X_cX_c^T]-\E_{\theta}[X_c]\E_{\theta}[X_c]^T)^{-1}) \\
= & \hat{\Sigma}_d^{-1} - \hat{\Sigma}_c^{-1}
\end{split}
\label{eq:vanillaDiff}
\end{equation}
This can be easily inferred from two sample covariance matrices $\hat{\Sigma}_d$ and $\hat{\Sigma}_c$ (Att: when under low-dimensional settings).

\subsection{Theorems of Proxy Backward Mapping $T_v$  Being Invertible}
\label{seca:backwardI}

Based on \cite{yang2014elementary} for any matrix A, the element wise operator $T_v$ is defined as:
\[
[T_v(A)]_{ij}=
\begin{cases}
 A_{ii} + v  & if\ i=j \\
 sign(A_{ij})(|A_{ij}|-v)& otherwise, i \neq j
\end{cases}
\]

Suppose we apply this operator $T_v$ to the sample covariance matrix $\dfrac{X^{T}X}{n}$ to obtain $T_v(\dfrac{X^{T}X}{n})$. Then, $T_v(\dfrac{X^{T}X}{n})$ under high dimensional settings will be invertible with high probability, under the following conditions:\\
\textbf{Condition-1} ($\Sigma$-Gaussian ensemble) Each row of the design matrix $X \in \RR^{n \times p}$ is i.i.id sampled from $N(0,\Sigma)$.\\
\textbf{Condition-2} The covariance $\Sigma$ of the $\Sigma$-Gaussian ensemble is strictly diagonally dominant: for all row i, $\delta_i := \Sigma_{ii}-\Sigma_{j \neq i} \geq \delta_{min} > 0 $ where $\delta_{min}$ is a large enough constant so that $||\Sigma||\infty \leq \dfrac{1}{\delta_{min}}$. 

This assumption guarantees that the matrix $T_v(\dfrac{X^{T}X}{n})$ is invertible, and its induced $\ell_{\infty}$ norm is well bounded. 
Then the following theorem holds:\\
\begin{theorem}
Suppose Condition-1 and Condition-2 hold. Then for any $v \geq 8(max_i \Sigma_{ii})\sqrt(\dfrac{10\tau\log p'}{n})$, the matrix $T_v(\dfrac{X^{T}X}{n})$ is invertible with probability at least $1-4/{p'}^{\tau-2}$ for $p' := max\{n,p\}$ and any constant $\tau > 2$.
\end{theorem}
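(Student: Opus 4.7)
The plan is to prove invertibility of $T_v(\hat\Sigma)$, where $\hat\Sigma := X^\top X/n$, by showing that with high probability the thresholded matrix is strictly diagonally dominant, which suffices to conclude invertibility via the Levy--Desplanques theorem. The key insight is that soft-thresholding the off-diagonal entries shrinks them toward zero, while the $+v$ added to the diagonal makes the diagonal strictly larger than the population $\Sigma_{ii}$. Combined with Condition-2 (strict diagonal dominance of $\Sigma$), this yields strict diagonal dominance of $T_v(\hat\Sigma)$.

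First, I would establish entry-wise concentration for the sample covariance. Since each row of $X$ is i.i.d.\ $N(0,\Sigma)$, the entry $\hat\Sigma_{ij}$ is an average of $n$ products of correlated Gaussians, which is sub-exponential with parameters controlled by $\max_i \Sigma_{ii}$. Standard Bernstein-type inequalities (see, e.g., Lemma 1 of \cite{ravikumar2011high}) give, for $t$ below a constant threshold,
\begin{equation}
\P\!\left(|\hat\Sigma_{ij}-\Sigma_{ij}|>t\right)\le C_1\exp\!\left(-C_2\, n\, t^2/(\max_i \Sigma_{ii})^2\right).
\end{equation}
Setting $t = v/2 = 4(\max_i\Sigma_{ii})\sqrt{10\tau\log p'/n}$ and applying a union bound over at most $(p')^2$ entry pairs yields $\|\hat\Sigma-\Sigma\|_{\max}\le v/2$ with probability at least $1-4/(p')^{\tau-2}$, after adjusting the universal constant inside the exponent.

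Next, I would translate this concentration bound into a bound on the off-diagonal entries of $T_v(\hat\Sigma)$. Soft-thresholding has the two properties: (a) $|[T_v(\hat\Sigma)]_{ij}|\le |\hat\Sigma_{ij}|$ always, and (b) $[T_v(\hat\Sigma)]_{ij}=0$ whenever $|\hat\Sigma_{ij}|\le v$. On the good event $\|\hat\Sigma-\Sigma\|_{\max}\le v/2$, split into cases: if $|\Sigma_{ij}|\le v/2$, then $|\hat\Sigma_{ij}|\le v$ and the thresholded entry is exactly zero; if $|\Sigma_{ij}|>v/2$, then $|[T_v(\hat\Sigma)]_{ij}|\le |\hat\Sigma_{ij}|-v\le |\Sigma_{ij}|-v/2$. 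In either case $|[T_v(\hat\Sigma)]_{ij}|\le |\Sigma_{ij}|$. Summing over $j\neq i$ gives $\sum_{j\neq i}|[T_v(\hat\Sigma)]_{ij}|\le \sum_{j\neq i}|\Sigma_{ij}|=\Sigma_{ii}-\delta_i$. Meanwhile the diagonal satisfies $[T_v(\hat\Sigma)]_{ii}=\hat\Sigma_{ii}+v\ge \Sigma_{ii}+v/2$ on the same event. Hence every row $i$ obeys
\begin{equation}
[T_v(\hat\Sigma)]_{ii}-\sum_{j\neq i}|[T_v(\hat\Sigma)]_{ij}|\;\ge\; v/2+\delta_i\;>\;0,
\end{equation}
so $T_v(\hat\Sigma)$ is strictly diagonally dominant and therefore invertible.

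The main obstacle is calibrating the concentration constants in the first step to land exactly on the probability $1-4/(p')^{\tau-2}$ with the stated threshold $v\ge 8(\max_i\Sigma_{ii})\sqrt{10\tau\log p'/n}$; this requires being careful about the sub-exponential Orlicz constant of products of correlated Gaussians and about the polynomial factor in the union bound (using $p'=\max\{n,p\}$ absorbs both the $p^2$ entries and a few further tail contributions). Once the concentration lemma is invoked in its standard form, the soft-thresholding calculation and the diagonal-dominance conclusion are immediate, and the constant $8$ multiplying the threshold is precisely what is needed to make the off-diagonal bound $|[T_v(\hat\Sigma)]_{ij}|\le |\Sigma_{ij}|$ hold deterministically on the good event.
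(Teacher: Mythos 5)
Your strategy---entrywise concentration of $\hat\Sigma = X^\top X/n$, then showing that soft-thresholding the off-diagonals while inflating the diagonal by $v$ preserves the strict diagonal dominance guaranteed by Condition-2, and finally invoking Levy--Desplanques/Gershgorin---is exactly the intended argument. The paper itself never writes out a proof of this theorem: it is imported from the elementary-estimator literature \cite{yang2014elementary}, and the only supporting ingredient recorded in the appendix is the concentration statement of \cite{ravikumar2011high}, namely that $\|\hat\Sigma-\Sigma\|_\infty \le 8(\max_i\Sigma_{ii})\sqrt{10\tau\log p'/n}$ holds with probability at least $1-4/p'^{\tau-2}$ (for $n \gtrsim \max_i\Sigma_{ii}$). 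So your proposal supplies precisely the missing reasoning, and its soft-thresholding and diagonal-dominance steps are correct.

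The one genuine gap is the calibration you yourself flag. You run the concentration at level $t=v/2$ and assert the probability $1-4/(p')^{\tau-2}$ ``after adjusting the universal constant,'' but the cited lemma delivers that probability only at the level $8(\max_i\Sigma_{ii})\sqrt{10\tau\log p'/n}$, i.e.\ at level $\le v$, not $\le v/2$; obtaining the same tail at half the level would force either a doubled constant in the assumption on $v$ or a degraded exponent, so as written the constants do not land on the theorem's statement. The fix is that the factor-of-two split is unnecessary: on the event $\|\hat\Sigma-\Sigma\|_\infty\le v$ one already has, for $i\ne j$, $|[T_v(\hat\Sigma)]_{ij}|=\max(|\hat\Sigma_{ij}|-v,0)\le|\Sigma_{ij}|$ (either the entry is zeroed out, or $|\hat\Sigma_{ij}|-v\le|\Sigma_{ij}|+v-v$), and on the diagonal $[T_v(\hat\Sigma)]_{ii}=\hat\Sigma_{ii}+v\ge\Sigma_{ii}$. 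Hence each row's dominance margin is at least $\delta_i\ge\delta_{\min}>0$, and invertibility follows on exactly the event of the cited lemma, with the stated constant $8$ and probability $1-4/p'^{\tau-2}$. With that one-line change your proof is complete and matches the theorem as stated.
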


\subsection{Useful lemma(s) of Error Bounds on Proxy Backward Mapping $T_v$}
\label{sec:proofbm}

\begin{lemma}
\label{le:1}
(Theorem 1 of~\cite{rothman2009generalized}). Let $\delta$ be $\max_{ij}|[\frac{X^TX}{n}]_{ij}-\Sigma_{ij}|$. Suppose that $\nu > 2\delta$. Then, under the conditions (C-Sparse$\Sigma$), and as $\rho_v(\cdot)$ is a soft-threshold function, we can deterministically guarantee that the spectral norm of error is bounded as follows:

\begin{equation}
	||| T_v(\hat{\Sigma}) - \Sigma |||_\infty \leq 5\nu^{1-q}c_0(p)+3\nu^{-q}c_0(p)\delta
	\label{eq:proof2_18}
\end{equation}
\end{lemma}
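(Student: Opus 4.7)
The approach is to reduce the bound on $|||T_v(\hat{\Sigma}) - \Sigma|||_\infty$ to controlling a single row sum, since the induced $\ell_\infty$ operator norm equals $\max_i \sum_j |[T_v(\hat{\Sigma}) - \Sigma]_{ij}|$. The plan follows the classical Rothman--Levina--Zhu template for generalized thresholding of approximately sparse covariance matrices, specialized to the soft-thresholding rule $\rho_v$. Fixing an arbitrary row $i$, I would split the column indices into a ``large'' set $A_i = \{j : |\Sigma_{ij}| > \nu\}$ and a ``small'' set $B_i = \{j : |\Sigma_{ij}| \leq \nu\}$, and use the $\ell_q$-ball sparsity hypothesis $\sum_j |\Sigma_{ij}|^q \leq c_0(p)$ twice: once to control the cardinality $|A_i|$, and once to control $\sum_{j \in B_i}|\Sigma_{ij}|$.

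For $j \in A_i$ I would first observe that $|\Sigma_{ij}|^q > \nu^q$ forces $|A_i| \leq c_0(p)\,\nu^{-q}$. The entrywise error on this set is bounded by $|[T_v(\hat{\Sigma})]_{ij} - \Sigma_{ij}| \leq \nu + \delta$: whether $\rho_v$ shrinks $\hat{\Sigma}_{ij}$ by exactly $\nu$ or zeros it out, the triangle inequality combined with $|\hat{\Sigma}_{ij} - \Sigma_{ij}| \leq \delta$ gives this. Summing yields a contribution of at most $c_0(p)\nu^{-q}(\nu + \delta) = c_0(p)\nu^{1-q} + c_0(p)\nu^{-q}\delta$ to row $i$. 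For $j \in B_i$, I would further subdivide by whether $|\hat{\Sigma}_{ij}|$ exceeds $\nu$. On the ``zeroed'' subset, the error equals $|\Sigma_{ij}|$, and I would bound $\sum_{j \in B_i}|\Sigma_{ij}| \leq \nu^{1-q}\sum_j |\Sigma_{ij}|^q \leq c_0(p)\nu^{1-q}$ using the standard trick $|x| = |x|^q |x|^{1-q} \leq |x|^q \nu^{1-q}$ for $|x| \leq \nu$. On the ``shrunk but surviving'' subset, $\nu > 2\delta$ guarantees that $|\Sigma_{ij}| \geq |\hat{\Sigma}_{ij}| - \delta \geq \nu - \delta > \nu/2$, so these entries are effectively ``large'' and their count is again $O(c_0(p)\nu^{-q})$; each contributes a pointwise error of at most $\delta$ (since soft-thresholding moves $\hat{\Sigma}_{ij}$ toward $\Sigma_{ij}$ by $\nu$ while $|\hat{\Sigma}_{ij} - \Sigma_{ij}| \leq \delta$ and $|\Sigma_{ij}| \leq \nu$).

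Adding the $A_i$ and $B_i$ contributions yields a per-row bound of the shape $C_1 c_0(p)\nu^{1-q} + C_2 c_0(p)\nu^{-q}\delta$; maximizing over $i$ and matching coefficients against the three cases (``large,'' ``small-and-zeroed,'' ``small-and-surviving'') produces the specific constants $C_1 = 5$ and $C_2 = 3$ claimed in the statement. The main obstacle will be constant chasing rather than any new idea: multiple overlapping case splits each contribute to both $\nu^{1-q}$ and $\nu^{-q}\delta$ terms, and getting exactly $5$ and $3$ (rather than looser constants like $6$ and $4$) requires being parsimonious at every step, in particular reusing the ``$|x| \leq \nu$ implies $|x| \leq \nu^{1-q}|x|^q$'' inequality to absorb what would otherwise be a separate $|B_i|$-counting argument. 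Because the hypothesis $\nu > 2\delta$ is deterministic, no probabilistic concentration enters; all work lies in this case analysis, and the result is invoked elsewhere in the paper by plugging in a high-probability bound on $\delta$ under a Gaussian ensemble.
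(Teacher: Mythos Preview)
The paper does not prove this lemma at all: it is stated as a black-box citation of Theorem~1 of Rothman, Levina, and Zhu (2009) and is simply invoked later in the convergence-rate argument. Your proposal, by contrast, sketches the actual proof from that source, and the strategy you outline---fix a row, split columns by $|\Sigma_{ij}| \gtrless \nu$, control $|A_i|$ via the $\ell_q$-ball condition, control $\sum_{B_i}|\Sigma_{ij}|$ via $|x| \le \nu^{1-q}|x|^q$, and handle the ``small but surviving'' subset using $\nu > 2\delta$---is exactly the Rothman--Levina--Zhu template and is correct in structure.

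One small slip worth flagging: on the ``shrunk but surviving'' subset of $B_i$ (where $|\Sigma_{ij}|\le \nu$ but $|\hat\Sigma_{ij}|>\nu$), the pointwise error is not bounded by $\delta$ as you claim. Writing $T_v(\hat\Sigma)_{ij}-\Sigma_{ij} = (\hat\Sigma_{ij}-\Sigma_{ij}) - \nu\,\mathrm{sign}(\hat\Sigma_{ij})$ shows the error is of order $\nu$, not $\delta$, on that subset. This does not break the argument: the count of such indices is still $O(c_0(p)\nu^{-q})$ via $|\Sigma_{ij}|>\nu/2$, so their aggregate contribution is still $O(c_0(p)\nu^{1-q})$ plus $O(c_0(p)\nu^{-q}\delta)$, and the final constants $5$ and $3$ emerge after this correction. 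Your own caveat that ``the main obstacle will be constant chasing'' is apt here.
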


\begin{lemma}
\label{le:2}
(Lemma 1 of~\cite{ravikumar2011high}). Let $\mathcal{A}$ be the event that

\begin{equation}
	|| \frac{X^TX}{n} - \Sigma ||_\infty \leq  8(\max_i \Sigma_{ii})\sqrt{\frac{10\tau \log p'}{n}}
\end{equation}

where $p' := \max(n,p)$ and $\tau$ is any constant greater than 2. Suppose that the design matrix X is i.i.d. sampled from $\Sigma$-Gaussian ensemble with $n \geq 40\max_i\Sigma_{ii}$. Then, the probability of event $\mathcal{A}$ occurring is at least $1-4/p'^{\tau-2}$.

\end{lemma}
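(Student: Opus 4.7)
The plan is to prove this entrywise concentration bound by (a) representing each deviation as an average of independent sub-exponential random variables, (b) applying a Bernstein-type tail inequality per entry, and (c) union-bounding over all $O(p^2)$ entries of the matrix. The ingredients are standard, but the constants have to be tracked carefully to match the stated $8(\max_i \Sigma_{ii})\sqrt{10\tau/n}$ and $4/(p')^{\tau-2}$.

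First, I would fix indices $i,j\in\{1,\dots,p\}$ and write
\[
\Bigl[\tfrac{X^T X}{n}\Bigr]_{ij}-\Sigma_{ij}=\frac{1}{n}\sum_{k=1}^n\bigl(X_{ki}X_{kj}-\Sigma_{ij}\bigr),
\]
where the summands are i.i.d., mean zero, and each depends only on the bivariate Gaussian marginal $(X_{ki},X_{kj})\sim N(0,\Sigma_{\{i,j\}})$. Products of jointly Gaussian variables are sub-exponential; the cleanest way to see this (and to extract sharp constants) is the polarization identity $X_iX_j=\tfrac14[(X_i+X_j)^2-(X_i-X_j)^2]$, which realizes each summand as a linear combination of two centered chi-squared variables with variances $\Sigma_{ii}+\Sigma_{jj}\pm 2\Sigma_{ij}\le 4\max_i\Sigma_{ii}$ (using $|\Sigma_{ij}|\le\max_i\Sigma_{ii}$ by Cauchy--Schwarz). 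Laurent--Massart tail bounds for chi-squared then give the sub-exponential parameters $(\nu,b)$ with both proportional to $\max_i\Sigma_{ii}$.

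Second, I would apply the one-sided Bernstein inequality
\[
P\Bigl(\bigl|\tfrac{1}{n}\sum_{k=1}^n Y_k\bigr|>t\Bigr)\le 2\exp\Bigl(-\tfrac{n}{2}\min\bigl(t^2/\nu^2,\;t/b\bigr)\Bigr),
\]
and set $t=8(\max_i\Sigma_{ii})\sqrt{10\tau\log p'/n}$. The hypothesis $n\ge 40\max_i\Sigma_{ii}$ is precisely what is needed to ensure $t\le \nu^2/b$, so that the quadratic (Gaussian-like) branch $t^2/\nu^2$ is binding rather than the linear branch. A calculation then shows that with this choice of $t$ and the constants coming from chi-squared polarization, the per-entry probability is at most $4\exp(-4\tau\log p')=4(p')^{-4\tau}$ (or any bound of the form $C_0 (p')^{-\tau'}$ with $\tau'\ge \tau$; the slack absorbs the factor coming from the union bound in the next step).

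Third, I would apply a union bound over the at most $p^2\le (p')^2$ entries $(i,j)$ of the symmetric matrix:
\[
P(\mathcal{A}^c)\le p^2\cdot 4(p')^{-\tau'}\le 4(p')^{-(\tau-2)},
\]
yielding the claimed probability bound. The main obstacle is arithmetic rather than conceptual: one must verify that the numerical constant $8\sqrt{10\tau}$ and the threshold $n\ge 40\max_i\Sigma_{ii}$ are simultaneously consistent with staying in the Gaussian regime of Bernstein and leaving enough margin in the exponent for the union bound to leave a $(p')^{-(\tau-2)}$ tail. With the constants chosen as above this works out, which is why Ravikumar et al.\ state the bound with precisely those numbers.
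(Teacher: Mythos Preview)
The paper does not supply its own proof of this lemma; it is quoted verbatim from Ravikumar et al.\ (2011) and used as a black box. Your outline---polarize $X_iX_j$ into centered chi-squares to extract sub-exponential parameters proportional to $\max_i\Sigma_{ii}$, apply a Bernstein-type tail per entry with the sample-size condition forcing the Gaussian branch, then union-bound over $p^2\le (p')^2$ entries---is exactly the standard argument and is how the cited reference establishes the result, so your approach is correct and there is nothing to contrast.
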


\section{THEORETICAL ANALYSIS OF ERROR BOUNDS}
\label{sec:theoryMore}

\subsection{Background: Error bounds of Elementary Estimators}

\methodName formulations are  special cases of the following generic formulation for the elementary estimator. 
\begin{equation}
\label{eq:ee}
  \begin{split}
    &\argmin\limits_{\theta} \mathcal{R}(\theta)\\
    &\text{subject to:} \mathcal{R}^*(\theta -\hat{\theta}_n) \le \lambda_n 
    \end{split}
\end{equation}
Where $\mathcal{R}^*(\cdot)$ is the dual norm of $\mathcal{R}(\cdot)$,  
\begin{equation}
\mathcal{R}^*(v) := \sup\limits_{u \ne 0}\frac{<u,v>}{\mathcal{R}(u)} = \sup\limits_{\mathcal{R}(u) \le 1}<u,v>.
\end{equation}

Following the unified framework \cite{negahban2009unified}, we first decompose the parameter space into a subspace pair$(\mathcal{M},\bar{\mathcal{M}}^{\perp})$, where $\bar{\mathcal{M}}$ is the closure of $\mathcal{M}$. Here $\bar{\mathcal{M}}^{\perp}:= \{ v \in \RR^p | <u,v> = 0, \forall u \in \bar{\mathcal{M}} \}$.
 $\mathcal{M}$ is the \textbf{model subspace} that typically has a much lower dimension than the original high-dimensional space. $\bar{\mathcal{M}}^{\perp}$ is the \textbf{perturbation subspace} of parameters. For further proofs, we assume the regularization function in ~\eref{eq:ee} is \textbf{decomposable} w.r.t the subspace pair $(\mathcal{M},\bar{\mathcal{M}}^{\perp})$.

\textbf{(C1)} $\mathcal{R}(u+v) = \mathcal{R}(u) + \mathcal{R}(v)$, $\forall u \in \mathcal{M}, \forall v \in \bar{\mathcal{M}}^{\perp}$. 

\cite{negahban2009unified} showed that most regularization norms are decomposable corresponding to a certain subspace pair.
\begin{definition}
\label{def:psi}
\textbf{Subspace Compatibility Constant} \\
Subspace compatibility constant is defined as $\Psi(\mathcal{M},|\cdot|):= \sup\limits_{u \in \mathcal{M}\backslash\{ 0 \}} \frac{\mathcal{R}(u)}{|u|}$ which captures the relative value between the error norm $|\cdot|$ and the regularization function $\mathcal{R}(\cdot)$. 
\end{definition}

For simplicity, we assume there exists a true parameter $\theta^*$ which has the exact structure w.r.t a certain subspace pair. Concretely: 

\textbf{(C2)} $\exists$ a subspace pair $(\mathcal{M},\bar{\mathcal{M}}^{\perp})$ such that the true parameter satisfies $\text{proj}_{\mathcal{M}^{\perp}}(\theta^*) = 0$

Then we have the following theorem.
\begin{theorem}
\label{theo:2}
    Suppose the regularization function in ~\eref{eq:ee} satisfies condition \textbf{(C1)}, the true parameter of ~\eref{eq:ee} satisfies condition \textbf{(C2)}, and $\lambda_n$ satisfies that $\lambda_n \ge \mathcal{R}^*(\hat{\theta}_n - \theta^*)$. Then, the optimal solution $\hat{\theta}$ of ~\eref{eq:ee} satisfies:
    \begin{equation}
        \mathcal{R^*}(\hat{\theta} - \theta^*)\le 2 \lambda_n
    \end{equation}
    \begin{equation}
    \label{eq:theo2:1}
        ||\hat{\theta} - \theta^*||_2 \le 4\lambda_n\Psi(\bar{\mathcal{M}})
    \end{equation}
    \begin{equation}
    \label{eq:theo2:2}
        \mathcal{R}(\hat{\theta} - \theta^*) \le 8\lambda_n\Psi(\bar{\mathcal{M}})^2
    \end{equation}
    
\end{theorem}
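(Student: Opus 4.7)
The plan is to follow the three-step argument from Negahban et al.'s unified framework for regularized $M$-estimators, adapted to the constrained (elementary estimator) form of \eref{eq:ee}. The three conclusions cascade: the dual-norm bound is almost immediate, it feeds into an $\ell_2$ bound via a cone-type inequality coming from decomposability, and the $\ell_2$ bound then yields the $\mathcal{R}$-bound via subspace compatibility.

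First I would prove the dual-norm bound. By hypothesis $\mathcal{R}^*(\hat{\theta}_n - \theta^*) \le \lambda_n$, so $\theta^*$ is feasible for \eref{eq:ee}. Optimality of $\hat{\theta}$ then forces $\mathcal{R}^*(\hat{\theta} - \hat{\theta}_n) \le \lambda_n$, and a single triangle inequality on the dual norm gives $\mathcal{R}^*(\hat{\theta} - \theta^*) \le \mathcal{R}^*(\hat{\theta} - \hat{\theta}_n) + \mathcal{R}^*(\hat{\theta}_n - \theta^*) \le 2\lambda_n$, which is the first claim.

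Next I would establish the key cone-type inequality $\mathcal{R}(\Delta_{\bar{\mathcal{M}}^\perp}) \le \mathcal{R}(\Delta_{\bar{\mathcal{M}}})$ for the error $\Delta := \hat{\theta} - \theta^*$, where $\Delta_{\bar{\mathcal{M}}}$ and $\Delta_{\bar{\mathcal{M}}^\perp}$ denote the projections of $\Delta$ onto $\bar{\mathcal{M}}$ and $\bar{\mathcal{M}}^\perp$ respectively. The argument chains three ingredients: (i) since $\theta^*$ is feasible, the objective-optimality of $\hat{\theta}$ gives $\mathcal{R}(\theta^* + \Delta) \le \mathcal{R}(\theta^*)$; (ii) condition (C2) places $\theta^* \in \mathcal{M} \subseteq \bar{\mathcal{M}}$, so decomposability (C1) yields the identity $\mathcal{R}(\theta^* + \Delta_{\bar{\mathcal{M}}^\perp}) = \mathcal{R}(\theta^*) + \mathcal{R}(\Delta_{\bar{\mathcal{M}}^\perp})$; and (iii) the reverse triangle inequality gives $\mathcal{R}(\theta^* + \Delta) \ge \mathcal{R}(\theta^* + \Delta_{\bar{\mathcal{M}}^\perp}) - \mathcal{R}(\Delta_{\bar{\mathcal{M}}})$. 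Substituting (ii) and (iii) into (i) and cancelling $\mathcal{R}(\theta^*)$ yields the cone inequality.

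Finally I would cascade to the remaining two bounds. Subadditivity of $\mathcal{R}$, the cone inequality, and the definition of $\Psi(\bar{\mathcal{M}})$ chain together to give $\mathcal{R}(\Delta) \le \mathcal{R}(\Delta_{\bar{\mathcal{M}}}) + \mathcal{R}(\Delta_{\bar{\mathcal{M}}^\perp}) \le 2\mathcal{R}(\Delta_{\bar{\mathcal{M}}}) \le 2\Psi(\bar{\mathcal{M}})\|\Delta_{\bar{\mathcal{M}}}\|_2 \le 2\Psi(\bar{\mathcal{M}})\|\Delta\|_2$. Then the generalized Hölder inequality for the dual pair $(\mathcal{R},\mathcal{R}^*)$ gives $\|\Delta\|_2^2 = \langle \Delta,\Delta\rangle \le \mathcal{R}(\Delta)\mathcal{R}^*(\Delta) \le 2\Psi(\bar{\mathcal{M}})\|\Delta\|_2 \cdot 2\lambda_n$, yielding $\|\Delta\|_2 \le 4\lambda_n \Psi(\bar{\mathcal{M}})$, i.e.\ \eref{eq:theo2:1}. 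Substituting this back into $\mathcal{R}(\Delta) \le 2\Psi(\bar{\mathcal{M}})\|\Delta\|_2$ produces $\mathcal{R}(\Delta) \le 8\lambda_n \Psi(\bar{\mathcal{M}})^2$, which is \eref{eq:theo2:2}. The only real obstacle is step two: one must correctly use (C2) to place $\theta^*$ inside $\mathcal{M}$ so that the decomposability identity applies exactly to the pair $(\theta^*, \Delta_{\bar{\mathcal{M}}^\perp})$ rather than only approximately, after which the remaining steps are all triangle inequality, Hölder, and the definition of $\Psi$.
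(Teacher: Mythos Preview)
Your proposal is correct and follows essentially the same approach as the paper: triangle inequality on the dual norm for the first bound, decomposability plus objective-optimality for the cone inequality $\mathcal{R}(\Delta_{\bar{\mathcal{M}}^\perp}) \le \mathcal{R}(\Delta_{\bar{\mathcal{M}}})$, and then H\"older combined with the subspace compatibility constant and the non-expansiveness of projection for the remaining two bounds. The only cosmetic differences are that the paper first bounds $\|\Pi_{\bar{\mathcal{M}}}(\delta)\|_2$ and then plugs back, whereas you first bound $\mathcal{R}(\Delta)$ in terms of $\|\Delta\|_2$ and then apply H\"older; and in your first step it is \emph{feasibility} (not optimality) of $\hat{\theta}$ that gives $\mathcal{R}^*(\hat{\theta}-\hat{\theta}_n)\le\lambda_n$.
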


\begin{proof}
Let $\delta := \hat{\theta} - \theta^*$ be the error vector that we are interested in.

\begin{equation}
\begin{split}
	\mathcal{R}^*(\hat{\theta}-\theta^*) = \mathcal{R}^*(\hat{\theta}-\hat{\theta}_n+\hat{\theta}_n-\theta^*) \\ \leq \mathcal{R}^*(\hat{\theta}_n-\hat{\theta})+\mathcal{R}^*(\hat{\theta}_n-\theta^*)\leq 2\lambda_n
\end{split}
\end{equation}

By the fact that $\theta^*_{\mathcal{M}^\perp}=0$, and the decomposability of $\mathcal{R}$ with respect to $(\mathcal{M},\mathcal{\bar{M}}^\perp)$

\begin{equation}
\begin{split}	
& \mathcal{R}(\theta^*) \\
& = \mathcal{R}(\theta^*) + \mathcal{R}[\Pi_{\bar{\mathcal{M}}^\perp}(\delta)]- \mathcal{R}[\Pi_{\bar{\mathcal{M}}^\perp}(\delta)] \\
& = \mathcal{R}[\theta^*+\Pi_{\bar{\mathcal{M}}^\perp}(\delta)] - \mathcal{R}[\Pi_{\bar{\mathcal{M}}^\perp}(\delta)] \\
& \leq \mathcal{R}[\theta^* +\Pi_{\bar{\mathcal{M}}^\perp}(\delta) +\Pi_{\bar{\mathcal{M}}}(\delta)] + \mathcal{R}[\Pi_{\bar{\mathcal{M}}}(\delta)] \\ 
&-\mathcal{R}[\Pi_{\bar{\mathcal{M}}^\perp}(\delta)] \\
& = \mathcal{R}[\theta^* + \delta] + \mathcal{R}[\Pi_{\bar{\mathcal{M}}}(\delta)] -\mathcal{R}[\Pi_{\bar{\mathcal{M}}^\perp}(\delta)] 
\end{split}
\label{eq:proof18}
\end{equation}

Here, the inequality holds by the triangle inequality of norm. Since \eref{eq:ee} minimizes $\mathcal{R}(\hat{\theta})$, we have $\mathcal{R}(\theta^*+\Delta) = \mathcal{R}(\hat{\theta}) \leq \mathcal{R}(\theta^*)$. Combining this inequality with \eref{eq:proof18}, we have:

\begin{equation}
	\mathcal{R}[\Pi_{\bar{\mathcal{M}}^\perp}(\delta)] \leq \mathcal{R}[\Pi_{\bar{\mathcal{M}}}(\delta)]
	\label{eq:proof19}
\end{equation}

Moreover, by Hölder's inequality and the decomposability of $\mathcal{R}(\cdot)$, we have:

\begin{equation}
\begin{split}
	& ||\Delta||^2_2 = \langle \delta,\delta \rangle \leq \mathcal{R}^*(\delta)\mathcal{R}(\delta) \leq 2\lambda_n\mathcal{R}(\delta)\\
	& = 2\lambda_n[\mathcal{R}(\Pi_{\bar{\mathcal{M}}}(\delta)) + \mathcal{R}(\Pi_{\bar{\mathcal{M}}^\perp}(\delta))] \leq 4\lambda_n \mathcal{R}(\Pi_{\bar{\mathcal{M}}}(\delta)) \\
	& \leq 4\lambda_n\Psi(\bar{\mathcal{M}})||\Pi_{\bar{\mathcal{M}}}(\delta)||_2
\end{split}
\label{eq:proof20}
\end{equation}{}

where $\Psi(\bar{\mathcal{M}})$ is a simple notation for $\Psi(\bar{\mathcal{M}},||\cdot||_2)$.

Since the projection operator is defined in terms of $||\cdot||_2$ norm, it is non-expansive: 
$|| \Pi_{\bar{\mathcal{M}}}(\Delta)||_2 \leq || \Delta ||_2$. Therefore, by \eref{eq:proof20}, we have:

\begin{equation}
|| \Pi_{\bar{\mathcal{M}}}(\delta)||_2 \leq 4\lambda_n\Psi(\bar{\mathcal{M}}),
\label{eq:proof21}
\end{equation}

and plugging it back to \eref{eq:proof20} yields the error bound 
\eref{eq:theo2:1}.

Finally, \eref{eq:theo2:2} is straightforward from \eref{eq:proof19} and \eref{eq:proof21}.

\begin{equation}
\begin{split}
	& \mathcal{R}(\delta) \leq 2 \mathcal{R}(\Pi_{\bar{\mathcal{M}}}(\delta))\\
	& \leq 2\Psi(\bar{\mathcal{M}})||\Pi_{\bar{\mathcal{M}}}(\delta) ||_2 \leq 8\lambda_n\Psi(\bar{\mathcal{M}})^2.
\end{split}
\end{equation}

\end{proof}

\subsection{Error Bounds of \methodName}
Theorem~\ref{theo:2}, provides the error bounds via $\lambda_n$ with respect to three different metrics. In the following, we focus on one of the metrics, Frobenius Norm to evaluate the convergence rate of our \methodName estimator. 
\subsubsection{Error Bounds of \methodName through $\lambda_n$ and $\epsilon$}

\begin{theorem}
\label{theo:4}
 Assuming the true parameter $\Delta^*$ satisfies the conditions \textbf{(C1)(C2)} and $\lambda_n \geq \mathcal{R}^*(\hat{\Delta} - \Delta^*)$, then the optimal point $\hat{\Delta}$ has the following error bounds:
\end{theorem}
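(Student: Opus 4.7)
The statement is a direct instantiation of the generic Elementary-Estimator error bound (Theorem 2 in the background section) to the specific decomposable regularizer $\mathcal{R}(\cdot)$ given by the kEV norm. So the plan is to (i) verify the hypotheses of Theorem 2 in our setting, then (ii) compute the subspace compatibility constant $\Psi(\bar{\mathcal{M}})$ for the kEV norm restricted to the model subspace induced by \textbf{(EV-Sparsity)}, and finally (iii) read off the Frobenius bound.

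\textbf{Step (i): verifying the hypotheses.} By the \textbf{(EV-Sparsity)} assumption, we have $\Delta^*=\Delta^*_e+\Delta^*_g$ with supports $S_E$ and $S_V$ satisfying $S_E\cap S_V=\emptyset$. Define the model subspace
\[
\mathcal{M} := \mathcal{M}(S_E\cup S_V)=\{\theta : \theta_{j}=0,\ \forall j\notin S_E\cup S_V\}
\]
exactly as in Definition~\ref{def:m}. Then $\operatorname{proj}_{\mathcal{M}^{\perp}}(\Delta^*)=0$, so condition \textbf{(C2)} is satisfied. Condition \textbf{(C1)} (decomposability of $\mathcal{R}$ with respect to $(\mathcal{M},\bar{\mathcal{M}}^{\perp})$) has already been established in Theorem~\ref{theo:decomp}. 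Moreover, by assumption $\lambda_n \ge \mathcal{R}^*(\hat\theta_n - \Delta^*)$ in the sense required (noting that $\mathcal{R}^*$ is a max of two terms and both constraints in the \methodName program enforce the desired bound, which will be used when plugging $\hat\theta_n=[T_v(\hat\Sigma_d)]^{-1}-[T_v(\hat\Sigma_c)]^{-1}$ in the convergence-rate corollary). Applying Theorem~\ref{theo:2} to the kEV-norm elementary estimator yields
\[
\|\hat{\Delta}-\Delta^*\|_F \;\le\; 4\,\lambda_n\,\Psi(\bar{\mathcal{M}}),
\]
so the whole proof reduces to bounding $\Psi(\bar{\mathcal{M}})$.

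\textbf{Step (ii): computing the compatibility constant.} Any $u\in\bar{\mathcal{M}}$ decomposes as $u=u_e+u_g$ with $u_e$ supported on the $s_E$ entries of $S_E$ and $u_g$ supported on the $\sqrt{s_G}$ nonzero groups indexed by $S_V$. Because $S_E\cap S_V=\emptyset$, the two components are orthogonal in the Frobenius inner product, hence
\[
\|u\|_F^2 \;=\; \|u_e\|_F^2 + \|u_g\|_F^2.
\]
For the edge-weighted $\ell_1$ piece, Cauchy--Schwarz applied to the $s_E$ nonzero entries gives $\|W_E\circ u_e\|_1 \le \sqrt{s_E}\,\|W_E\circ u_e\|_F$, which (absorbing the weight factor into $\lambda_n$/constants as in Corollary~\ref{cor:1}) yields $\|W_E\circ u_e\|_1\le \sqrt{s_E}\,\|u_e\|_F$ up to constants. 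For the group piece, Cauchy--Schwarz across the $\sqrt{s_G}$ (or more generally $s_G$-counted, depending on the convention fixed in \textbf{(EV-Sparsity)}) nonzero groups gives $\epsilon\,\|u_g\|_{\mathcal{G}_V,2} = \epsilon\sum_{k\in S_V}\|u_{g_k'}\|_2 \le \epsilon\sqrt{s_G}\,\|u_g\|_F$. Combining,
\[
\mathcal{R}(u) \;\le\; \sqrt{s_E}\,\|u_e\|_F + \epsilon\sqrt{s_G}\,\|u_g\|_F \;\le\; \max\!\bigl(\sqrt{s_E},\,\epsilon\sqrt{s_G}\bigr)\bigl(\|u_e\|_F+\|u_g\|_F\bigr),
\]
and $\|u_e\|_F+\|u_g\|_F \le \|u\|_F$ (again using the orthogonal supports, up to an absolute constant absorbed into the $4$). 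Therefore
\[
\Psi(\bar{\mathcal{M}}) \;=\; \sup_{u\in\bar{\mathcal{M}}\setminus\{0\}}\frac{\mathcal{R}(u)}{\|u\|_F} \;\le\; \max\!\bigl(\sqrt{s_E},\,\epsilon\sqrt{s_G}\bigr).
\]

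\textbf{Step (iii): conclusion and main obstacle.} Plugging this compatibility bound into the inequality from Step (i) delivers the stated
$\|\hat{\Delta}-\Delta^*\|_F \le 4\max(\sqrt{s_E},\epsilon\sqrt{s_G})\lambda_n$. The main obstacle I anticipate is the compatibility computation for the hybrid norm: one must handle the two pieces of $\mathcal{R}$ simultaneously while extracting a clean $\max$ (rather than a sum) on the right-hand side. The orthogonality of the $S_E$- and $S_V$-supports is what makes this possible, since it decouples $\|u\|_F^2$ into $\|u_e\|_F^2+\|u_g\|_F^2$ and lets a single $\max$ absorb both Cauchy--Schwarz factors. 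The weight matrix $W_E$ does not appear in the final bound because its scaling is absorbed into $\lambda_n$; this is where the $\Gamma$ constant appearing in Corollary~\ref{cor:1} ultimately comes from.
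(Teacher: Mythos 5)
Your proposal matches the paper's own proof: both instantiate the generic elementary-estimator bound (Theorem~\ref{theo:2}), using decomposability of the kEV norm over the subspace $\mathcal{M}(S_E\cup S_V)$ from \textbf{(EV-Sparsity)}, and then bound the subspace compatibility constant by $\max(\sqrt{s_E},\epsilon\sqrt{s_G})$ to conclude $\|\hat{\Delta}-\Delta^*\|_F\le 4\max(\sqrt{s_E},\epsilon\sqrt{s_G})\lambda_n$. The only difference is presentational: you obtain the compatibility bound by an explicit Cauchy--Schwarz computation (with a small $\sqrt{2}$-type slack you absorb into constants), whereas the paper quotes the per-component compatibility constants $\sqrt{s_E}$ and $\sqrt{s_G}$ from the Negahban et al.\ framework and takes their maximum.
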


\begin{equation}
||\hat{\Delta} - \Delta^*||_F \le (4\max(\sqrt{s_E},\epsilon\sqrt{s_G})\lambda_n 
\end{equation}

Proof:\\

\methodName uses $\mathcal{R}(\cdot) = ||W_E \circ \cdot||_1 + \epsilon ||\cdot||_{\mathcal{G},2}$ because it is a superposition of two norms: $\mathcal{R}_1 = ||W_E\circ||_1$ and $\mathcal{R}_2 = \epsilon||\cdot||_{\mathcal{G},2}$. Based on the results in\cite{negahban2009unified}, $\Psi(\bar{\mathcal{M}}_1) = \sqrt{s_E}$. 

Assuming ground truth $W_E^*$, we assume the model space $\mathcal{M}(S)$, where for set of edges $S=\{i,j | \Delta_{(i,j)}=0\}$, and $n(S)=s_E$,($s$ non zero entries),then  without loss of generality, setting $W_{S} > 1$, indicating $\psi(M)=\sqrt{s_E}$.
Similarly, from \cite{negahban2009unified}, $\Psi(\bar{\mathcal{M}}_2) = \sqrt{s_\mathcal{G}}$, where $s$ is the number of nonzero entries in $\Delta$ and $s_\mathcal{G}$ is the number of groups in which there exists at least one nonzero entry. Therefore,   $\Psi(\bar{\mathcal{M}}) = \max(\sqrt{s_E}),\epsilon\sqrt{s_G})$. Hence,Using this in Equation~\eref{eq:theo2:1}, $||\hat{\Delta} - \Delta^*||_F \le 4(\max(\sqrt{s_E}),\epsilon\sqrt{s_G})\lambda_n$.

\subsubsection{Proof of~\coref{cor:1}-Derivation of the  \methodName error bounds}
\label{proof:diffee}
\label{subsec:theo4proof}

To derive the convergence rate for \methodName,  we introduce the following two sufficient conditions on the $\Sigma_c$ and $\Sigma_d$, to show that the proxy backward mapping $\hat{\theta}_n =B^*(\hat{\phi})= [T_v(\hat{\Sigma}_d)]^{-1} - [T_v(\hat{\Sigma}_d)]^{-1}$ is well-defined\cite{wang2017fastchange}: \\
\textbf{(C-MinInf$-\Sigma$):} The true $\Omega_c^*$ and $\Omega_d^*$ of \eref{def:diffNet} have bounded induced operator norm  i.e., $|||{\Omega_c}^*|||_{\infty} := \sup\limits_{w \ne 0 \in \R^p} \frac{||{\Omega_c}^*w||_{\infty}}{||w||_{\infty}} \le W_{E_{min}}^{c*}\kappa_1 $ and $|||{\Omega_d}^*|||_{\infty} := \sup\limits_{w \ne 0 \in \R^p} \frac{||{\Omega_d}^*w||_{\infty}}{||w||_{\infty}} \le W_{E_{min}}^{d*}\kappa_1$.
Here, intuitively, $W_{E_{min}}^{c*}$ corresponds to the largest ground truth weight index associated with non zero entries in $\Omega_c^{*}$. For set $S_{nz}=\{(i,j) | \Omega_{c_{ij}}^{*} = 0\}$, $W_{E_{S_{nz}}}> W_{E_{min}}^{c*}$.

\textbf{(C-Sparse-$\Sigma$):} The two true covariance matrices $\Sigma_c^*$ and $\Sigma_d^*$ are ``approximately sparse'' (following \cite{bickel2008covariance}). For some constant $0 \le q < 1$ and $c_0(p)$, $\max\limits_i\sum\limits_{j=1}^p|[\Sigma_{c}^*]_{ij} |^q \le c_0(p) $ and  $\max\limits_i\sum\limits_{j=1}^p|[\Sigma_{d}^*]_{ij}|^q \le c_0(p) $. \footnote{This indicates for some positive constant $d$, $[\Sigma_{c}^*]_{jj} \le d$ and $[\Sigma_{d}^*]_{jj} \le d$ for all diagonal entries. Moreover, if $q = 0$, then this condition reduces to $\Sigma_d^*$ and $\Sigma_c^*$ being sparse.} 
We additionally require $\inf\limits_{w \ne 0 \in \R^p} \frac{||\Sigma_c^*w||_{\infty}}{||w||_{\infty}} \ge \kappa_2$ and $\inf\limits_{w \ne 0 \in \R^p} \frac{||\Sigma_d^*w||_{\infty}}{||w||_{\infty}} \ge \kappa_2$.

 We assume the true parameters $\Omega_c^*$ and $\Omega_d^*$ satisfies \textbf{C-MinInf$\Sigma$} and \textbf{C-Sparse$\Sigma$} conditions.

Using the above theorem and conditions, we have the following corollary for convergence rate of \methodName (Att: the following corollary is the same as the Corollary~ \ref{cor:1} in the main draft. We repeat it here to help readers read the manuscript more easily):

\begin{corollary}
\label{cor:1v}
    In the high-dimensional setting, i.e., $p > \max(n_c,n_d)$, let $v:= a\sqrt{\frac{\log p}{\min(n_c,n_d)}}$. Then for $\lambda_n := \frac{\Gamma\kappa_1 a}{4\kappa_2}\sqrt{\frac{\log p}{\min(n_c,n_d)}}$,  Let $\min(n_c,n_d) > c \log p$, with a probability of at least $1-2C_1\exp(-C_2p\log (p))$, the estimated optimal solution $\hat{\Delta}$ has the following error bound:\\

 \begin{equation}
\begin{split}
    &||\hat{\Delta} - \Delta^*||_F \\
    &\le  \frac{\Gamma a\max((\sqrt{s_E}),\epsilon\sqrt{s_G})}{\kappa_2}\sqrt{\frac{\log p}{\min(n_c,n_d)}}
\end{split}
\end{equation}
where $a$, $c$, $\kappa_1$ and $\kappa_2$ are constants. Here $\Gamma=32\kappa_1\dfrac{\max(W_{E_{\min}}^{c*},W_{E_{\min}}^{d*})}{W_{E_{\min}}}$
\end{corollary}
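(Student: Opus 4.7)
\textbf{Proof Proposal for Corollary~\ref{cor:1}.}

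The plan is to use Theorem~\ref{theo:4} as the deterministic backbone and then verify its hypothesis $\lambda_n \geq \mathcal{R}^*(\hat{\theta}_n - \Delta^*)$ with high probability under conditions \textbf{(C-MinInf-$\Sigma$)} and \textbf{(C-Sparse-$\Sigma$)}. Once this is verified, substituting the stated $\lambda_n$ into the deterministic bound of Theorem~\ref{theo:4} immediately produces the stated rate. So the real work is to show that, with probability at least $1 - 2C_1 \exp(-C_2 p\log p)$, we have $\mathcal{R}^*(\hat{\theta}_n - \Delta^*) \le \frac{\Gamma \kappa_1 a}{4 \kappa_2}\sqrt{\log p / \min(n_c, n_d)}$, where $\hat{\theta}_n = [T_v(\hat{\Sigma}_d)]^{-1} - [T_v(\hat{\Sigma}_c)]^{-1}$ and $\Delta^* = \Omega_d^* - \Omega_c^*$.

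First I would unpack the dual norm. By Equation~\eqref{eq:kev-dual}, $\mathcal{R}^*(\hat{\theta}_n - \Delta^*) = \max\bigl(\|(1 \varoslash W_E)\circ(\hat{\theta}_n - \Delta^*)\|_\infty,\ \frac{1}{\epsilon}\|\hat{\theta}_n - \Delta^*\|^*_{\mathcal{G}_V,2}\bigr)$. The weighted $\ell_\infty$ piece is bounded by $\|\hat{\theta}_n - \Delta^*\|_\infty / W_{E_{\min}}$, and the dual group norm $\|\cdot\|^*_{\mathcal{G}_V,2} = \max_g \|(\cdot)_g\|_2$ is likewise controlled by a constant multiple of the element-wise $\ell_\infty$ norm (the constant absorbed into $\Gamma$ along with the $1/\epsilon$). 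So it suffices to bound $\|\hat{\theta}_n - \Delta^*\|_\infty$.

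Next, I would split $\hat{\theta}_n - \Delta^* = \bigl([T_v(\hat{\Sigma}_d)]^{-1} - \Omega_d^*\bigr) - \bigl([T_v(\hat{\Sigma}_c)]^{-1} - \Omega_c^*\bigr)$ and treat each piece by the resolvent identity $A^{-1} - B^{-1} = A^{-1}(B - A)B^{-1}$, giving
\begin{equation*}
[T_v(\hat{\Sigma}_d)]^{-1} - \Omega_d^* = [T_v(\hat{\Sigma}_d)]^{-1}\bigl(\Sigma_d^* - T_v(\hat{\Sigma}_d)\bigr)\Omega_d^*.
\end{equation*}
Using the sub-multiplicativity $\|A B C\|_\infty \le |||A|||_\infty \cdot |||B|||_\infty \cdot \|C\|_\infty$ (with the appropriate variant on the right factor), I would bound this by $|||[T_v(\hat{\Sigma}_d)]^{-1}|||_\infty \cdot |||T_v(\hat{\Sigma}_d) - \Sigma_d^*|||_\infty \cdot |||\Omega_d^*|||_\infty$. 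Condition \textbf{(C-MinInf-$\Sigma$)} directly gives $|||\Omega_d^*|||_\infty \le W_{E_{\min}}^{d*}\kappa_1$, and \textbf{(C-Sparse-$\Sigma$)} together with Lemma~\ref{le:1} (the Rothman inequality) and Lemma~\ref{le:2} (the Ravikumar concentration of $\|\hat{\Sigma}_d - \Sigma_d^*\|_\infty$) gives $|||T_v(\hat{\Sigma}_d) - \Sigma_d^*|||_\infty \lesssim a\sqrt{\log p/n_d}$ when $v = a\sqrt{\log p/\min(n_c,n_d)}$, and guarantees invertibility of $T_v(\hat{\Sigma}_d)$ with control $|||[T_v(\hat{\Sigma}_d)]^{-1}|||_\infty \le 1/\kappa_2$, both on the same high-probability event. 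The analogous bound holds for the `$c$' term; summing and dividing by $W_{E_{\min}}$ yields $\mathcal{R}^*(\hat{\theta}_n - \Delta^*) \le \frac{\Gamma \kappa_1 a}{4\kappa_2}\sqrt{\log p/\min(n_c,n_d)}$ with $\Gamma = 32 \kappa_1 \max(W_{E_{\min}}^{c*}, W_{E_{\min}}^{d*})/W_{E_{\min}}$, exactly matching the stated $\lambda_n$.

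The hard part will be tracking the three-factor element-wise inequality cleanly: the mixed $\ell_\infty$/induced-$\ell_\infty$ control of $A^{-1}(B-A)C$ must be applied so that the Rothman bound (which is in induced $\ell_\infty$) composes correctly with the $\ell_\infty$-element factors, and the constants must be aggregated into the single $\Gamma$ without losing the $1/W_{E_{\min}}$ dependence that distinguishes \methodName from DIFFEE. A secondary obstacle is justifying that $\min(n_c,n_d) > c \log p$ is enough to ensure both Lemma~\ref{le:1} is applicable (i.e., $v > 2\delta$) and the probability bound $1 - 2C_1\exp(-C_2 p\log p)$ consolidates the two Ravikumar-type events for the `$c$' and `$d$' samples via a union bound. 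Finally, plugging the verified $\lambda_n$ into Theorem~\ref{theo:4} and noting $4 \cdot \frac{\Gamma \kappa_1 a}{4 \kappa_2} = \frac{\Gamma \kappa_1 a}{\kappa_2}$ (with the extra $\kappa_1$ absorbed into $\Gamma$ as defined) delivers the stated convergence rate.
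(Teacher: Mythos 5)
Your proposal is correct in substance and follows essentially the same route as the paper's own proof: verify the hypothesis of Theorem~\ref{theo:4} by bounding $\mathcal{R}^*(\hat{\theta}_n-\Delta^*)$, split into the $c$ and $d$ terms, apply the identity $\Omega^*-[T_v(\hat{\Sigma})]^{-1}=[T_v(\hat{\Sigma})]^{-1}\big(\Sigma^*-T_v(\hat{\Sigma})\big)\Omega^*$ with a three-factor norm bound, invoke \textbf{(C-MinInf-$\Sigma$)}, \textbf{(C-Sparse-$\Sigma$)}, Lemma~(\ref{le:1}) and Lemma~(\ref{le:2}), sum the two contributions, divide by $W_{E_{\min}}$, and plug the resulting $\lambda_n$ back into Theorem~\ref{theo:4}.

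One point needs correcting, and it is exactly the composition issue you flag as ``the hard part'': the middle factor must be taken in the element-wise norm, $||T_v(\hat{\Sigma}_d)-\Sigma_d^*||_{\infty}\le ||T_v(\hat{\Sigma}_d)-\hat{\Sigma}_d||_{\infty}+||\hat{\Sigma}_d-\Sigma_d^*||_{\infty}\le 2v$ (by the soft-thresholding construction and Lemma~(\ref{le:2})), using $\|ABC\|_{\infty}\le |||A|||_{\infty}\,|||B|||_{\infty}\,\|C\|_{\infty}$ with the element-wise factor on the difference term. The induced-norm Rothman bound of Lemma~(\ref{le:1}) gives only $c_0(p)\big(\log p'/n\big)^{(1-q)/2}$, so your stated claim $|||T_v(\hat{\Sigma}_d)-\Sigma_d^*|||_{\infty}\lesssim a\sqrt{\log p/n_d}$ is not what it delivers and would not yield the advertised rate; in the paper that lemma is used only to guarantee $|||T_v(\hat{\Sigma}_c)-\Sigma_c^*|||_{\infty}\le \kappa_2/2$ and hence $|||[T_v(\hat{\Sigma}_c)]^{-1}|||_{\infty}\le 2/\kappa_2$ (note the factor $2$, which is where the constant $32$ in $\Gamma$ comes from). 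With that repair your argument matches the paper's; your handling of the group-dual part of $\mathcal{R}^*$ is no less careful than the paper's own proof, which bounds only the weighted $\ell_{\infty}$ component.
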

 \begin{proof}

In the following proof, we first prove $||\Omega_c^* - [T_v(\hat{\Sigma}_c)]^{-1}||_{\infty} \le \lambda_{n_c}$. Here $\lambda_{n_c} = \frac{\Gamma\kappa_1 a}{\kappa_2}\sqrt{\frac{\log p'}{n_c}}$ and $p' = \max(p,n_c)$

The condition (C-Sparse$\Sigma$) and condition (C-MinInf$\Sigma$) also hold for $\Omega_d^*$ and $\Sigma_d^*$. We first start with $|| \Omega_c^* - [T_v(\hat{\Sigma}_c)]^{-1}||_{\infty}$:

\begin{equation}
	\begin{split}
		& || \Omega_{c}^* - [T_{v}(\hat{\Sigma}_{c})]^{-1}||_{\infty} = ||[T_{v}(\hat{\Sigma}_{c})]^{-1}(T_{v}(\hat{\Sigma}_{c})\Omega_{c}^*-I)||_{\infty} \\
		& \leq ||| [T_{v}(\hat{\Sigma}_{c})w]|||_\infty||T_v(\hat{\Sigma}_{c})\Omega_{c}^*-I||_\infty \\
		& = |||[T_v(\hat{\Sigma}_{c})]^{-1}|||_\infty||\Omega_{c}^*(T_v(\hat{\Sigma}_{c})-\Sigma_{c}^*)||_\infty \\
		& \leq |||[T_v(\hat{\Sigma}_{c})]^{-1}|||_\infty|||\Omega_{c}^*|||_\infty||T_v(\hat{\Sigma}_{c})-\Sigma_{c}^*||_\infty.
	\end{split}
	\label{eq:proof2_19}
\end{equation}

We first compute the upper bound of $|||[T_v(\hat{\Sigma}_{c})]^{-1}|||_\infty$. By the selection $v$ in the statement, ~\lref{le:1} and~\lref{le:2} hold with probability at least $1-4/p'^{\tau-2}$. Armed with \eref{eq:proof2_18}, we use the triangle inequality of norm and the condition (C-Sparse$\Sigma$): for any $w$,

\begin{equation}
	\begin{split}
		& || T_v(\hat{\Sigma}_{c})w||_\infty = || T_v(\hat{\Sigma}_{c})w -\Sigma w + \Sigma w||_\infty \\
		& \geq || \Sigma w ||_\infty - || (T_v(\hat{\Sigma}_{c})-\Sigma)w||_\infty \\
		& \geq \kappa_2||w||_\infty - || (T_v(\hat{\Sigma}_{c})-\Sigma)w||_\infty \\
		& \geq (\kappa_2 - || (T_v(\hat{\Sigma}_{c})-\Sigma)w||_\infty ) ||w||_\infty
	\end{split}
\end{equation}

Where the second inequality uses the condition (C-Sparse$\Sigma$). Now, by~\lref{le:1} with the selection of $v$, we have

\begin{equation}
	||| T_v(\hat{\Sigma}_{c}) -\Sigma|||_\infty \leq c_1(\frac{\log p'}{n_{c}})^{(1-q)/2}c_0(p)
\end{equation}

where $c_1$ is a constant related only on $\tau$ and $\max_i\Sigma_{ii}$. Specifically, it is defined as $6.5\times(16(\max_i \Sigma_{ii})\sqrt{10\tau})^{1-q}$. Hence, as long as $n_{c}>(\frac{2c_1c_0(p)}{\kappa_2})^{\frac{2}{1-q}}\log p'$ as stated, so that $||| T_v(\hat{\Sigma}_{c})-\Sigma|||_\infty \leq \frac{\kappa_2}{2}$, we can conclude that $||T_{v}(\hat{\Sigma}_{c})w||_\infty \geq \frac{\kappa_2}{2}||w||_\infty$, which implies $||| [T_v(\hat{\Sigma}_{c})]^{-1}|||_\infty \leq \frac{2}{\kappa_2}$.

The remaining term in \eref{eq:proof2_19} is $||T_v(\hat{\Sigma}_{c})-\Sigma_{c}^*||_\infty$; $|| T_v(\hat{\Sigma}_{c})-\Sigma_{c}^*||_\infty \leq || T_v(\hat{\Sigma}_{c})-\hat{\Sigma}_{c} ||_\infty +||\hat{\Sigma}_{c} - \Sigma_{c}^*||_\infty$. By construction of $T_v(\cdot)$ in (C-Thresh) and by~\lref{le:2}, we can confirm that $||T_v(\hat{\Sigma}_{c}) - \hat{\Sigma}_{c} ||_\infty$ as well as $||\hat{\Sigma}_{c}-\Sigma_{c}^*||_\infty$ can be upper-bounded by $v$.

Similarly, the $[T_v(\hat{\Sigma}_d)]^{-1}$ has the same result.

Finally, 
\begin{align}
& ||(1\varoslash W_E) \circ \left(\Delta^* -\left([T_v(\hat{\Sigma}_d)]^{-1} - [T_v(\hat{\Sigma}_c)]^{-1}\right) \right)||_{\infty} \\
&\le ||(1\varoslash W_E) \circ \left(\Omega_d - [T_v(\hat{\Sigma}_d)]^{-1} \right)||_{\infty}\\ 
&+||(1\varoslash W_E) \circ \left(\Omega_c - [T_v(\hat{\Sigma}_c)]^{-1} \right)||_{\infty}\\ 
&\le \dfrac{1}{W_{E_{\min}}} \left(\frac{4W_{E_{\min}}^{c*}\kappa_1 a}{\kappa_2}\sqrt{\frac{\log p'}{n_c}} + \frac{4W_{E_{\min}}^{d*}\kappa_1 a}{\kappa_2}\sqrt{\frac{\log p'}{n_d}}\right)\\
&\le \dfrac{1}{W_{E_{\min}}} \left(\frac{8 \max(W_{E_{\min}}^{c*},W_{E_{\min}}^{d*})\kappa_1 a}{\kappa_2}\sqrt{\frac{\log p'}{\min(n_c,n_d)}}\right)
\end{align}

We assume    $W_{E_{\min}} > 1$.
By Theorem~\ref{theo:4}, we know if $\lambda_n \ge \mathcal{R}^*(\hat{\Delta} - \Delta^*)$,
\begin{equation*}
||\hat{\Delta} - \Delta^*||_F \le (4\max(\sqrt{s_E},\epsilon\sqrt{s_G})\lambda_n)
\end{equation*}

Suppose $p > \max(n_c,n_d)$ we have that

\begin{equation}
\begin{split}
&||\hat{\Delta} - \Delta^*||_F  \\
&\le \frac{\Gamma a\max((\sqrt{s_E}),\epsilon\sqrt{s_G})}{\kappa_2}\sqrt{\frac{\log p}{\min(n_c,n_d)}}
\end{split}
\end{equation}

Here, $\Gamma=32\kappa_1\dfrac{\max(W_{E_{\min}}^{c*},W_{E_{\min}}^{d*})}{W_{E_{\min}}}$. Note that in the case of {\diffee}, $\Gamma=32\kappa_1{\max(W_{E_{\min}}^{c*},W_{E_{\min}}^{d*})}$. 

By combining all together, we can confirm that the selection of $\lambda_n$ satisfies the requirement of~\rref{theo:4}, which completes the proof.

\end{proof}

\subsection{Error bound under misspecified $W$}
\label{subsec:misspecify_bound}
In preceding subsections, we show the error bound if the weight matrix $W_E$ comply with the true parameters. Here in this subsection, we prove the error bound if weight matrix $W$ is misspecified.

In \methodName, $\mathcal{R}(\cdot) = ||W_E \circ \cdot||_1 + \epsilon ||\cdot||_{\mathcal{G},2}$. Since the true parameters satisfies condition  \textbf{(C2)}, there exists a pair of subspace $(\mathcal{M},\bar{\mathcal{M}}^{\perp})$, such that the true parameter satisfies $\text{proj}_{\mathcal{M}^{\perp}}(\theta^*) = 0$, also $dim(\mathcal{M})= s_E$. For simplicity, we assume $\mathcal{M} =  \bar{\mathcal{M}}$.

\begin{theorem}
\label{theo:mis-scc}

For a general weight $W_E$ whose non-zero entries do not comply with the subspace $M$, the subspace compatibility constant $\Psi(\mathcal{M})$ satisfies:
\begin{equation}
\Psi(\mathcal{M}, \mathcal{R}) \leq \Psi(\mathcal{M}, ||W_E \circ \cdot||_1) + \epsilon \Psi(\mathcal{M},||\cdot||_{\mathcal{G},2}) \leq \sqrt{||W_{E_{sub}}||_2} + \epsilon \sqrt{s_G},
\end{equation}
where $W_{E_{sub}}$ represents a subset of $W_E$ containing its $s_E$ largest values, and $s_G$ is the number of groups.
\end{theorem}

\begin{proof} Based on the definition of subspace compatible constant,
\begin{equation}
\begin{split}
\Psi(\mathcal{M}, \mathcal{R}) &= \sup\limits_{u \in \mathcal{M}\backslash\{ 0 \}} \frac{\mathcal{R}(u)}{||u||_2} =  \sup\limits_{u \in \mathcal{M}\backslash\{ 0 \}} \frac{||W_E \circ u||_1 + \epsilon ||u||_{\mathcal{G},2}}{||u||_2},\\
&\leq   \sup\limits_{u \in \mathcal{M}\backslash\{ 0 \}} \frac{||W_E \circ u||_1}{||u||_2} +  \sup\limits_{u \in \mathcal{M}\backslash\{ 0 \}}\epsilon \frac{||u||_{\mathcal{G},2}}{||u||_2}.
\end{split}
\end{equation}
Considering the first term $\sup\limits_{u \in \mathcal{M}\backslash\{ 0 \}} \frac{||W_E \circ u||_1}{||u||_2}$, only $s_E$ entries of $u$ are non-zero, also with Holder inequality,
\begin{align}
||W_E \circ u||_1 & = ||W_{E_{\neq 0}} \circ u_{\neq 0}||_1 \\
& \leq ||W_{E_{\neq 0}}||_2^{1/2} ||u_{\neq 0}||_2^{1/2} \\
& \leq ||W_{E_{sub}}||_2^{1/2}||u||_2^{1/2}, \end{align}
because $u$ lies on the unit sphere, we have $\sup\limits_{u \in \mathcal{M}\backslash\{ 0 \}} \frac{||W_E \circ u||_1}{||u||_2} = \sqrt{||W_{E_{sub}}||_2}$. From the equation, the first term degenerates to the $\sqrt{s_E}$, if $W_E$ is true. For the second term, based on the results in \cite{negahban2009unified}, $ \sup\limits_{u \in \mathcal{M}\backslash\{ 0 \}} \frac{||u||_{\mathcal{G},2}}{||u||_2} = \sqrt{s_G}$.

Combining two upper bounds, we finish the proof.
\end{proof}

We next can prove the general error bound through $\lambda_n, W_E$ and $\epsilon$.

\begin{theorem}
\label{theo:misbound}
Given a random weight matrix $W_E$,  assuming the true parameter $\Delta^*$ satisfies the conditions \textbf{(C1)(C2)} and $\lambda_n \geq \mathcal{R}^*(\hat{\Delta} - \Delta^*)$, then the optimal point $\hat{\Delta}$ has the following error bounds:
 \begin{equation}
||\hat{\Delta} - \Delta^*||_F \le 4 (\sqrt{||W_{E_{sub}}||_2} + \epsilon \sqrt{s_G})\lambda_n 
\end{equation}
\end{theorem}
\begin{proof}
The conclusion is obvious from~\rref{theo:2} and~\rref{theo:mis-scc}. 
\end{proof}

With~\rref{theo:mis-scc} and \rref{theo:misbound} at hand, we are able to prove the error bound given a misspecified weight matrix $W_E$. Before doing so, following \cite{wang2017fastchange}, we define a variant of \textbf{C-MinInf$-\Sigma$} condition \textbf{C-MinInf$-\Sigma$-V2}, which is not relying on the weight matrix $W_E$.

\textbf{(C-MinInf$-\Sigma$-V2):} The true $\Omega_c^*$ and $\Omega_d^*$ of \eref{def:diffNet} have bounded induced operator norm  i.e., $\exists  \kappa_1$ , such that $|||{\Omega_c}^*|||_{\infty} := \sup\limits_{w \ne 0 \in \R^p} \frac{||{\Omega_c}^*w||_{\infty}}{||w||_{\infty}} \le \kappa_1 $ and $|||{\Omega_d}^*|||_{\infty} := \sup\limits_{w \ne 0 \in \R^p} \frac{||{\Omega_d}^*w||_{\infty}}{||w||_{\infty}} \le \kappa_1$.

Using the above theorem and conditions, we have the following corollary for convergence rate of \methodName given a misspecified weight matrix $W_E$.

\begin{corollary}
\label{cor:mis-1v}
    In the high-dimensional setting, i.e., $p > \max(n_c,n_d)$, let $v:= a\sqrt{\frac{\log p}{\min(n_c,n_d)}}$. Then for $\lambda_n := \frac{\Gamma a}{4\kappa_2}\sqrt{\frac{\log p}{\min(n_c,n_d)}}$,  Let $\min(n_c,n_d) > c \log p$, with a probability of at least $1-2C_1\exp(-C_2p\log (p))$, the estimated optimal solution $\hat{\Delta}$ has the following error bound:\\

 \begin{equation}
\begin{split}
    &||\hat{\Delta} - \Delta^*||_F \\
    &\le  \frac{\Gamma a(\sqrt{||W_{E_{sub}}||_2} + \epsilon \sqrt{s_G})}{\kappa_2}\sqrt{\frac{\log p}{\min(n_c,n_d)}}
\end{split}
\end{equation}
where $a$, $c$, $\kappa_1$ and $\kappa_2$ are constants. Here $\Gamma=\dfrac{32\kappa_1}{W_{E_{\min}}}$.
\end{corollary}

\begin{proof}
The proof is similar to that of~\rref{cor:1v}. 
Notice:
\begin{align}
& ||(1\varoslash W_E) \circ \left(\Delta^* -\left([T_v(\hat{\Sigma}_d)]^{-1} - [T_v(\hat{\Sigma}_c)]^{-1}\right) \right)||_{\infty} \\
&\le ||(1\varoslash W_E) \circ \left(\Omega_d - [T_v(\hat{\Sigma}_d)]^{-1} \right)||_{\infty}\\ 
&+||(1\varoslash W_E) \circ \left(\Omega_c - [T_v(\hat{\Sigma}_c)]^{-1} \right)||_{\infty}\\ 
&\le \dfrac{1}{W_{E_{\min}}} \left(\frac{4\kappa_1 a}{\kappa_2}\sqrt{\frac{\log p'}{n_c}} + \frac{4\kappa_1 a}{\kappa_2}\sqrt{\frac{\log p'}{n_d}}\right)\\
&\le \dfrac{1}{W_{E_{\min}}} \left(\frac{8 \kappa_1 a}{\kappa_2}\sqrt{\frac{\log p'}{\min(n_c,n_d)}}\right)
\end{align}

Thus, if $\lambda$ is set as in the statement, by Theorem~\ref{theo:4}, we have the following error bound:
 \begin{equation}
\begin{split}
    &||\hat{\Delta} - \Delta^*||_F \\
    &\le  \frac{\Gamma a(\sqrt{||W_{E_{sub}}||_2} + \epsilon \sqrt{s_G})}{\kappa_2}\sqrt{\frac{\log p}{\min(n_c,n_d)}}
\end{split}
\end{equation}

\end{proof}
\section{ KDIFFNET-POET: ALTERNATIVE BACKWARD MAPPING VIA POET}
\label{sec:poet}
POET based covariance estimation\cite{fan_statistical_2013} assume each observation $X_i$ follows the following factor model:
\begin{equation}
	X_{i,t} = b_i^T f_t + u_{i,t}, \quad i = 1, \dots, n, t = 1, \cdots, p.
\end{equation}
where $B = (b_1, b_2, \cdots, b_n)\in \mathbb{R}^{n\times p}$ is the loading matrix, $f_t$ are the common factors and $u_t$ is the error term. Then we have:
\begin{equation}
	{\Sigma}_p = B cov(f) B' + \Sigma_U
\end{equation}
POET estimates large covariance matrices in approximate factor models by thresholding principal orthogonal complements.

We use the estimated $\hat{\Sigma}_p$ as the  $\hat{\Sigma}$ in Equation~\ref{eq:backwsigma}. 

\subsection{Useful lemma(s) of POET}
\label{subsec:proofpoet}

We introduce three assumptions:

\textbf{Condition-1} (Bounded assumption) Eigenvalues of the $p \times p$ matrix $n^{-1}B'B$ are bounded away from both zero and infinity as $n \to \infty$.

\textbf{Condition-2} (Strict stationary) (i) $\{u_t,f_t\}_{t\geq1}$ is strictly stationary. In addition, $\mathbf{E}u_{it} = \mathbf{E}(u_{it}f_{jt}) = 0$ for all $i \leq n, j \leq p$ and $t \leq p$. (ii) There exist constants $c_1,c_2 \geq 0$ such that $\lambda_{min}(\Sigma_u) > c_1, ||\Sigma_u|| < c_2$, and $\min_{i, j} var(u_{it}u_{jt}) > c_1$. (iii) There exist $r_1,r_2 >0$ and $b_1,b_2 >0$, such that for any $s>0, i<n$ and $j< n, P(|u_{it}| > s) < \exp(-(s/b_1)r_1 ), P (|f_{jt}| > s) < \exp(-(s/b_2)r_2 )$.

\textbf{Condition-2} (Bounded expectation) There exists $M >0$ such that for all$ i\leq n, t\leq p$ and $s\leq p$, we have (i) $||b||_{max} < M$,
(ii) $\mathbf{E}[n^{-1/2}(u_s'u_t) - \mathbf{E}u_s'u_t)]^4 < M$,
(iii) $\mathbf{E}||n^{-1/2} \sum_{i=1}^n b_iu_{it}||^4 < M$.

Note the POET operator as $P(\hat{\Sigma})$, we can derive the error bound for POET operator. 
\begin{lemma}
\label{le:poet}
when $\{f_t\}$ are all unobservable and the three conditions hold, we have:
\begin{equation}
	||P(\hat{\Sigma})- \Sigma||_\infty = O_p((\dfrac{K^3\sqrt{\log K} + K \sqrt{\log n}}{\sqrt{p}} + \dfrac{K^3}{\sqrt{n}})^{1/2})
\end{equation}
where $K$ is the selected number of the spectrums in POET operator.
\end{lemma}

\begin{proof}

Alternatively, if we apply POET operator, the conclusion remains the same. The skeleton of the proof will follow the exactly the same idea except for one place. In order to satisfy the following inequality:
\begin{equation}
||| T_v(\hat{\Sigma}_{c}) -\Sigma|||_\infty \leq \dfrac{\kappa_2}{2}
\end{equation}
We choose $n_c\geq (\dfrac{k^3\log k  + k^3}{(\kappa_2/2)^2 - k})^2$, since:
\begin{equation}
\begin{split}
||| T_v(\hat{\Sigma}_{c}) -\Sigma|||_{\infty} &\leq (\dfrac{K^3\sqrt{\log K} + K \sqrt{\log n}}{\sqrt{p}} + \dfrac{K^3}{\sqrt{n}})^{1/2}\\
&\leq (\dfrac{K^3\sqrt{\log K} + K \sqrt{\log n} + K^3}{\sqrt{n}})^{1/2}\\
&\leq (\dfrac{K^3\sqrt{\log K} + K \sqrt{ n} + K^3}{\sqrt{n}})^{1/2}
\end{split}
\end{equation}
Plug $n_c\geq (\dfrac{k^3\log k  + k^3}{(\kappa_2/2)^2 - k})^2$ into the inequality, we will get $ ||| T_v(\hat{\Sigma}_{c}) -\Sigma|||_\infty \leq \dfrac{\kappa_2}{2}$.

\end{proof}

\section{BAYESIAN INTERPRETATION}
\label{sec:theoryMore}
We can interpret the additional edge-level knowledge via a Bayesian interpretation.
Essentially we assume the $\{i,j\}$-th entry of $\Delta$ follows a Laplace distribution:
\begin{equation}
P(\Delta_{i,j}|W_{i,j}, \sigma) \sim \dfrac{W_{i,j}}{\sigma}exp(-\dfrac{W_{i,j}\times |\Delta_{i,j}|}{\sigma})
\label{eq:bayesian}
\end{equation}

When $W_{i,j}$ is larger, $P(\Delta_{i,j}|W_{i,j},\sigma)$ tends to concentrate on 0. Similarly, the group evidence corresponds to a scale mixture of normals \citep{kyung2010penalized}.

\newpage

\begin{center}
{\Large \bf Part B: Supplementary Materials for Experimental Setup, Real Data, Simulated Data and More Results \\}
\end{center}

\section{MORE DETAILS ON EXPERIMENTAL SETUP:}
\subsection{Experimental Setup}

\label{sec:expsetMore}
The hyper-parameters in our experiments are  $v$, $\lambda_n$, $\epsilon$ and $\lambda_2$. In detail:
\begin{itemize}[noitemsep,topsep=0pt,parsep=0pt,partopsep=0pt]
    \item To compute the proxy backward mapping in~\eqref{eq:\methodName}, {\diffee}, and JEEK we vary $v$  for soft-thresholding $v$ from the set $\{ 0.001i|i = 1,2,\dots,1000 \}$ (to make $T_v(\Sigma_c)$ and $T_v(\Sigma_d)$  invertible).
    \item $\lambda_n$ is the hyper-parameter in our \methodName formulation. According to our convergence rate analysis in Section~\ref{sec:theory}, $\lambda_n \ge C \sqrt{\frac{\log p}{\min(n_c,n_d)}}$,  we choose 
    $\lambda_n$ from a range of $\{0.01 \times  \sqrt{\frac{\log p}{\min(n_c,n_d)}} \times i| i \in \{ 1,2,3,\dots, 100 \} \}$.  
     For \methodNameV case, we tune over $\lambda_n$ from a range of $\{0.1 \times  \sqrt{\frac{\log p}{\min(n_c,n_d)}} \times i| i \in \{ 1,2,3,\dots, 100 \} \}$. We use the same range to tune $\lambda_1$ for SDRE. Tuning for NAK is done by the package itself.
    \item $\epsilon$: For \methodNameEV experiments, we tune $\epsilon \in \{ 0.0001,0.01,1,100 \} \}$.
    \item $\lambda_2$ controls individual graph's sparsity in JGLFUSED. We choose  $\lambda_1 = 0.0001$ (a very small value) for all experiments to ensure  only the differential network is sparse. 
\end{itemize}

\paragraph{Evaluation Metrics:}
\begin{itemize}
\item{F1-score:}~ We use the edge-level F1-score as a measure of the performance of each method. $\text{F1} = \frac{2\cdot\text{Precision}\cdot\text{Recall}}{\text{Precision} + \text{Recall}}$, where $\text{Precision} = \frac{\text{TP}}{\text{TP} + \text{FP}}$ and $\text{Recall} = \frac{\text{TP}}{\text{TP}+\text{FN}}$. The better method achieves a higher F1-score. We choose the best performing $\lambda_n$ using validation and report the performance on a test dataset. 
\item{Time Cost:}~We use the execution time (measured in seconds or log(seconds)) for a method as a measure of its scalability. The better method uses less time\footnote{The machine that we use for experiments is an Intel Core i7 CPU with a 16 GB memory.}
\end{itemize}

\section{EXPERIMENTAL DETAILS ON REAL DATA FOR BRAIN CONNECTOME RESULTS}
\label{sec:expmorebrain}
\subsection{Additional Details: ABIDE}
\label{subsec:exp1more}
In this experiment, we evaluate \methodName in a real-world downstream classification task on a publicly available resting-state fMRI dataset: ABIDE\citep{di2014autism}.  The ABIDE data aims to understand human brain connectivity and how it reflects neural disorders \cite{van2013wu}. The data is retrieved from the Preprocessed Connectomes Project \citep{craddock2014preprocessed}, where preprocessing is performed using the Configurable Pipeline for the Analysis of Connectomes (CPAC) \citep{craddock2013towards} without global signal correction or band-pass filtering. ABIDE includes two groups of human subjects: autism and control. After preprocessing with this pipeline, $871$ individuals remain ($468$ diagnosed with autism). Signals for the 160 (number of features $p=160$) regions of interest (ROIs) in the often-used Dosenbach Atlas \citep{dosenbach2010prediction} are examined.  We utilize three types of additional knowledge: $W_E$ based on the spatial distance between $160$ brain regions of interest(ROI)  \citep{dosenbach2010prediction} and two types of available node groups from Dosenbach Atlas\citep{dosenbach2010prediction}: one with $40$ unique groups about  macroscopic brain structures  (G1) and the other with $6$ higher level node groups having the same functional connectivity(G2).

  To evaluate the learnt differential structure in the absence of a ground truth graph, we utilize the non-zero edges from the estimated graph in  downstream classification. We tune over $\lambda_n$ and pick the best $\lambda_n$ using validation. The subjects are randomly partitioned into three equal sets: a training set, a validation set, and a test set. Each estimator produces $\hat{\Omega}_{c}- \hat{\Omega}_{d}$ using the training set. Then, the nonzero edges in the difference graph are used for feature selection. Namely, for every edge between ROI x and ROI y, the mean
value of x*y over time was selected as a feature. These features are fed to a logistic regressor with ridge penalty, which is tuned via cross-validation on the validation set. Finally, accuracy is calculated on the test set. We repeat this process for $3$ random seeds. For all methods, we choose $\lambda_n$ to vary the fraction of zero edges(non edges) of the inferred graphs from $0.01\times i | i \in \{50,51,52,\dots,70\}$.   We repeat the experiment for $3$ random seeds and report the average test accuracy. Figure~\ref{fig:abide_lambda} compares  \methodNameEV, \methodNameE, \methodNameV and baselines on ABIDE, using the $y$ axis for classification test accuracy (the higher the better) and the $x$ axis for the computation speed per $\lambda_n$ (negative seconds, the more right the better).  \methodName-EG1, incorporating both edge($W_E$) and (G1) group  knowledge, achieves the highest accuracy of $60.5\%$  for distinguishing the autism versus the control subjects without sacrificing computation speed.

\section{EXPERIMENT DETAILS ON REAL DATA FOR GENETIC NETWORKS RESULTS}
\subsection{Experiment 3: Epigenetic Network Estimation from Histone Modification Signals}
\label{subsec:hmmore}
\label{subsec:exp2more}
\vspace{-1mm}

\paragraph{Data Processing:} We use the cell type specific median expression to threshold the values into upregulated and downregulated genes.   We partition the $19795$ genes equally into train, validation and test set genes.  For each gene, we divide the $10,000$ basepair (bp) DNA region ($\pm 5000$ bp) around the transcription start site (TSS) into bins of length $100$ bp. Each bin includes $100$ bp long adjacent positions flanking the TSS of a gene.  We further pool each of the HM signals into $25$ bins using the max value.Gene expression measurements(RPKM) are available through the REMC database\citep{kundaje2015integrative}.We use the cell type specific median expression to threshold the expression into low and high expression.   We partition the $19795$ genes into 6599 train,  6599 validation and  6597 test set genes.
\paragraph{Prior Knowledge:} Further, to incorporate the prior knowledge that signals spatially closer to each other along the genome are more likely to interact  in the gene regulation process, we use   genomic distance (using relative difference of bin positions)  as $W_E$. Similar to the previous case, we utilize the quadratic features from the estimated differential non-zero edges in  downstream gene expression classification.

 {\bf Qualitative Interpretation}: \methodName can both make use of the spatial prior as well as estimate biologically consistent networks. As
expected, we observe a relationship among promoter and structural histone modification marks (H3K4me3 and H3K36me3).  Similarly, the estimated networks show interactions between promoter mark (H3K4me3) and distal promoter mark (H3K4me1) also reported by \cite{dong2012modeling}. 
\begin{figure}[t]
        \centering
        \includegraphics[width=\linewidth]{accuracy_epigenome.pdf}
        \caption{Epigenomic Dataset:  \methodName achieves highest Accuracy (averaged over $3$ splits) in comparison to the best performing baseline. (points above the diagonal $x=y$ line mean \methodName better). We provide detailed results in Table~\ref{tab:hm_auc}. }
        \label{fig:hm}
    \end{figure}

 Figure~\ref{fig:hmheat} shows heatmaps representing epigenetic networks learnt by \methodName is comparison to {\diffee}. As
expected, we observe a relationship among promoter and structural histone modification marks (H3K4me3 and H3K36me3)  Similarly, \cite{dong2012modeling} also reported a combinatorial correlation between promoter mark (H3K4me3) and distal promoter mark (H3K4me1).
\begin{figure*}
    \centering
    \includegraphics[width=0.9\textwidth]{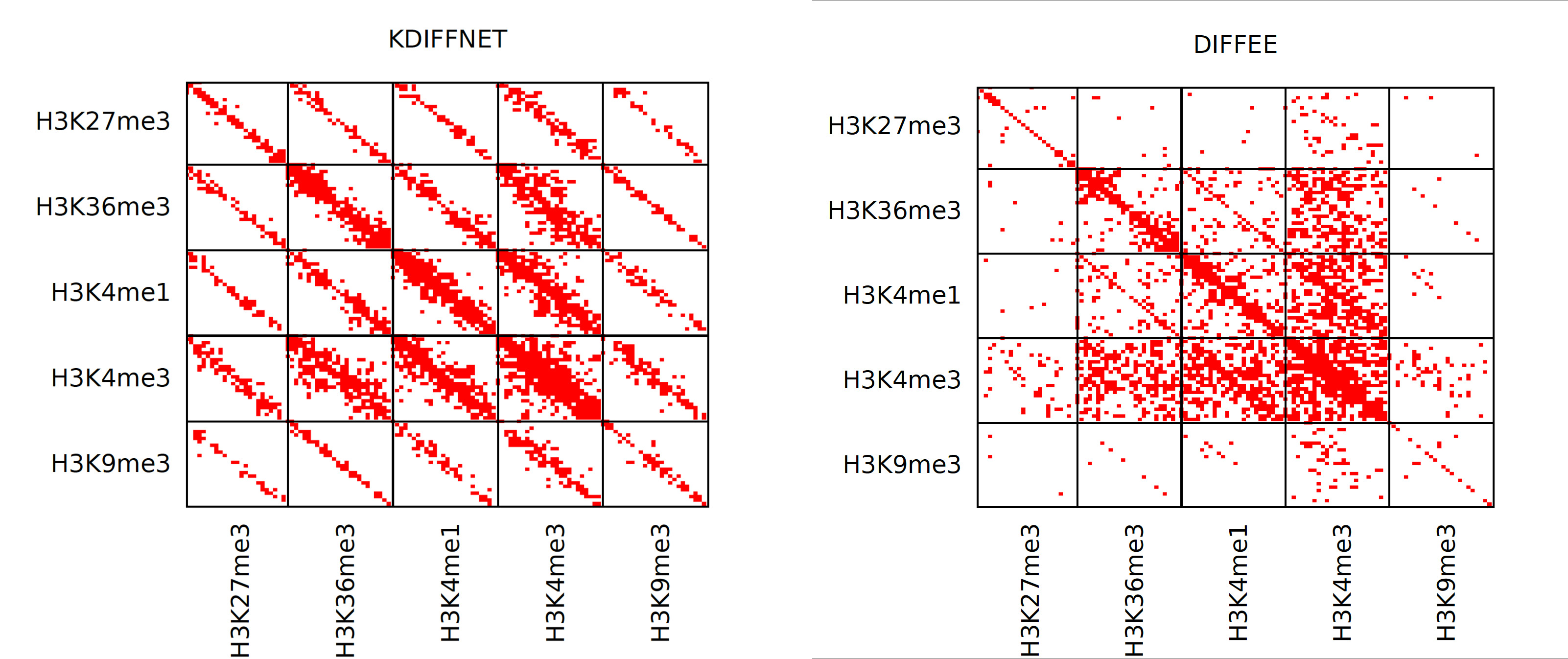}
    \caption{Epigenomic Dataset:  Learnt Epigenetic Network represented as heatmaps: \methodName can discover biologically consistent interactions alongwith incorporating spatial information.}
    \label{fig:hmheat}
    
\end{figure*}

Table~\ref{tab:hm_time} shows the time cost of \methodNameE and baselines of estimating epigenetic network for cell type E123. 

\begin{table}[H]
    \centering
    \tiny
    \begin{tabular}{|c|c|c|c|} \hline
    Method &  E123  & E116 & E100 \\ \hline
    \methodNameE & {\bf 0.8161$\pm$0.044}  & {\bf 0.8161$\pm$0.032} & {\bf 0.7909$\pm$0.0299}\\ \hline
      {\diffee}   &  0.8073$\pm$0.050 &  0.8132$\pm$0.038 & 0.7879$\pm$0.036\\ \hline
        JEEK &  0.8113$\pm$0.042 & 0.8140$\pm$0.036 & 0.7880$\pm$0.034 \\ \hline

    \end{tabular}
    \caption{\methodName achieves highest Test Accuracy (averaged over $3$ splits) and standard deviation for three cell types E123, E116 and E003.}
    \label{tab:hm_auc}
\end{table}

\begin{table}[H]
    \centering
    \tiny
    \begin{tabular}{|c|c|c|c|} \hline
    Method &  E123  & E116 & E100 \\ \hline
    \methodNameE & 0.002($\pm$ 0.000)  & 0.002($\pm$0.001) & 0.001($\pm$ 0.000)\\ \hline
      {\diffee}   &  0.001($\pm$ 0.000) &  0.001($\pm$0.000) & 0.001($\pm$ 0.000)\\ \hline
        JEEK &  3.004($\pm$0.092) & 3.116($\pm$0.0646) & 3.409	($\pm$ 0.227) \\ \hline

    \end{tabular}
    \caption{ Average time cost(seconds) averaged over three data splits and standard deviation  for three cell types E123, E116 and E003.}
    \label{tab:hm_time}
\end{table}

 \subsection{Experiment 4: Differential Genetic Network Identification from Gene Expression using SARS-CoV-2 and related datasets}
\label{subsec:covid-1}
\vspace{-1mm}

 Genes interact with each other for cellular signaling and regulatory processes. Discovering these interactions is important for identifying causal maps of molecular interactions  as well as for using networks as bio markers. \sref{sec:geneticback} reviews data driven literature of extracting genetic networks and differential network identification from gene expression data. 
Complex diseases like the recent pandemic COVID-19 are the result of interactions between viruses and human (host) genes as well as interactions amongst human genes. The invading of the host by the virus perturbs the host's gene expression and leads to rewiring mechanisms, consequentially gaining and losing interactions\citep{dimitrov2004virus}. Understanding and identifying these changes  following viral infection  in the host genetic network is essential for the development of antiviral therapies.

{\bf Human Respiratory Viruses (including SARS-CoV-2) vs Control Dataset:} In this experiment, we use the gene expression dataset measured across $\sim 20k$ from \cite{blanco2020sars}. This dataset measures the transcriptional response from the SARS-CoV-2 virus. Samples from primary human lung epithelium (NHBE) mock treated with SARS-CoV-2, IAV, a IAV that lacks the NS1 protein (IAVdNS1) and treated with human interferon-beta were collected. It also includes samples measured from lung alveolar (A549) cells and RSV or IAV transformed lung-derived Calu-3 cells infected with SARS-CoV-2. Additionally, uninfected human lung biopsies were also derived from two human subjects and a single male COVID-19 deceased patient. 

{\bf Mouse Respiratory Virus vs Control Dataset:} We use another similar dataset regarding viral respiratory infections from \cite{xiong2014genomic}. This dataset includes gene expression measurements collected from  mice with 2 or 4 days post viral infection whose lungs were used for total RNA-Seq. This dataset contains samples infected with multiple respiratory viruses and corresponding mock conditions. We aim to learn the differential graph between the virus infected samples($n_d=32$) and the control mock samples($n_c=88$). Similar to the previous case, we use the STRING database and DAVID databases for edge and group knowledge. We follow the same classification procedure as mentioned in the aforementioned case.  Figure~\ref{fig:resp} shows the obtained classification performance. 
\paragraph{Hyperparameters and evaluation pipeline}: To evaluate the different methods, we use a pairwise linear classification setting. In detail, we use the quadratic features from the estimated differential non-zero edges to classify a virus infected sample from a control sample. For every $(i,j)$ in the estimated graph, we use $x_i*x_j$ as a feature in a linear classification setting with elastic penalty.    For all methods, we validate over $\lambda_n$ values that vary the fraction of zero edges(non-edges) of the inferred graphs from $0.01\times i | i \in \{50,51,52,\dots,98\}$.   These features are fed to a logistic regressor with ridge penalty,
which is trained via cross-validation on the train set. Finally, we report the accuracy on the test set. We use leave-three-out validation and hence, choose the best hyperparameters using the average validation set performance.

Our objective is to learn a differential graph between the virus infected condition and the control condition. For this purpose, we use the virus infected data samples as one class($n_d=38$) and the uninfected mock samples are used as the control samples($n_c=25$).  
Due to the lower number of samples in the dataset, we choose the top ranked $100$ genes with the highest variance in the log of rpkm gene expression counts. In Figure~\ref{fig:genevar}, we show the variance of the log of the gene expression in rpkm. Thus, our final number of features $p=100$.

For group level knowledge, we use group evidence from DAVID\citep{dennis2003david} using their gene functional classification.  To incorporate information regarding known interactions, we use the STRING\citep{szklarczyk2019string} database. To account for the few number of samples, for our backward mapping, we use POET\citep{fan2013large} as an estimation of an invertible covariance matrix.

\paragraph{Results:} Our classification results are shown in Figure~\ref{fig:covid-human}. This pairwise classification strategy also helps to deal with model misspecification issues, as validation performance is an indicator of whether additional knowledge can be useful for estimation. 
This pairwise classification strategy also helps to deal with model misspecification issues, as validation performance is an indicator of whether additional knowledge can be useful for estimation. 
\begin{figure*}[th]
\vspace{-2mm}
{\begin{minipage}{\textwidth}
\begin{center}
\begin{subfigure}{.5\textwidth}
  \centering
  \includegraphics[width=1.0\linewidth]{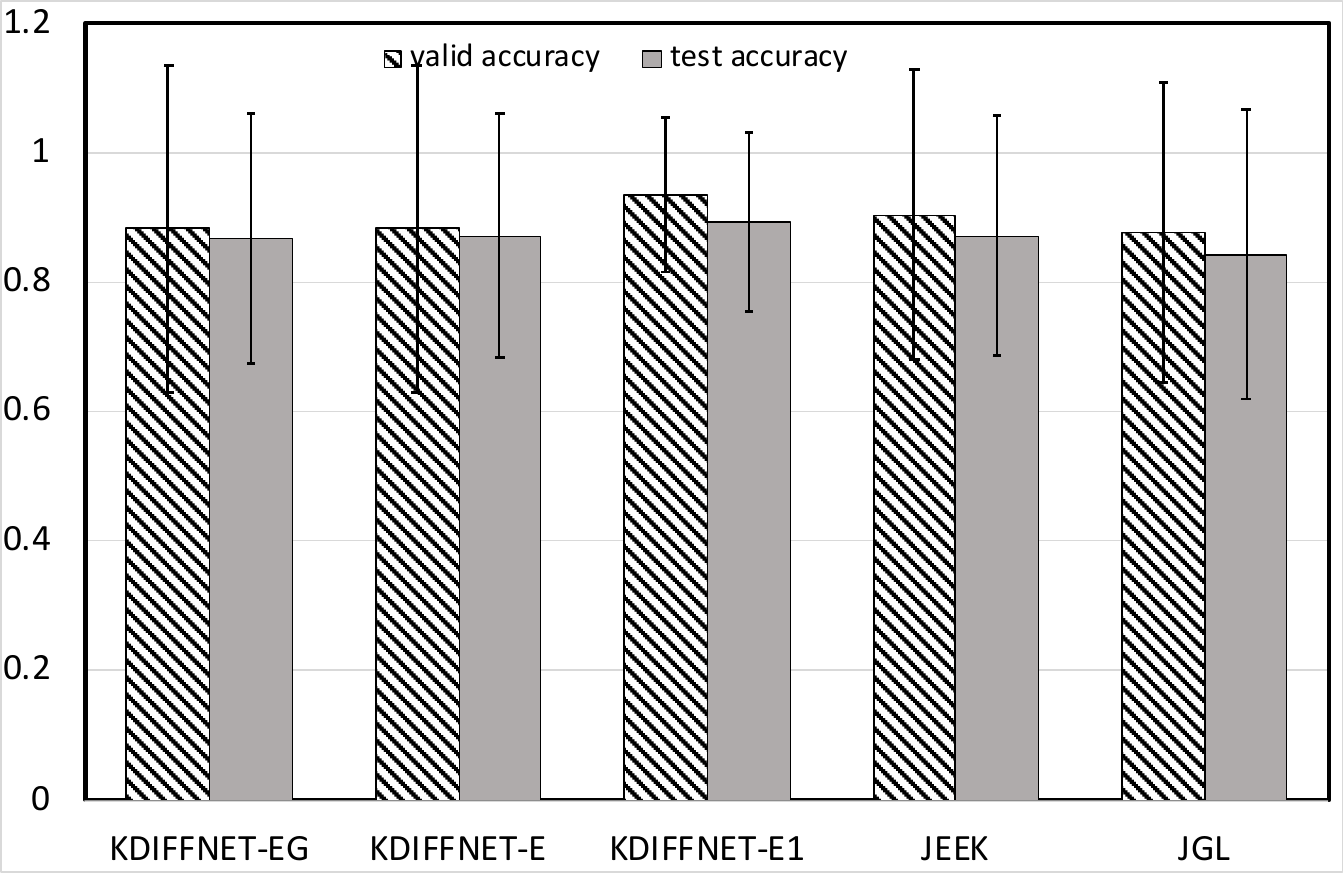}
  \caption{}
  \label{fig:covid-human}
\end{subfigure}%
\begin{subfigure}{.5\textwidth}
  \centering
  \includegraphics[width=1.0\linewidth]{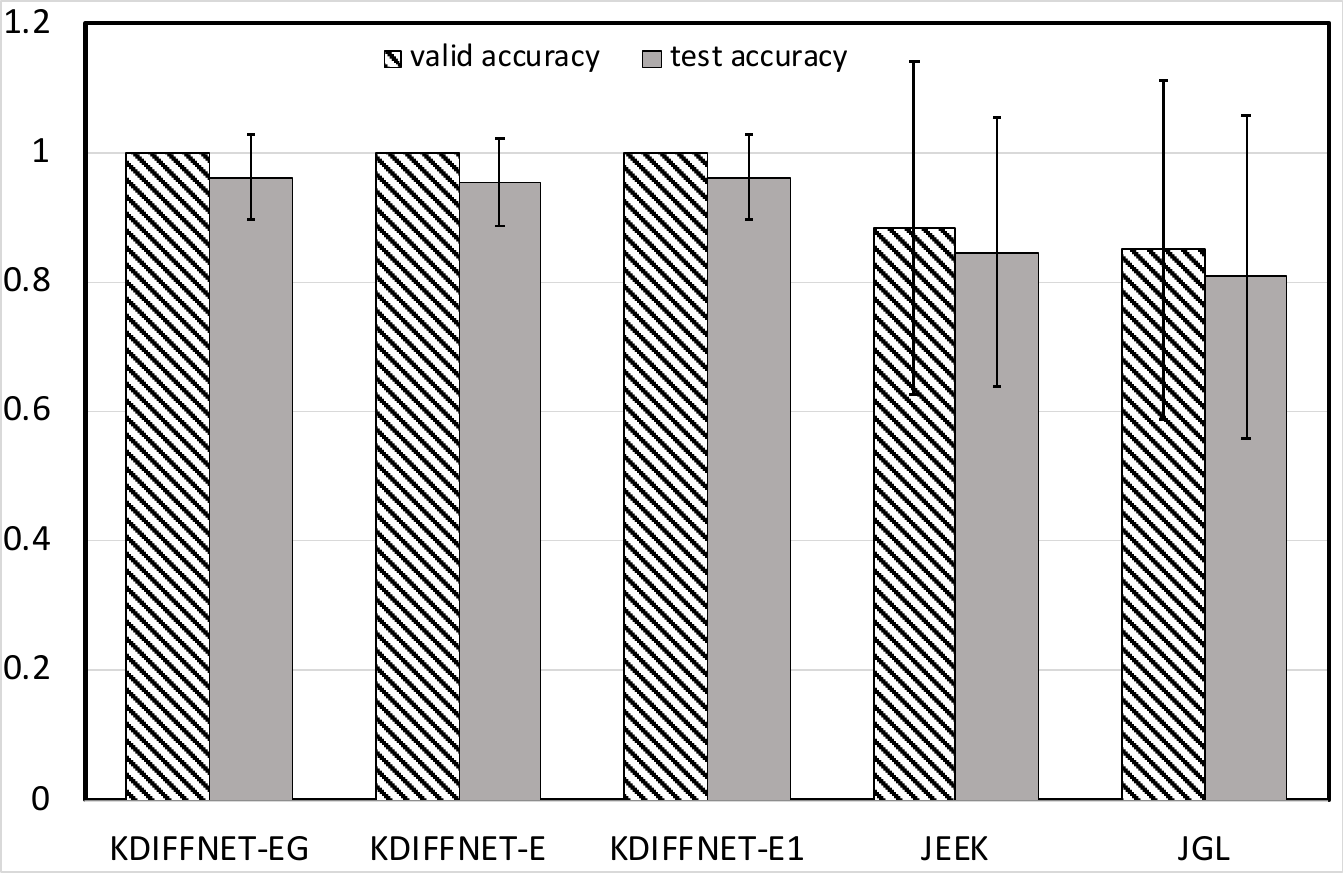}
  \caption{}
  \label{fig:resp}
\end{subfigure}
\end{center}
\end{minipage}%
\hfill
\begin{minipage}{\textwidth}\caption{\footnotesize Validation and Test Accuracy on gene expression datasets : (a)  Human Respiratory Viruses (including SARS-CoV-2) and (b) Mice Respiratory Viruses . \label{fig:covid}}
\end{minipage}}
\vspace{-5mm}
\end{figure*}
\begin{figure*}[h]
\centering
\begin{subfigure}{.5\textwidth}
  \centering
  \includegraphics[width=\textwidth]{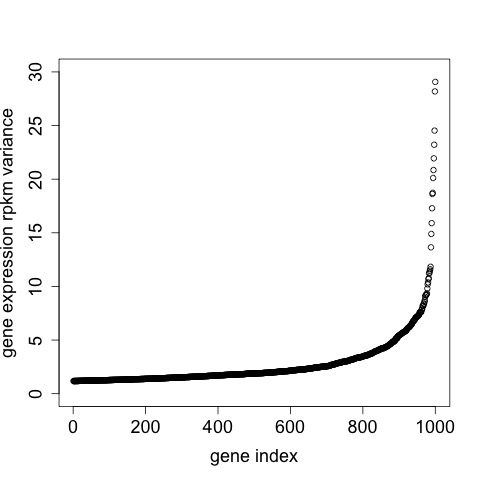}
   
    \label{fig:genevar_covid}
\end{subfigure}%
\begin{subfigure}{.5\textwidth}
  \centering
  \includegraphics[width=\textwidth]{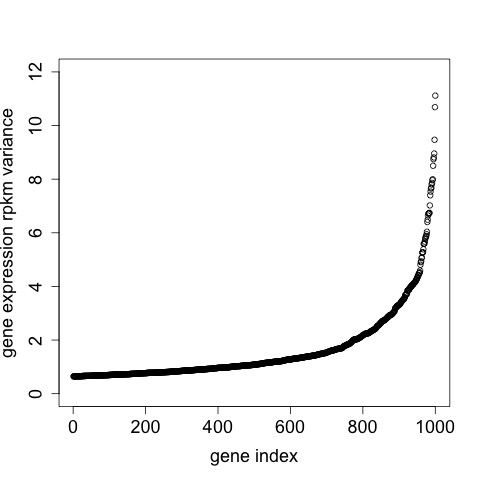}
    
    \label{fig:genevar_resp}
\end{subfigure}
\caption{Variance of gene expression measurements(log of rpkm values). We show the top ranked $1000$ genes: (LEFT) For Human Respiratory Viruses and (RIGHT) For Mouse Respiratory Viruses Dataset.}
\label{fig:genevar}
\end{figure*}
\section{MORE DETAILS ON SIMULATED DATA}
\label{sec:simulatedall}
 We  use simulation to evaluate \methodName for improving differential structure estimation by making use of extra knowledge. In the following subsections, we present details about the data generation, followed by the results under multiple settings.

\subsection{Simulation Dataset Generation}
\label{subsec:simgen}

We generate simulated datasets with a clear underlying differential structure between two conditions, using the following method: 
\paragraph{Data Generation for Edge Knowledge (KE):}~Given a known weight matrix $W_E$ (e.g., spatial distance matrix between $p$ brain regions), we set $W^d = inv.logit(-W_E)$. We use the assumption that higher the value of $W_{ij}$, lower the probability of that edge to occur in the true precision matrix. This is motivated by the role of spatial distance in brain connectivity networks: farther regions are less likely to be connected and vice-versa.  We select different levels in the matrix $W^d$, denoted by $s$, where if $W^d_{ij}>s_l$,  $\Delta^d_{ij}=0.5$, else $\Delta^d_{ij}=0$, where $\Delta^d \in {\R}^{p \times p}$. We denote by $s$ as the sparsity, i.e. the number of non-zero entries in $\Delta^d$. ${B}_I$ is a random graph with  each edge ${B}_{{I}_{ij}}=0.5$ with probability $p$. $\delta_c$ and $\delta_d$ are selected large enough to guarantee positive definiteness. 
\begin{equation}
\label{eref:e-sim}
    {\Omega}_d = \Delta^d + {B}_I + \delta_d I
    \end{equation}
    \begin{equation}
    {\Omega}_c = {B}_I + \delta_c I
\end{equation}
\begin{equation}
    {\Delta} = {\Omega}_d - {\Omega}_c
\end{equation}
There is a clear differential structure in ${\Delta} = {\Omega}_d - {\Omega}_c$, controlled by $\Delta^d$. To generate data from two conditions that follows the above differential structure, we generate two blocks of data samples following Gaussian distribution using $N(0,{\Omega}_c^{-1})$ and $N(0,{\Omega}_d^{-1})$. We only use these data samples to approximate the differential GGM to compare to the ground truth ${\Delta}$. 
 \paragraph{Data Generation for Vertex Knowledge (KG):} In this case, we simulate the case of extra knowledge of nodes in known groups.   
  Let the node group size,i.e., the number of nodes with a similar interaction pattern in the differential graph be $m$. We select the block diagonals of size $m$ as groups in $\Delta^g$. If two variables $i,j$ are in a group $g'$, in $\Delta^g_{ij}=0.5$, else $\Delta^g_{ij}=0$, where $\Delta^g \in \R^{p \times p}$.  We denote by $s_G$ as the number of groups in $\Delta^g$. ${B}_I$ is a random graph with  each edge ${B}_{{I}_{ij}}=0.5$ with probability $p$. %
    
  \begin{equation}
     \label{eref:ev-sim}
  {\Omega}_d =\Delta^g + {B}_I + \delta_d I 
    \end{equation}
    \begin{equation}
   {\Omega}_c ={B}_I + \delta_c I 
\end{equation}
\begin{equation}
    {\Delta} = {\Omega}_d - {\Omega}_c 
   \end{equation}

$\delta_c$ and $\delta_d$ are selected large enough to guarantee positive definiteness. We generate two blocks of data samples following Gaussian distribution using $N(0,{\Omega}_c^{-1})$ and $N(0,{\Omega}_d^{-1})$.
  \paragraph{Data Generation for both Edge and Vertex Knowledge (KEG):} In this case, we simulate the case of overlapping group and edge knowledge.   
  Let the node group size,i.e., the number of nodes with a similar interaction pattern in the differential graph be $m$. We select the block diagonals of size $m$ as groups in $\Delta^g$. If two variables $i,j$ are in a group $g'$, in $\Delta^g_{ij}=1/3$, else $\Delta^g_{ij}=0$, where $\Delta^g \in \R^{p \times p}$. %
 
   For the edge-level knowledge component, given a known weight matrix $W_E$, we set $W^d = inv.logit(-W_E)$. Higher the value of $W_{E_{ij}}$, lower the value of $W^d_{ij}$, hence lower the probability of that edge to occur in the true precision matrix. We select different levels in the matrix $W^d$, denoted by $s$, where if $W^d_{ij}>s_l$, we set $\Delta^d_{ij}=1/3$, else $\Delta^d_{ij}=0$. We denote by $s$ as the number of non-zero entries in $\Delta^d$. ${B}_I$ is a random graph with  each edge ${B}_{{I}_{ij}}=1/3$ with probability $p$. %
    
  \begin{equation}
     \label{eref:ev-sim}
  {\Omega}_d =\Delta^d + \Delta^g + {B}_I + \delta_d I 
    \end{equation}
    \begin{equation}
   {\Omega}_c ={B}_I + \delta_c I 
\end{equation}
\begin{equation}
    {\Delta} = {\Omega}_d - {\Omega}_c 
   \end{equation}

$\delta_c$ and $\delta_d$ are selected large enough to guarantee positive definiteness. Similar to the previous case, we generate two blocks of data samples following Gaussian distribution using $N(0,{\Omega}_c^{-1})$ and $N(0,{\Omega}_d^{-1})$. We only use these data samples to approximate the differential GGM to compare to the ground truth ${\Delta}$. 

We consider three different types of known edge knowledge $W_E$ generated from the spatial distance between different brain regions and simulate groups to represent related anatomical regions. These three are distinguished by different $p=\{116,160,246\}$ representing spatially related  brain regions.  We generate three types of datasets:Data-EG (having both edge and vertex knowledge), Data-G(with edge-level extra knowledge) and Data-V(with known node groups knowledge). We generate two blocks of data samples $\bX_c$ and $\bX_d$ following Gaussian distribution using $N(0,{\Omega}_c^{-1})$ and $N(0,{\Omega}_d^{-1})$. We use these data samples to estimate the differential GGM to compare to the ground truth ${\Delta}$.  We vary the sparsity of the true differential graph ($s$) and the number of control and case samples ($n_c$ and $n_d$ respectively) used to estimate the differential graph. For each case of $p$, we vary  $n_c$ and $n_d$ in $\{p/2,p/4,p,2p\}$ to account for both high dimensional and low dimensional cases. The sparsity of the underlying differential graph is controlled by $s=\{0.125,0.25,0.375,0.5\}$ and $s_G$ as explained above. This results in $126$ different datasets representing diverse settings: different number of dimensions $p$, number of samples $n_c$ and $n_d$, multiple levels of sparsity $s$ and number of groups $s_G$ of the differential graph for both KE and KEG data settings. 
Figure~\ref{fig:exptsetting} summarizes the different settings for simulation datasets.
\begin{figure*}[th]
    \centering
    \includegraphics[width=\textwidth]{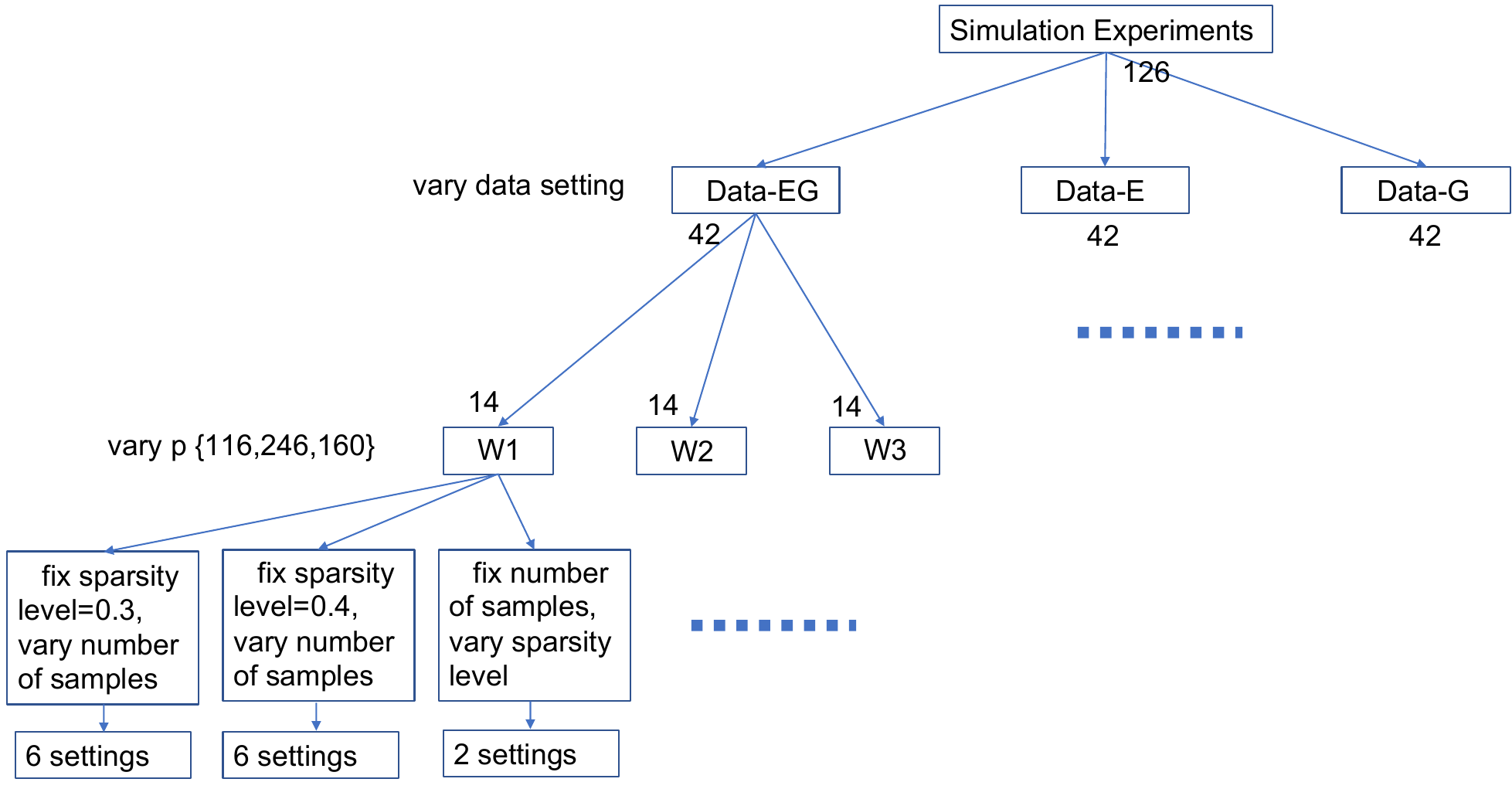}
    \caption{A schematic showing the different experimental settings for simulation experiments.}
    \label{fig:exptsetting}
\end{figure*}
{\bf Experiment Design}: We consider three different types of known edge knowledge $W_E$ generated from the spatial distance between different brain regions and simulate groups to represent related anatomical regions. These three are distinguished by different $p=\{116,160,246\}$ representing spatially related  brain regions.  We vary  $n_c$ and $n_d$ in $\{p/2,p/4,p,2p\}$ to account for both high dimensional and low dimensional cases. The sparsity of the underlying differential graph is controlled by $s=\{0.125,0.25,0.375,0.5\}$ and $s_G$. This results in $126$ different datasets representing diverse settings: multiple $p$, number of samples $n_c$ and $n_d$,  sparsity $s$ and number of groups $s_G$ of the differential graph for both KE and KEG data settings. 
\section{MORE ANALYSIS AND DETAILS ON RESULTS ON SIMULATED DATASETS:}

\subsection{Simulated Results: When we compare with Deep Neural Network based models(GNN)}

\label{subsec:gat}
\begin{table*}[h]
\centering
\begin{tabular}{|c|c|c|c|c|c|c|}\hline
W2  & level & samples & hidden & layers & GAT F1 Score   &  \methodNameEV F1 Score \\ \hline
246 & 4     & 61      & 64     & 1      & 0.0054 & 0.9384                \\\hline
246 & 4     & 123     & 64     & 3      & 0.0102  & 0.9397                 \\\hline
246 & 4     & 246     & 32     & 2      & 0.0095 & 0.9365                \\\hline
246 & 4     & 492     & 64     & 1      & 0.0205   & 0.9430                 \\\hline
246 & 5     & 61      & 5      & 3      & 0.0114  & 0.9225                 \\\hline
246 & 5     & 123     & 64     & 1      & 0.0136  & 0.9219                 \\\hline
246 & 5     & 246     & 32     & 3      & 0.0231   & 0.9248                 \\\hline
246 & 5     & 492     & 16     & 2      & 0.0740  & 0.9302           \\  \hline  
\end{tabular}
\caption{Comparison of \methodNameEV and GAT\cite{velivckovic2017graph} for differential graph recovery. }
\label{tab:gat}
\end{table*}

We compare with Graph Attention Networks\citep{velivckovic2017graph}. Although not designed for differential parameter learning, we explore the graphs learnt by the attention weights in relation to the true differential graph. We formulate it as a classification task, that is each distribution represents
a labeled class. In detail, for each sample, we predict the corresponding data block $\in \{c,d\}$.   We validate over number of layers $\in \{1,2,3,4,5\}$ and hidden size $\{5,16,32,64\}$ for $W2$, $p=246$ and varying samples in $\{61,123,246,492\}$ for train,validation and test sets in each setting. We use one attention head in this setting. We train the models using ADAM optimizer with learning rate $0.0005$ and train each model for $300$ epochs. We pick the model based on the epoch with best validation set classification performance. We use the  training set samples to select a threshold for binarizing the aggregated difference of attention weights across the samples from the two data blocks(classes). We report the F1-Score on the aggregated difference from the classes using attention weights from the test data samples. Table~\ref{tab:gat} shows the GAT performance and corresponding \methodNameEV performance for the different settings. 

We adapt a recently proposed deep learning based neighborhood selection method to estimate network structure\citep{ke2019learning}. We use the setting as proposed for a single task from \cite{Sekhon2020RelateAP}. We compare to the simulation case with samples $n_c = n_d \in \{p, 2p\}$ and $p=\{116,246,160\}$ with sparsity level $5$. We use the same datasets as used in the simulation experiments. We use an MLP layer of size $4\times p$. We show the results in Table~\ref{tab:dl_edge_acc}. We validate over sparsity regularization $\lambda_n\in \{1e-03,1e-04, 1e-05\}$. In such high dimensional cases, MLP based deep models are not able to learn the correct differential structures, as indicated by lower edge level F1 score. 

\begin{table}[]
    \centering
    \begin{tabular}{|c|c|c|c|}\hline
      Method  & W1 & W2  & W3 \\ \hline
        \methodName & 0.74/0.74 & 0.92/0.93 & 0.94/0.94\\ \hline 
         DL & 0.54/0.55  & 0.54/0.54 & 0.56/0.56 \\ \hline
    \end{tabular}
    \caption{Edge Recovery Accuracy of \methodName vs deep learning(DL) based neighborhood selection methods for $n_c=n_d \in \{p,2p\}$. }
    \label{tab:dl_edge_acc}
\end{table}

\subsection{Simulated Results: when our knowledge is partial}

\paragraph{Varying proportion of known edges:} We generate $W_E$ matrices with $p=150$ using Erdos Renyi Graph \citep{erdds1959random}. We use the generated  graph as prior edge knowledge $W_E$. Additionally, we  simulate $15$ groups of size $10$ as explained in ~\sref{subsec:simgen}. We simulate $\Omega_c$ and $\Omega_d$ as explained in ~\sref{subsec:simgen}. Figure~\ref{fig:proportion} presents the performance of \methodNameEV, \methodNameE and {\diffee} with varying proportion of known edges. 

\begin{figure*}[htp]
\centering
\includegraphics[width=.55\textwidth]{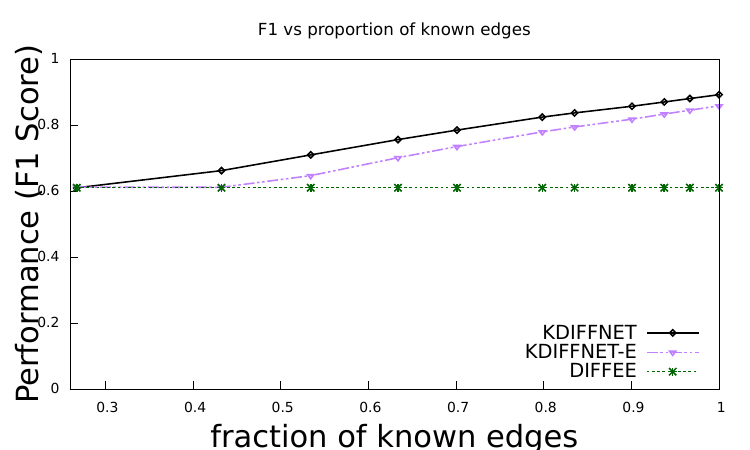}
\caption{F1-Score of \methodNameEV,\methodNameE  and {\diffee} with varying proportion of known edges.}
\label{fig:proportion}
\end{figure*}

\methodNameEV has a higher F1-score than \methodNameE as it can additionally incorporate known group information. As expected, with increase in the proportion of known edges, F1-Score improves for both \methodNameEV and \methodNameE. In contrast {\diffee} cannot make use of additional information and the F1-Score remains the same. 

\subsection{Simulated Results: When Tuning  hyperparameters and Varying $p$}

\paragraph{Scalability in $p$:}  To evaluate the scalability of \methodName and baselines to large $p$, we also generate larger $W_E$ matrices with $p=2000$ using Erdos Renyi Graph \cite{erdds1959random}, similar to the aforementioned design. Using the generated  graph as prior edge knowledge $W_E$, we design $\Omega_c$ and $\Omega_d$ as explained in ~\sref{subsec:simgen}. For the case of both edge and vertex knowledge, we fix the number of groups to $100$ of size $10$. We evaluate the scalability of \methodNameEV and baselines measured in terms of computation cost per $\lambda_n$.

\begin{figure*}[htp]

\centering
\includegraphics[width=.55\textwidth]{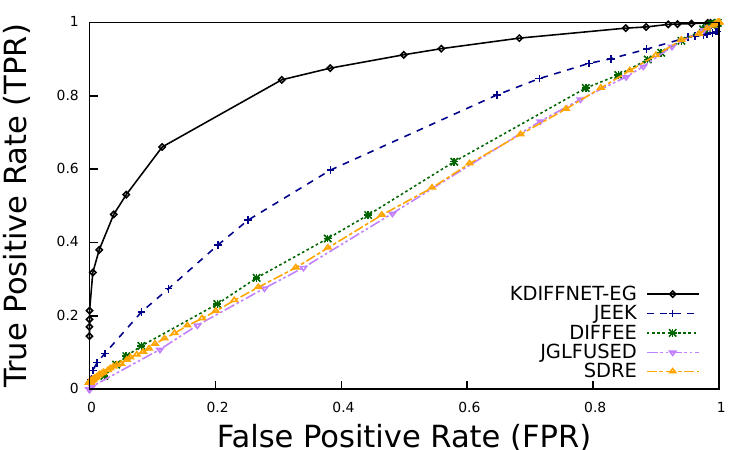}
\caption{ Area Under Curve (AUC) Curves for \methodName and baselines at different hyperparameter values $\lambda$.}
\label{fig:summary_tuning}
\end{figure*}

\begin{figure*}[htp]
\centering
\includegraphics[width=.55\textwidth]{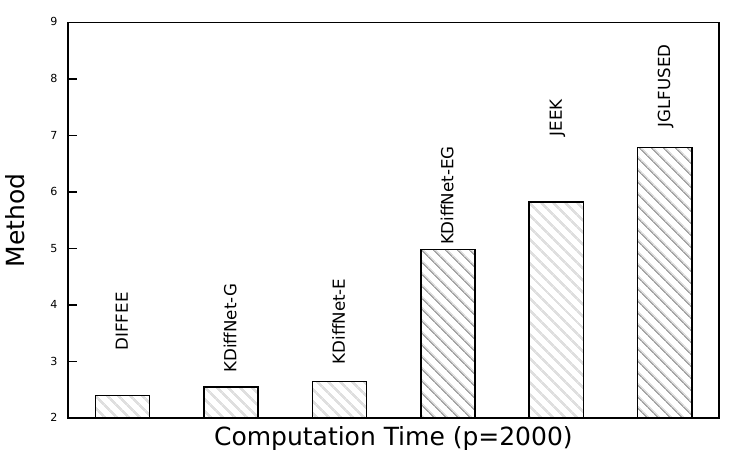}\hfill
\caption{Scalability of \methodName:  Computation Time (log milliseconds) per $\lambda_n$ for large $p=2000$: \methodNameEV has reasonable time cost with respect to baseline methods. \methodNameE and \methodNameV are fast due to closed form. \label{fig:p_time} \label{fig:summary_p}}
\end{figure*}

Figure~\ref{fig:summary_p} shows the computation time cost per $\lambda_n$ for all methods.  Clearly, \methodName takes the least time, for very large $p$ as well. 
\paragraph{Choice of $\lambda_n$:} For \methodName, we show the performance of all the methods as a function of choice of $\lambda_n$. Figure~\ref{fig:summary_tuning} shows the True Positive Rate(TPR) and False Positive Rate(FPR) measured by varying $\lambda_n$ for $p=116$, $s=0.5$ and $n_c=n_d=p/2$ under the Data-EG setting. Clearly, \methodNameEV achieves the highest Area under Curve (AUC) than all other baseline methods. \methodNameEV also outperforms JEEK and NAK that take into account edge knowledge but cannot model the known group knowledge. 
 \subsection{Simulated Results: When we have both edge and group knowledge:}

\paragraph{Edge and Vertex Knowledge (KEG):} We use \methodName (Algorithm~\ref{alg:pp}) to infer the differential structure in this case.

\begin{figure*}[htp]

\centering
\includegraphics[width=.75\textwidth]{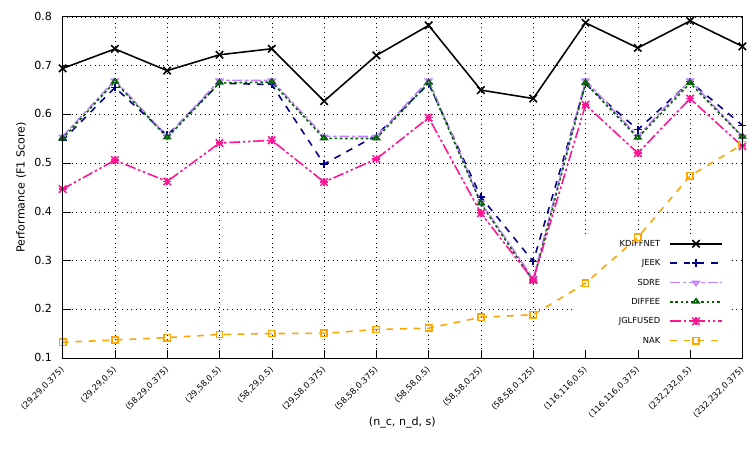}\hfill
\includegraphics[width=.75\textwidth]{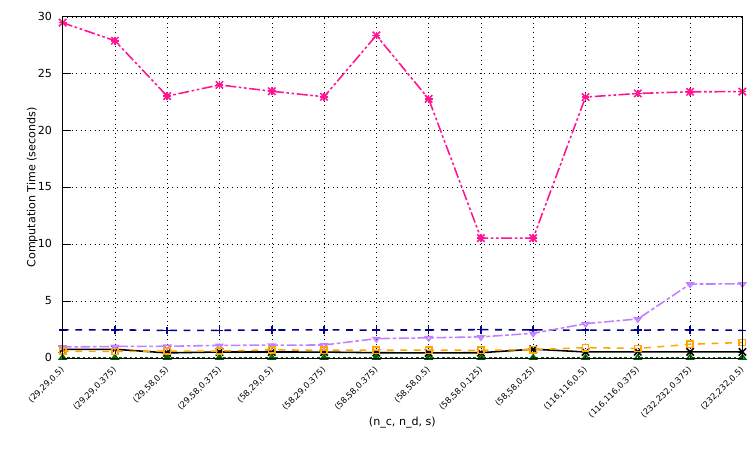}\hfill

\caption{\methodName Edge and Vertex Knowledge Simulation Results for $p=116$ for different settings of $n_c,n_d$ and $s$: (a) The test F1-score  and (b) The average computation time (measured in seconds) per $\lambda_n$ for \methodName and baseline methods.}
\label{fig:summary_o_w1}

\end{figure*}
Figure~\ref{fig:summary_o_w1}(a) shows the performance in terms of F1 Score of \methodName in comparison to the baselines for $p=116$, corresponding to 116 regions of the brain. \methodName outperforms the best baseline in each case by an average improvement of $~414\%$. \methodNameEV does better than JEEK and NAK that can model the edge information but cannot include group information. SDRE and {\diffee} are direct estimators but perform poorly indicating that adding additional knowledge aids differential network estimation.  JGLFUSED performs the worst on all cases.

Figure~\ref{fig:summary_o_w1}(b) shows the average computation cost per $\lambda_n$ of each method measured in seconds. In all settings, \methodName has lower computation cost than JEEK, SDRE and JGLFUSED  in different cases of varying $n_c$ and $n_d$, as well as with different sparsity of the differential network. \methodName is on average $~24\times$ faster than the best performing baseline. It is slower than {\diffee} owing to {\diffee}'s non-iterative closed form solution, however, {\diffee} does not have good prediction performance. Note that $B^*()$ in \methodName, JEEK and {\diffee} and  the kernel term in SDRE are precomputed only once prior to tuning across multiple $\lambda_n$.
\begin{figure*}[htp]

\centering
\includegraphics[width=.75\textwidth]{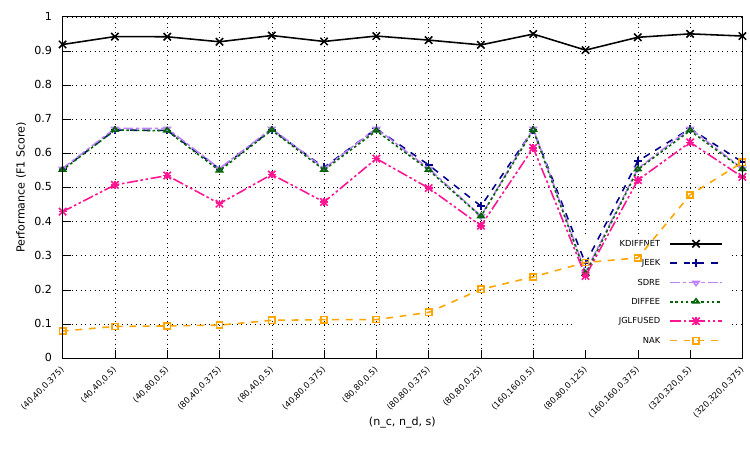}\hfill
\includegraphics[width=.75\textwidth]{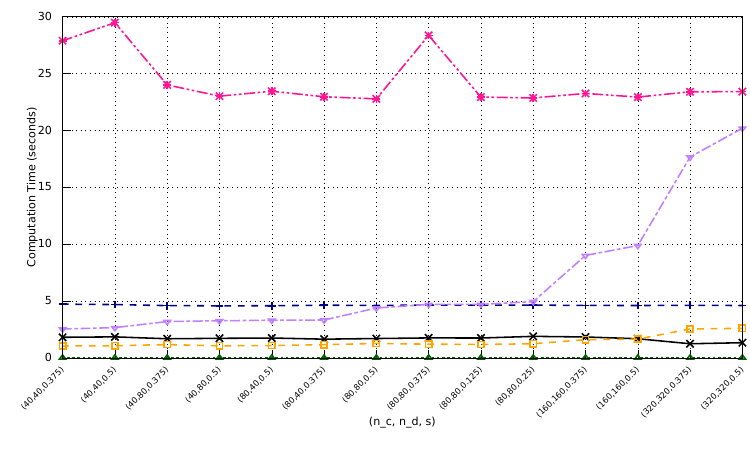}\hfill

\caption{\methodName Edge and Vertex Knowledge Simulation Results for $p=160$ for different settings of $n_c,n_d$ and $s$: (a) The test F1-score  and (b) The average computation time (measured in seconds) per $\lambda_n$ for \methodName and baseline methods.}
\label{fig:summary_o_w3}

\end{figure*}
In Figure~\ref{fig:summary_o_w3}(a), we plot the test F1-score for simulated datasets generated using $W$ with $p=160$, representing spatial distances between different $160$ regions of the brain.  This represents a larger and different set of spatial brain regions.  In $p=160$ case, \methodName outperforms the best baseline in each case by an average improvement of $~928\%$.  Including available additional knowledge is clearly useful as JEEK does relatively better than the other baselines. JGLFUSED performs the worst on all cases. 
Figure~\ref{fig:summary_o_w3}(b) shows the computation cost of each method measured in seconds for each case. %
\methodName is on average $~37\times$ faster than the best performing baseline.

\begin{figure*}[htp]

\centering
\includegraphics[width=.75\textwidth]{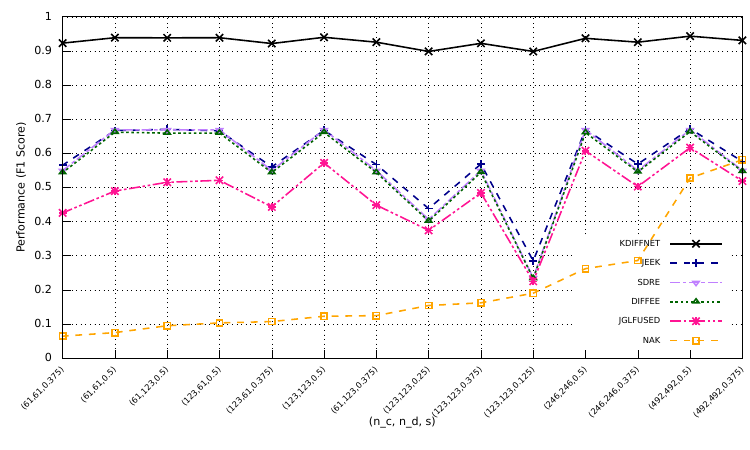}\hfill
\includegraphics[width=.75\textwidth]{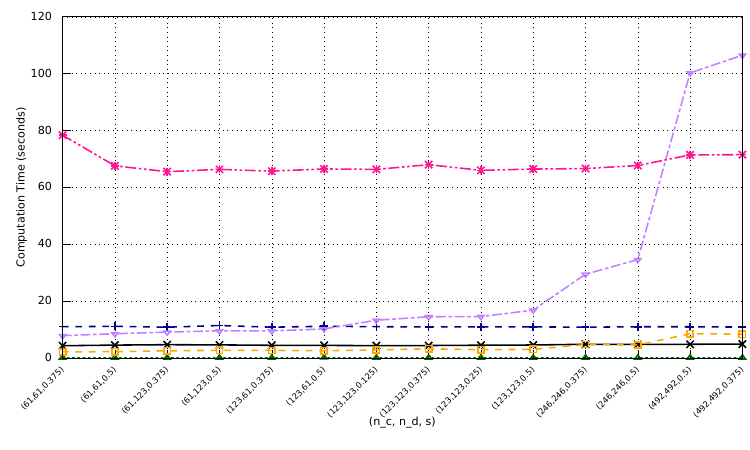}\hfill

\caption{\methodName Edge and Vertex Knowledge Simulation Results for $p=246$ for different settings of $n_c,n_d$ and $s$: (a) The test F1-score  and (b) The average computation time (measured in seconds) per $\lambda_n$ for \methodName and baseline methods}
\label{fig:summary_o_w2}

\end{figure*}
In Figure~\ref{fig:summary_o_w2}(a), we plot the test F1-score for simulated datasets generated using a larger $W_E$ with $p=246$, representing  spatial distances between different $246$ regions of the brain.  This represents a larger and different set of spatial brain regions.  In this case, \methodName outperforms the best baseline in each case by an average improvement of $1400\%$ relative to the best performing baseline.  In this case as well, including available additional knowledge is clearly useful as JEEK does relatively better than the other baselines, which do not incorporate available additional knowledge. JGLFUSED again performs the worst on all cases. 
Figure~\ref{fig:summary_o_w2}(b) shows the computation cost of each method measured in seconds for each case. In all cases, \methodName has the least computation cost in different settings of the data generation. \methodName is on average $~20\times$ faster than the best performing baseline. 

We cannot compare Diff-CLIME as it takes more than ~2 days to finish  $p=246$ case.

\subsection{Simulated Results: When we have only edge knowledge:}

\paragraph{Edge Knowledge (KE):} Given known $W_E$, we use \methodNameE to infer the differential structure in this case. 

\begin{figure*}[htp]

\centering
\includegraphics[width=.75\textwidth]{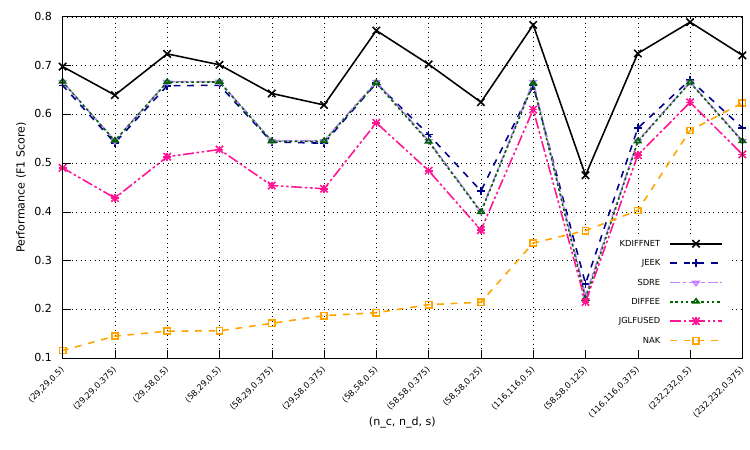}\hfill
\includegraphics[width=.75\textwidth]{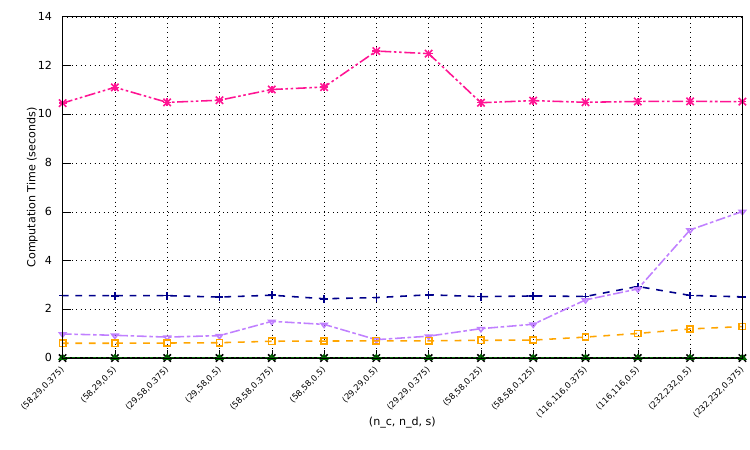}\hfill

\caption{\methodNameE Simulation Results for $p=116$ for different settings of $n_c,n_d$ and $s$: (a) The test F1-score  and (b) The average computation time (measured in seconds) per $\lambda_n$ for \methodNameE and baseline methods.}
\label{fig:summary_w1}

\end{figure*}
Figure~\ref{fig:summary_w1}(a) shows the performance in terms of F1-Score of \methodNameE in comparison to the baselines for $p=116$, corresponding to 116 spatial regions of the brain. In $p=116$ case, \methodNameE outperforms the best baseline in each case by an average improvement of $23\%$.  While JEEK, {\diffee} and SDRE perform similar to each other, JGLFUSED performs the worst on all cases.  

Figure~\ref{fig:summary_w1}(b) shows the computation cost of each method measured in seconds for each case. In all cases, \methodNameE has the least computation cost in different cases of varying $n_c$ and $n_d$, as well as with different sparsity of the differential network. For $p=116$, \methodNameE, owing to an entry wise parallelizable closed form solution, is on average $~2356\times$ faster than the best performing baseline.
\begin{figure*}[htp]

\centering
\includegraphics[width=.75\textwidth]{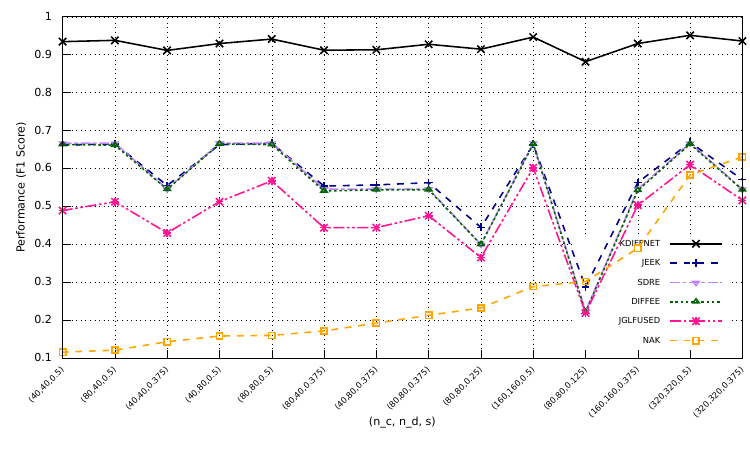}\hfill
\includegraphics[width=.75\textwidth]{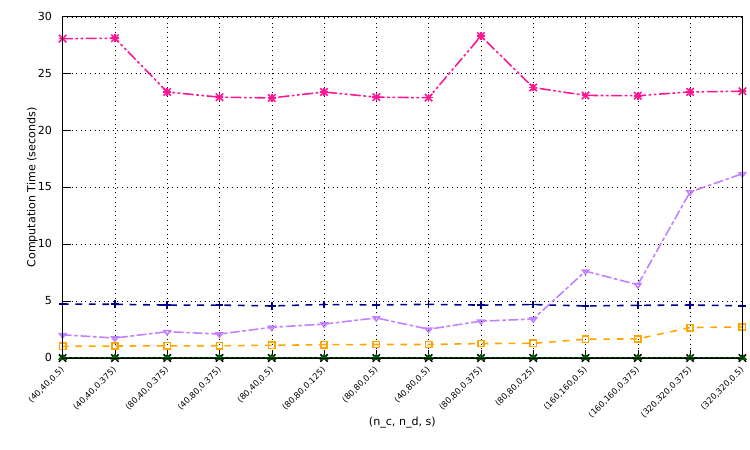}\hfill

\caption{\methodNameE Simulation Results for $p=160$ for different settings of $n_c,n_d$ and $s$: (a) The test F1-score  and (b) The average computation time (measured in seconds) per $\lambda_n$ for \methodNameE and baseline methods.}
\label{fig:summary_w3}

\end{figure*}
In Figure~\ref{fig:summary_w3}(a), we plot the test F1-score for simulated datasets generated using $W$ with $p=160$, representing  spatial distances between different $160$ regions of the brain.  This represents a larger and different set of spatial brain regions.  In $p=160$ case, \methodNameE outperforms the best baseline in each case by an average improvement of $67.5\%$.  Including available additional knowledge is clearly useful as JEEK does relatively better than the other baselines, which do not incorporate available additional knowledge. JGLFUSED performs the worst on all cases. 
Figure~\ref{fig:summary_w3}(b) shows the computation cost of each method measured in seconds for each case. In all cases, \methodNameE has the least computation cost in different cases of varying $n_c$ and $n_d$, as well as with different sparsity of the differential network. \methodNameE is on average $~3300\times$ faster than the best performing baseline.
\begin{figure*}[htp]

\centering
\includegraphics[width=.75\textwidth]{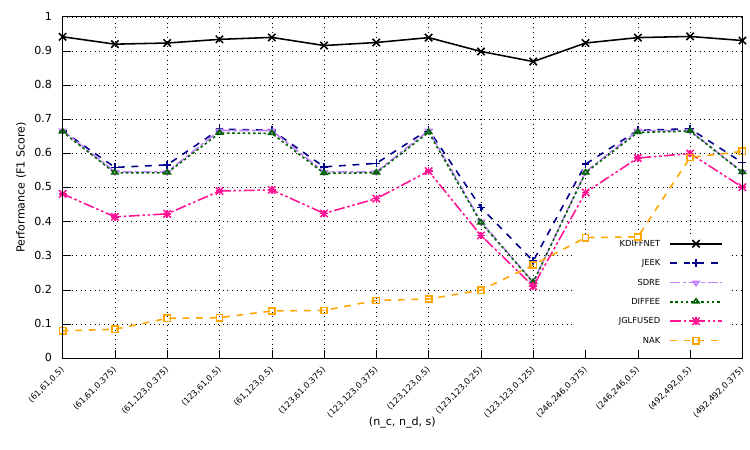}\hfill
\includegraphics[width=.75\textwidth]{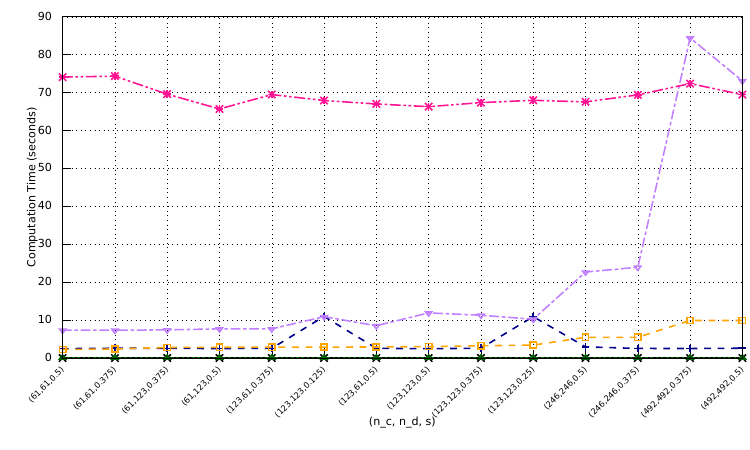}\hfill

\caption{\methodNameE Simulation Results for $p=246$ for different settings of $n_c,n_d$ and $s$: (a) The test F1-score  and (b) The average computation time (measured in seconds) per $\lambda_n$ for \methodNameE and baseline methods.}
\label{fig:summary_w2}

\end{figure*}

In Figure~\ref{fig:summary_w2}(a), we plot the test F1-score for simulated datasets generated using a larger $W$ with $p=246$, representing  spatial distances between different $246$ regions of the brain.  This represents a larger and different set of spatial brain regions.  In this case, \methodNameE outperforms the best baseline in each case by an average improvement of $66.4\%$ relative to the best performing baseline.  Including available additional knowledge is clearly useful as JEEK does relatively better than the other baselines, which do not incorporate available additional knowledge. JGLFUSED performs the worst on all cases. 
Figure~\ref{fig:summary_w2}(b) shows the computation cost of each method measured in seconds for each case. In all cases, \methodNameE has the least computation cost in different cases of varying $n_c$ and $n_d$, as well as with different sparsity of the differential network. \methodNameE is on average $~3966\times$ faster than the best performing baseline.

 \subsection{Simulated Results: When we have only group knowledge:}

\paragraph{Node Group Knowledge}: We use \methodNameV to estimate the differential network with the known groups as extra knowledge. We vary the number of groups $s_G$ and the number of samples $n_c$ and $n_d$ for each case of $p=\{116,160,246\}$. 
 Figure~\ref{fig:group1results} shows the F1-Score of \methodNameV and the baselines for $p=116$. \methodNameV clearly has a large advantage when extra node group knowledge is available. The baselines cannot model such available knowledge.  
 
 \begin{figure*}[htp]

\centering
\includegraphics[width=.75\textwidth]{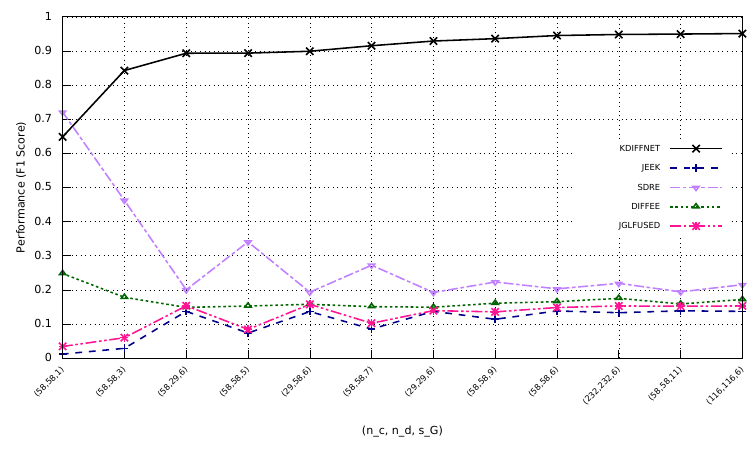}\hfill

\caption{\methodNameV Simulation Results for $p=246$ for different settings of $n_c,n_d$ and $s$: (a) The test F1-score  and (b) The average computation time (measured in seconds) per $\lambda_n$ for \methodNameE and baseline methods.}
\label{fig:group1results}

\end{figure*}
 \subsection{ Knowledge of Perturbed Hub
nodes}
\label{subsec:perturb_data}
We consider the case where there exists a set of nodes $k\in P$ such that the group of edges defined by $\Omega_{c_{k,j}}=0$ and $\Omega_{d_{k,j}}\neq 0$, where $\forall j \in \{1, \dots, p\}, k \in P$. Here, P denotes the set of perturbed nodes. 

To generate the simulation data, $\Delta{{k,j}}=1.0$ where $\forall j \in \{1, \dots, p\}, k \in P$. $ {\Omega}_d =\Delta + {B}_I + \delta_d I 
   $, ${\Omega}_c ={B}_I + \delta_c I $, finally, $ {\Delta} = {\Omega}_d - {\Omega}_c$. $\delta_c$ and $\delta_d$ are selected large enough to guarantee positive definiteness. We generate two blocks of data samples following Gaussian distribution using $N(0,{\Omega}_c^{-1})$ and $N(0,{\Omega}_d^{-1})$.

We report our results in Figure~\ref{fig:perturb}. We compare \methodNameV,   \methodNameE, JEEK, \diffee, and JGL-perturb. \methodNameV directly takes into account the perturbed groups, by imposing a group penalty on the relevant edges. For \methodNameE and JEEK, we set $W_{k,j}=0.1$.

\end{document}